\documentclass[a4paper,11pt]{llncs}

\usepackage{amsmath}
\usepackage{amssymb}  
\usepackage{latexsym} 
\usepackage{multirow}

\usepackage{tikz}
\usepackage{pgfplots}
\usetikzlibrary{trees}
\usetikzlibrary{topaths}
\usetikzlibrary{decorations.pathmorphing}
\usetikzlibrary{decorations.markings}
\usetikzlibrary{matrix,backgrounds,folding}
\usetikzlibrary{chains,scopes,positioning,fit}
\usetikzlibrary{arrows,shadows}
\usetikzlibrary{calc} 
\usetikzlibrary{chains}
\usetikzlibrary{shapes,shapes.geometric,shapes.misc}

\newcommand{{%
\beginpgfgraphicnamed{}
\InputIfFileExists{.tikz}{}{\input{./tikz/.tikz}}
\endpgfgraphicnamed}}[1]{{%
\beginpgfgraphicnamed{#1}
\InputIfFileExists{#1.tikz}{}{\input{./tikz/#1.tikz}}
\endpgfgraphicnamed}}

\newcommand{\InputIfFileExists{.tikz}{}{\input{./tikz/.tikz}}}[1]{\InputIfFileExists{#1.tikz}{}{\input{./tikz/#1.tikz}}}

\pgfdeclarelayer{edgelayer}
\pgfdeclarelayer{nodelayer}
\pgfsetlayers{background,edgelayer,nodelayer,main}

\tikzstyle{dot}=[circle,fill=black,draw=black]
\tikzstyle{every picture}=[baseline=(current bounding box).east,scale=0.5,node distance=5mm]
\tikzstyle{none}=[inner sep=0pt]
\tikzstyle{every loop}=[]

%% this is a work around for tikzit bug
\tikzstyle{(null)}=[]
\tikzstyle{plain}=[]

\usepackage{color} 

\newcommand{\semantics}[1]{[\![ #1 ]\!]} 
\newcommand{\ov}{\overrightarrow} 

\title{The Frobenius anatomy of word meanings II:\\ possessive relative pronouns}
\author{}  
\institute{} 
\date{}

\begin{document}
\maketitle 

{\small
\hspace{-2cm}\begin{tabular}{ccc}
 Mehrnoosh Sadrzadeh & Stephen Clark  & Bob Coecke     \\
 Queen Mary University of London & University of Cambridge &   University of Oxford     \\
 School of Electronic Eng. and Computer Science & \quad   Computer Laboratory &   Dept. of Computer Science     \\
{ \tt \small mehrs@eecs.qmul.ac.uk} &   {\tt \small stephen.clark@cl.cam.ac.uk}  & {\tt \small coecke@cs.ox.ac.uk}   
\end{tabular}
}

\begin{abstract}
 Within the categorical compositional distributional model of meaning, we provide semantic interpretations  for the subject and object roles of the possessive relative pronoun `whose'. This is done in terms of  Frobenius algebras over compact closed categories.  These  algebras  and their diagrammatic language expose  how meanings of words in relative clauses  interact with each other. We show how our interpretation is related to   Montague-style semantics and provide a  truth-theoretic interpretation. We also show how vector spaces provide a concrete interpretation and provide  preliminary corpus-based experimental evidence.  In a prequel to this paper, we used similar methods and dealt with the case of subject and object relative pronouns. 
 
\vspace{0.3cm}
 {\bf \small Dedicated to Roy Dyckhoff on the occasion of his 63rd birthday.}

 \end{abstract}

\section{Introduction}

Mathematical linguistics is a field of Computational Linguistics that formalises and reasons about properties of natural language. Through the seminal work of Lambek  \cite{Lambek0}, certain formal models of natural language  have found connections to algebraic and categorical  models of programming languages. In a nutshell, these more abstract models of both fields use  the algebraic structure of  residuated monoids. In the context of natural language,  grammatical types are elements of the monoid, their juxtapositions are modelled by the monoid multiplications and  grammatical reductions are modelled by  the partial orders of the algebra. Relational or functional words, such as verbs,  have implicative types and the  residuals  to the monoid multiplication are used to model them. Later,  Lambek simplified these algebras  and developed a structure he named a   \emph{pregroup}, wherein, there are no  binary residual operations, but each element of the algebra has a left and a right residual \cite{Lambek1}. Pregroups have been applied to  formalising and reasoning about grammatical structures of different families of natural language, for recent examples see \cite{CasadioLambek}. 

Vector spaces have been applied to formalise meanings of words in natural language, leading to models referred to as \emph{distributional} \cite{Schutze}.  In these models, the semantics of a word is a vector with coordinates  based on the frequency of the co-occurrence of that word in the context of other words \cite{Firth}. For instance, the word `queen'  often appears in the context of `reign'  and `rule', so its meaning can be guessed from the meanings of `reign' and `rule', whereas the word `carnivore'  has a different meaning since it appears in the context of, for instance,  `animal' and `meat'. This model has been successfully applied to natural language processing tasks, such as building automatic thesauri \cite{Curran}. 

Vector spaces with linear maps and pregroups  both have a compact closed categorical structure \cite{PrellerLambek,KellyLaplaza}. Based on this common structure, in previous work we provided an interpretation of pregroups in vector spaces  and  developed a vector space semantics for them \cite{Coeckeetal,Coeckeetal2}.  This semantics  extends the distributional models from words to sentences. It provides an abstract setting where  one  constructs meaning vectors  for strings of words compositionally,  based on  their  grammatical   structure and the distributional vectors of the words within them.   The theoretical predictions of a fragment of this setting  were implemented   and  evaluated on     language  tasks, such as disambiguation, phrase similarity, and term/definition classification  \cite{GrefenSadr1,GrefenSadr2,Kartetal2,Kartetal3}.

No matter how successful they have been in  certain natural language processing tasks, distributional models cannot be used to build vectors for  words whose meanings do not depend on the context. Among these words are logical words such as `and, or',articles, such as `a', `the',  and relative pronouns such as `who, that, whose'. Our focus in this and a prequel paper \cite{SadrClarkCoecke} has been  on relative pronouns.  For instance, the word `queen' can be described by the clause `a woman who rules a country', and  the word `clown' by the clause `a man who performs funny tricks'; making the relative pronoun `who'  appear in  very different contexts. Because of this context-independent property,  relative pronouns are often treated as `noise' by distributional models. As a result, they are not taken into account when  building vector representations for the clauses containing them. Even if they were (and as we will show in the last section of the paper) their vectors would be so dense that operating with them would be similar to operating with a vector consisting of 1's. Either of these solutions discards the  vital information that is encoded in the structural semantics of a relative pronoun; the information that tells us how different parts of the  clause are related to each other and which helps us derive the meaning of the clause and hence that of the word it is describing. 

In a prequel to this paper \cite{SadrClarkCoecke}, we developed a compositional distributional semantics for the subject and object relative pronouns `who, which, that, whom'.  Here we  provide a semantics  for the possessive relative pronoun `whose', in its object and subject roles.  This semantics does not depend on the context but rather on the structural roles of   the pronouns. We use  the general operations of a  Frobenius  algebra  over a vector space  \cite{CoeckeVic} to model these structural roles. The computations of the  algebra and vector space  are depicted using a   diagrammatic  language \cite{Joyal,Selinger} that  depicts  the   interactions that happen among the words of a clause and  produce the meaning of the overall compound. The diagrams  visualise the role of the relative pronoun in passing the information of the head of the clause to the rest of the clause, acting  on this information,  copying,  unifying,  and even discarding it.   Using these diagrams we show how possessive relative clauses can be decomposed to  clauses with subject and object relative pronouns that contain a possessive predicate such as `has'.  

Further, we instantiate our mathematical constructions in a truth theoretic setting  and show how the vector constructions also provide us with Montague-stye set theoretic semantics \cite{Montague}. We also instantiate our model on vector spaces built from a corpus of real data  and develop  linear algebraic  forms of the categorical morphisms. Finally, we provide a preliminary experiment  by developing a toy dataset of words and their relative-clause descriptions and show how the cosine  of the angle between the vectors of the words and the vectors of their descriptions can be used to  assign the correct description to the word.   

\section{Related Work}
As opposed to  simple models such as that of  \cite{Lapata},  a compositional distributional model that does not ignore relative pronouns is that of \cite{Baroni2}. In this line of work, the meaning vectors of phrases and sentences are computed based on their syntactic structure. Each word within a phrase or sentence has a grammatical type and based on this type a matrix or  tensor is assigned to it, representing its meaning; the composition operator is a matrix multiplication,  or its generalisation: tensor contraction. For instance, the meaning of an adjective noun phrase is  computed by multiplying the matrix of an adjective with the vector of a noun.  The meaning of a simple transitive sentence is computed by contracting the tensor of the verb with the vectors of object and subject. To some extent,  this approach is the same as ours: words that have function types live in tensor spaces and the composition operator is composition of the linear maps corresponding to these tensors. 

There are two  differences. Firstly, this model estimates the tensors of words by doing  regression on the co-occurrence vectors of their contexts. For instance, the matrix of `red' is estimated from the  adjective noun phrases such as `red car', `red carpet', `red wine', etc.  We, on the other hand,  do not bind our method to any concrete construction and the concrete constructions that we have been using are very different from the above. However, the linear-regression constructions can also be embedded in our setting and provide the same  results. There is  a second more important difference, which shows itself in the developments  of the present (and its prequel) paper on relative pronouns. The types that the models of \cite{Baroni,Baroni2} consider are purely syntactic and do  not contain semantic  information. The meaning of the  relative pronoun `which' is obtained by  taking the intersection of the meaning of its head with the  meaning of the rest of the clause. But in the absence of an intersection operator in vector spaces, the authors move to a simpler approach, where the meaning of `which' is a function that inputs a verb phrase, e.g. `eats meat' and outputs a modifies noun phrase, e.g. `which eats meat'. This bypass is not necessary in our model. As a result, we do not need to build many-dimensional tensors for the relative pronouns and the two defects of data sparsity and computational power, mentioned in \cite{Baroni2}, are automatically overcome.  These defects arise since, for example, the tensor of `which' will have four dimensions; even in a vector space model where the dimensions are reduced this will cause a problem. For instance in a vector space with 300 dimensions, the tensor of `whose' will have $300^4 = 81 \times 10^8$  (8.1. billion)  dimensions.  Compare this to  our setting, where we do not need to build any concrete tensor for relative pronouns: the Frobenius operations allow us to encode their syntactic and semantic roles using simple operations on the meanings of the other words of the clause. Finally, the approach of \cite{Baroni2} only treats the relative pronoun `which', here we also deal with the possessive pronoun `whose'.

\section{Compact Closed Categories, Diagrams,  Examples}
\label{sec:comp}

In order to ground the constructions that will provide us with meanings of relative pronouns, we need to discuss the theory of categories and in particular the definition and operations of compact closed categories. Theory of categories is a mathematical theory that abstracts away from the concrete details of structures and relates them to each other via the high-level properties that they hold. It was this theoretical tool that enabled us to relate the grammatical structures of sentences to vector semantics in a compositional  way. Frobenius algebras are operators that can be applied to certain objects within compact closed categories. These will provide our setting with  extra expressive powers and will help us  embed the features that are required to model relative pronouns. In a prequel to this paper \cite{SadrClarkCoecke}, we worked with purely formal definitions. In this paper, we explain these in an informal way. We also use a diagrammatic calculus to depict them; these should help the reader follow the explanations easier. The diagrams will also help us depict the computations necessary for providing meaning for the role of relative pronouns within linguistic compounds.

The theory of categories is the study of abstract mathematical structures referred to by \emph{categories}. Categories have \emph{objects} and \emph{morphisms}. Examples of objects are sets, elements of a set, groups or elements of a group. In the context of linguistics, they are usually taken to be grammatical types of words. Morphisms map objects to each other, they might denote a way of relating  objects to each other or transforming an object to another.   For instance, if the objects of a category are sets, the morphisms can be functions or relations between the sets. In the context of linguistics, they denote the grammatical reductions between the types. 

If we denote the objects of a category by $A, B, C, ...$, the morphisms will be  denoted by $f \colon A \to B$, $g \colon B \to C$. The morphisms can compose with each other, that is  whenever we have morphisms $f$ and $g$ defined as above, we also have a morphisms $h \colon A \to C$, and $h$ is a composition of $f$ and $g$, denoted by  $g\circ f$. There is also a special morphism called \emph{identity} that transforms an object to itself; it is denoted by $1_A \colon A \to A$. In the context of sets, this can be the morphism that maps the elements of a set to other elements of the same set.  Diagrammatically, the objects and the identity morphisms are depicted by lines. All other  morphisms are depicted by boxes.  For instance  a morphism $f \colon A \to B$ and an object $A$ and its  identity arrow  $1_A \colon A \to A$ are depicted as follows:

\vspace{-0.2cm}
\begin{center}
  {%
\beginpgfgraphicnamed{compact-diag}
\begin{tikzpicture}[scale=0.8]
	\begin{pgfonlayer}{nodelayer}
		\node [style=none] (0) at (-9, -1) {};
		\node [style=none] (1) at (-7, -1) {};
		\node [style=none] (2) at (-7, 1) {};
		\node [style=none] (3) at (-9, 1) {};
		\node [style=none] (4) at (-8, 0) {$f$};
		\node [style=none] (5) at (-8, 1) {};
		\node [style=none] (6) at (-8, 2) {};
		\node [style=none] (7) at (-8, -1) {};
		\node [style=none] (8) at (-8, -2) {};
		\node [style=none] (9) at (-8, 2.5) {$A$};
		\node [style=none] (10) at (-8, -2.5) {$B$};
		\node [style=none] (11) at (-2.5, 2) {};
		\node [style=none] (12) at (-2.5, -2) {};
		\node [style=none] (13) at (-1.75, 0) {$A$};
	\end{pgfonlayer}
	\begin{pgfonlayer}{edgelayer}
		\draw [style=thick] (3.center) to (0.center);
		\draw [style=thick] (3.center) to (2.center);
		\draw [style=thick] (2.center) to (1.center);
		\draw [style=thick] (1.center) to (0.center);
		\draw [style=thick] (6.center) to (5.center);
		\draw [style=thick] (7.center) to (8.center);
		\draw [style=thick] (11.center) to (12.center);
	\end{pgfonlayer}
\end{tikzpicture}}
\endpgfgraphicnamed}
\end{center}

One can define operations on the objects. For example, if objects are sets, one can take their intersection, union, or Cartesian product. In the context of linguistics, one needs to juxtapose the grammatical types to obtain the  type of a juxtaposition of words. The abstract form of the juxtaposition operation is called a  monoidal tensor and is denoted by $A \otimes B$. So if $A$ is the grammatical type of the word $w_1$, e.g. `red'  and $B$ is the grammatical type of the word $w_2$, e.g. `car', then $A \otimes B$ is the type of the string of words $w_1 w_2$, that is `red car'. Such operations usually have a unit, for the case of union, the unit is the empty set, denoted by $\emptyset$, since we have $A \cup \emptyset = A$. In the case of juxtaposition, the unit is the empty type, denoted by $1$, since we have that the juxtaposition of a type with an empty type is the original type, that is $A \otimes I = A$.  A category that has a monoidal tensor with a unit (and which satisfies certain other equations, which we will not present here), is called a \emph{monoidal closed category}.  

Diagrammatically, the tensor products of the objects and morphisms are depicted by juxtaposing their diagrams side by side, whereas compositions of morphisms are depicted by putting one on top of the other, for instance the object $A \otimes B$, and the morphisms $f \otimes g$ and $f \circ h$, for $f \colon A \to B, g \colon C \to D$, and $h \colon B \to C$ are depicted as follows:

\begin{center}
  {%
\beginpgfgraphicnamed{compact-diag-tensor}
\begin{tikzpicture}[scale=0.8]
	\begin{pgfonlayer}{nodelayer}
		\node [style=none] (0) at (-4, 0) {};
		\node [style=none] (1) at (-2, 0) {};
		\node [style=none] (2) at (-2, 2) {};
		\node [style=none] (3) at (-4, 2) {};
		\node [style=none] (4) at (-3, 1) {$f$};
		\node [style=none] (5) at (-3, 2) {};
		\node [style=none] (6) at (-3, 3) {};
		\node [style=none] (7) at (-3, 0) {};
		\node [style=none] (8) at (-3, -1) {};
		\node [style=none] (9) at (-3, 3.5) {$A$};
		\node [style=none] (10) at (-3, -1.5) {$B$};
		\node [style=none] (11) at (-0.5, -1.5) {$D$};
		\node [style=none] (12) at (-0.5, 1) {$g$};
		\node [style=none] (13) at (-0.5, 2) {};
		\node [style=none] (14) at (-1.5, 0) {};
		\node [style=none] (15) at (-0.5, -1) {};
		\node [style=none] (16) at (0.5, 0) {};
		\node [style=none] (17) at (-0.5, 3.5) {$C$};
		\node [style=none] (18) at (-0.5, 3) {};
		\node [style=none] (19) at (0.5, 2) {};
		\node [style=none] (20) at (-1.5, 2) {};
		\node [style=none] (21) at (-0.5, 0) {};
		\node [style=none] (22) at (5, 2.5) {};
		\node [style=none] (23) at (5, 1.5) {};
		\node [style=none] (24) at (4, 4.5) {};
		\node [style=none] (25) at (5, 4.5) {};
		\node [style=none] (26) at (5, 5.5) {};
		\node [style=none] (27) at (5, 3.5) {$f$};
		\node [style=none] (28) at (6, 2.5) {};
		\node [style=none] (29) at (5, 6) {$A$};
		\node [style=none] (30) at (4, 2.5) {};
		\node [style=none] (31) at (5, 1) {$B$};
		\node [style=none] (32) at (6, 4.5) {};
		\node [style=none] (33) at (5, -2.5) {};
		\node [style=none] (34) at (5, -3.5) {};
		\node [style=none] (35) at (4, -0.5) {};
		\node [style=none] (36) at (5, -0.5) {};
		\node [style=none] (37) at (5, 0.5) {};
		\node [style=none] (38) at (5, -1.5) {$h$};
		\node [style=none] (39) at (6, -2.5) {};
		\node [style=none] (40) at (4, -2.5) {};
		\node [style=none] (41) at (5, -4) {$C$};
		\node [style=none] (42) at (6, -0.5) {};
		\node [style=none] (43) at (-10, -1) {};
		\node [style=none] (44) at (-10, 3) {};
		\node [style=none] (45) at (-10.75, 1) {$A$};
		\node [style=none] (46) at (-8, -1) {};
		\node [style=none] (47) at (-8, 3) {};
		\node [style=none] (48) at (-7.25, 1) {$B$};
	\end{pgfonlayer}
	\begin{pgfonlayer}{edgelayer}
		\draw  [style=thick] (3.center) to (0.center);
		\draw  [style=thick] (3.center) to (2.center);
		\draw  [style=thick] (2.center) to (1.center);
		\draw  [style=thick] (1.center) to (0.center);
		\draw  [style=thick] (6.center) to (5.center);
		\draw  [style=thick] (7.center) to (8.center);
		\draw  [style=thick] (20.center) to (14.center);
		\draw  [style=thick] (20.center) to (19.center);
		\draw  [style=thick] (19.center) to (16.center);
		\draw  [style=thick] (16.center) to (14.center);
		\draw  [style=thick] (18.center) to (13.center);
		\draw  [style=thick] (21.center) to (15.center);
		\draw  [style=thick] (24.center) to (30.center);
		\draw  [style=thick] (24.center) to (32.center);
		\draw  [style=thick] (32.center) to (28.center);
		\draw  [style=thick] (28.center) to (30.center);
		\draw  [style=thick] (26.center) to (25.center);
		\draw  [style=thick] (22.center) to (23.center);
		\draw  [style=thick] (35.center) to (40.center);
		\draw  [style=thick] (35.center) to (42.center);
		\draw  [style=thick] (42.center) to (39.center);
		\draw  [style=thick] (39.center) to (40.center);
		\draw  [style=thick] (37.center) to (36.center);
		\draw  [style=thick] (33.center) to (34.center);
		\draw [style=thick] (44.center) to (43.center);
		\draw [style=thick] (47.center) to (46.center);
	\end{pgfonlayer}
\end{tikzpicture}}
\endpgfgraphicnamed}
\end{center}

A category is called \emph{compact closed}, whenever  it is monoidal closed and moreover, its objects can cancel each other out and generate each other. Cancelation means that there is way of transforming the tensor of certain objects to the unit of the tensor. Generation means that there is a way of transforming the unit to the tensor of certain objects. To make this property formal, we assign to each  object $A$, an  object denoted by $A^r$,  referred to by  the right adjoint of $A$,  and  another object denoted by $A^l$,  referred to by the left adjoint of $A$.  The morphism that transforms $A \otimes A^r$ to $I$ is referred to by $\epsilon_A^r$ and the morphism that transforms $A^l \otimes A$ to $I$ is referred to by $\epsilon^l_A$. These are denoted as follows:
\[
\text{\bf Cancelations}: \qquad A \otimes A^r   \stackrel{\epsilon_A^r} {\longrightarrow} I
\qquad
A^l \otimes A \stackrel{\epsilon_A^l}{\longrightarrow} I
\]

\noindent
The morphism that transforms   $I$ to $A^r \otimes A$ is referred to by $\eta_A^r$ and the morphism that transforms $I$ to $A^l \otimes A$ is referred to by $\eta_A^l$. These are denoted as follows:
\[
\text{\bf Generations}: \qquad  I  \stackrel{\eta_A^r}{\longrightarrow} A^r \otimes A
\qquad
 I \stackrel{\eta_A^l}{\longrightarrow} A \otimes A^l
\]

\noindent
The $\epsilon$ maps are depicted by cups, and the $\eta$ maps by caps.  For instance, the diagrams for  $\epsilon^l \colon A^l \otimes A \to I$ and $\eta \colon I \to A\otimes A^l$ are  as follows:
\begin{center}
  {%
\beginpgfgraphicnamed{compact-cap-cup}
\begin{tikzpicture}[scale=0.8]
	\begin{pgfonlayer}{nodelayer}
		\node [style=none] (0) at (-5, 0) {};
		\node [style=none] (1) at (-2, 0) {};
		\node [style=none] (2) at (-5, 0.75) {$A^l$};
		\node [style=none] (3) at (2, -0.75) {$A$};
		\node [style=none] (4) at (5, -0.75) {$A^l$};
		\node [style=none] (5) at (2, 0) {};
		\node [style=none] (6) at (5, 0) {};
		\node [style=none] (7) at (-2, 0.75) {$A$};
	\end{pgfonlayer}
	\begin{pgfonlayer}{edgelayer}
		\draw [thick, bend right=90, looseness=1.50] (0.center) to (1.center);
		\draw [thick, bend left=90, looseness=1.75] (5.center) to (6.center);
	\end{pgfonlayer}
\end{tikzpicture}}
\endpgfgraphicnamed}
\end{center}

The  morphisms of a compact closed category satisfy  certain formal equations. The most important one consists of four equations referred to by \emph{yanking}. One of the instances of this property is  $(\epsilon^l \otimes 1_A) \circ (1_A \otimes \eta^l)  = 1_A$,   depicted below:  

\begin{center}
    {%
\beginpgfgraphicnamed{compact-yank}
\begin{tikzpicture}[scale=0.8]
	\begin{pgfonlayer}{nodelayer}
		\node [style=none] (0) at (-5, 0) {};
		\node [style=none] (1) at (-2, 0) {};
		\node [style=none] (2) at (-5, 0.5) {$A^l$};
		\node [style=none] (3) at (-2, 0.5) {$A$};
		\node [style=none] (4) at (1, 0.5) {$A^l$};
		\node [style=none] (5) at (-2, 1) {};
		\node [style=none] (6) at (1, 1) {};
		\node [style=none] (7) at (3, 0) {$=$};
		\node [style=none] (8) at (5, 2.5) {};
		\node [style=none] (9) at (5, -1.5) {};
		\node [style=none] (10) at (5.5, 0.5) {$A$};
		\node [style=none] (11) at (-5, 1) {};
		\node [style=none] (12) at (-5, 2.5) {};
		\node [style=none] (13) at (1, 0) {};
		\node [style=none] (14) at (1, -1.5) {};
	\end{pgfonlayer}
	\begin{pgfonlayer}{edgelayer}
		\draw [thick, bend right=90, looseness=1.50] (0.center) to (1.center);
		\draw [thick, bend left=90, looseness=1.75] (5.center) to (6.center);
		\draw [style=thick] (8.center) to (9.center);
		\draw [style=thick] (12.center) to (11.center);
		\draw [style=thick] (13.center) to (14.center);
	\end{pgfonlayer}
\end{tikzpicture}}
\endpgfgraphicnamed}
\end{center}

\noindent 
Roughly speaking, yanking expresses the fact that a cancelation followed by a generation (or the other way around) is the same as doing nothing, that is, the same as having  an identity morphism. 

\subsection{A compact closed category for grammar}

Theory of categories was first applied to the analysis of grammatical structure of language in \cite{Lambek0}. The categories presented and studied there were monoidal closed.  The first application of compact closed categories to grammar was in \cite{Lambek}. The argument behind the passage from monoidal to compact was mainly simplicity. Monoidal closed categories have two other operations (other than the tensor) and all of  these three operations are needed when analysing grammar. In compact closed categories, the applications of these two other operations are modelled by the cancelation and generation morphisms.  

The compact closed category of grammar is called a \emph{pregroup}.  The   objects of this category are grammatical types. The morphisms of this category are grammatical reductions. The objects are denoted by $p,q, r, \cdots$, and the morphisms are denoted by partial orders such as $p \leq q$. The partial order morphisms compose with each other as follows: whenever we have $p \leq q$ and $q \leq r$, we also have that $p \leq r$. This is because  partial ordering is a transitive relation. The tensor product of the category denotes the juxtaposition of types. So $p \otimes q$ denotes the juxtaposition of type $p$ with type $q$. This can be the type of a two word phrase `$w_1 w_2$', where $w_1$ has type $p$ and $w_2$ has type $q$. 

Since the category is compact, each type $p$ has a right adjoint $p^r$ and a left adjoint $p^l$. This means that we have the following cancelation and generation morphisms:

\begin{align*}
\text{\bf Cancelation} \qquad&  \epsilon_p^r \colon \ p \otimes p^r \leq 1 \hspace{2cm} \epsilon_p^l \colon \  p^l \otimes p \leq 1\\
\text{\bf Generation} \qquad&  \eta_p^r \colon \  1 \leq p^r \otimes p \hspace{2cm}  \eta_p^l \colon \ 1 \leq p \otimes p^l
\end{align*}

The linguistic motivation behind the adjoint types is as follows.  The vocabulary of a language consists of two kinds of words: the ones that are atomic such as nouns and the ones that are relational such as adjectives and verbs. The relational types input the atomic types as their arguments and then modify them. The relational types  have adjoints in them and the adjoint types represent their arguments. So when such types are juxtaposed with their arguments (in the right order), they will cancel each other out and produce a new modified type. The left and right labels of adjunction denote the order with which a relational type needs to be juxtaposed with its arguments.  This order depends on the grammar and differs from language to language. For instance, in English the adjective `red' occurs before the noun `car'.  So the adjective has type $n\otimes n^l$, for $n$ the type of a noun. This means that it needs an argument of type $n$ and it has to be to the left of its argument. The grammatical reduction of an adjective-noun phrase `red car' is denoted by the  following morphism:

\begin{center}
\begin{tabular}{cccc}
red & & car &\\
$(n\otimes n^l)$ & $ \otimes$ & $n$ & $ \leq n \otimes 1 = n$
\end{tabular}
\end{center}

\noindent
Here we are transforming the  $n$ in the type of the adjective to itself and we are cancelling the $n^r$ with $n$ in the type of the noun. So the above grammatical reduction corresponds to the morphism $1_n \otimes \epsilon^l_n$.  As another example consider a simple intransitive sentence such as `men sneeze'. Here the noun `men' has type $n$ and the intransitive verb `sneeze' has type $n^r \otimes s$. This type means that, according to the grammar of English, an intransitive verb inputs a noun (for simplicity we assign the same type $n$ to nouns and noun phrases) as its subject  and has to be the right of that noun. After inputing the subject and modifying it, the verb will produce a sentence, denoted by the  type $s$. This grammatical reduction is denoted by the  morphism $\epsilon^r_n \otimes 1_s$ obtained as follows:

\begin{center}
\begin{tabular}{cccc}
men & & sneeze &\\
$n$ & $ \otimes$ & $(n^r \otimes s)$ & $ \leq 1 \otimes s = s$
\end{tabular}
\end{center}

Finally, consider a simple transitive sentence such as `men like cats'. Here, the transitive verb `like' needs two arguments of type $n$ and has to be to the right of one and the left of the other. After inputting these nouns, it will modify them and produce a sentence, thus, it has type $n^r \otimes s \otimes n^l$. The morphisms corresponding to the grammatical reduction of a transitive sentence is $\epsilon^r_n \otimes 1_s \otimes \epsilon^l_n$ and is obtained as follows:

\begin{center}
\begin{tabular}{cccccc}
men & & like & & cats\\
$n$ & $ \otimes$ & $(n^r \otimes s \otimes n^l)$ &  $\otimes$ & $n^l$& $ \leq 1 \otimes s \otimes 1 = s$
\end{tabular}
\end{center}

\subsection{A compact closed category for  meaning}

Distributional models of meaning represent meanings of words by vectors. These vectors live in a finite dimensional vector space with a fixed set of basis vectors. Such vector spaces also form a compact closed category. In this category,  vector spaces $V, W, ...$ are  objects and   linear mappings between them $f \colon V \to W$ are morphisms. The  composition of morphisms is   the composition of linear maps. The  tensor product is the tensor product of vector  spaces $V \otimes W$, whose  unit is the scalar field of the vector spaces; in our case this is the field of reals $\mathbb{R}$.   The left and right adjoints are the same, that is we have   $V^l \cong V^r \cong V^*$, where $V^*$ is  the dual  space  of  $V$. Since the basis vectors of  these vector spaces are fixed, we have an isomorphism between  $V^*$ and  $V$, that is $V^* \cong V$.

Diagrammatically speaking, vector spaces are lines and vectors within them are triangles.   Each triangle has a number of strings emanating  from it. This number denotes the tensor rank of the  vector, for instance,  $\ov{v} \in V, \ov{v'} \in V \otimes W$, and $\ov{v''} \in V \otimes W \otimes Z$ are depicted as follows:
\begin{center}
  {%
\beginpgfgraphicnamed{compact-diag-triangle}
\begin{tikzpicture}[scale=0.8]
	\begin{pgfonlayer}{nodelayer}
		\node [style=none] (0) at (-5, 0) {};
		\node [style=none] (1) at (-1, 0) {};
		\node [style=none] (2) at (-3, 1.25) {};
		\node [style=none] (3) at (-3.75, 0) {};
		\node [style=none] (4) at (-3.75, -1) {};
		\node [style=none] (5) at (-3.75, -1.5) {$V$};
		\node [style=none] (6) at (-2.25, 0) {};
		\node [style=none] (7) at (-2.25, -1.5) {$W$};
		\node [style=none] (8) at (-2.25, -1) {};
		\node [style=none] (9) at (2.75, -1) {};
		\node [style=none] (10) at (2.75, -1.5) {$W$};
		\node [style=none] (11) at (2.5, 1.5) {};
		\node [style=none] (12) at (1.5, 0) {};
		\node [style=none] (13) at (5.25, 0) {};
		\node [style=none] (14) at (1.5, -1.5) {$V$};
		\node [style=none] (15) at (1.5, -1) {};
		\node [style=none] (16) at (0.25, 0) {};
		\node [style=none] (17) at (2.75, 0) {};
		\node [style=none] (18) at (4, -1.5) {$Z$};
		\node [style=none] (19) at (4, 0) {};
		\node [style=none] (20) at (4, -1) {};
		\node [style=none] (21) at (-8.5, 0) {};
		\node [style=none] (22) at (-7.5, -1.5) {$V$};
		\node [style=none] (23) at (-7.5, 0) {};
		\node [style=none] (24) at (-6.5, 0) {};
		\node [style=none] (25) at (-7.5, -1) {};
		\node [style=none] (26) at (-7.5, 1.25) {};
		\node [style=none] (29) at (2.5, 0.5) {};
	\end{pgfonlayer}
	\begin{pgfonlayer}{edgelayer}
		\draw  [style = thick](0.center) to (2.center);
		\draw  [style = thick](2.center) to (1.center);
		\draw  [style = thick](1.center) to (0.center);
		\draw  [style = thick](3.center) to (4.center);
		\draw  [style = thick](6.center) to (8.center);
		\draw  [style = thick](16.center) to (11.center);
		\draw  [style = thick](11.center) to (13.center);
		\draw  [style = thick](13.center) to (16.center);
		\draw  [style = thick](12.center) to (15.center);
		\draw  [style = thick](17.center) to (9.center);
		\draw  [style = thick](19.center) to (20.center);
		\draw  [style = thick](21.center) to (26.center);
		\draw  [style = thick](26.center) to (24.center);
		\draw  [style = thick](24.center) to (21.center);
		\draw  [style = thick](23.center) to (25.center);
	\end{pgfonlayer}
\end{tikzpicture}}
\endpgfgraphicnamed}
\end{center}

Given a basis $\{\ov{r}_i\}_i$ for a vector space $V$, the two cancelation maps become isomorphic to one, since we have $V^* \otimes V \cong  V \otimes V^* \cong V \otimes V$. This   map is as follows: 
\[
\epsilon_V\colon \    V \otimes V \to \mathbb{R}
\]
Concretely, the application of this map to  vectors $\ov{v}, \ov{w}$  from $V$ is taking their inner product, which provides us with a number in $\mathbb{R}$, defined  as follows:
\[
\epsilon_V(\ov{v} \otimes \ov{w}) =  \langle \ov{v} \mid \ov{w} \rangle
\]
Representing vectors by their linear expansions, that is $\ov{v} = \sum_i c_i \ov{r}_i, \ov{w} = \sum_j c_j \ov{r}_j$,  the above becomes equivalent to the following:
\[
c_{ij} \sum_{ij} \langle \ov{r}_i \mid \ov{w}_j \rangle
\]

For the same reason as above, the two  generation maps also become isomorphic to the following one:

\[
\eta\colon \ \mathbb{R} \to V \otimes V
\]
Concretely, the application of this map to a real number $k \in \mathbb{R}$  produces a vector in $V \otimes V$ defined as follows 
\[
\eta(k) =  k \sum_i \ov{r}_i \otimes \ov{r}_i
\]
For an example, take $V$ to be a two dimensional space with the basis $\{\ov{r}_1, \ov{r}_2\}$. An example of the generation map in this space is as follows:
\[
\eta(k) = k (\ov{r}_1 \otimes \ov{r}_1 \ + \ \ov{r}_2 \otimes \ov{r}_2)
\]
Because the basis vectors of $V$ are fixed, the above can be equivalently written in the form of a  $2 \times 2$ matrix as follows:
\[ \eta(k) \ = \ 
\left(
\begin{array}{cc}
k \quad & 0 \\ 0 \quad & k 
\end{array}
\right)
\]

\section{Frobenius Algebras, Diagrams, and Examples}
\label{sec:frob}

Again we refer the reader for a formal definition and references  to the prequel paper \cite{SadrClarkCoecke}. Informally speaking, some of the objects of a compact closed category might have special properties, referred to by \emph{copying} and \emph{uncopying}.  Note that these maps are only defined on certain objects of the category and the family of these objects need not coincide with the family of all the objects of the category. Although the general definition of  Frobenius algebras is over any compact closed category, for the purpose of this paper, we will consider the case of vector spaces.

The \emph{copying} property is an expression of the fact that  there is a linear way of transforming a certain  vector space $V$ to the vector space $V \otimes V$.  The linear map corresponding to this transformation is denoted as follows:
\[
\Delta \colon   V \to V \otimes V
\]
Concretely, it acts on a vector $\ov{v} = \sum_i c_i \ov{r}_i$ of $V$ as follows:
\[
\Delta (\ov{v}) =  \sum_i c_i \ov{r}_i \otimes \ov{r}_i
\]
So for a two dimensional space with basis vectors $\{\ov{r}_1, \ov{r}_2\}$, we have:
\[
\Delta(\ov{v}) =  \Delta (c_1 \ov{r}_1 + c_2 \ov{r}_2)  =  c_1 (\ov{r}_1 \otimes \ov{r}_1) \ + \ c_2 (\ov{r}_2 \otimes \ov{r}_2)
\]
Using the matrix notation, the above can be equivalently written as:
\[
\Delta \left(\begin{array}{c} c_1   \\ c_2\end{array} \right)  = \left ( \begin{array}{cc} c_1 & \quad 0 \\ 0 & \quad c_2 \end{array} \right)
\]
The copying map has a unit  $\iota$ of the type $V \to \mathbb{R}$, which transforms a certain  vector space to the scalar field. Concretely, it sends a vector  to the sum of its co-ordinates. So in general for $\ov{v} \in V$, we have:
\[
\iota(\ov{v}) = \iota(\sum_i c_i \ov{r}_i) = \sum_i c_i
\]
An example consider $\iota(\ov{v}) = \iota( c_1 \ov{r}_1 + c_2 \ov{r}_2) = c_1 + c_2$. 

In the context of linguistics one can think of copying as a way of being able to dispatch the information of a vector space to two vector spaces (and by analogy also for the vectors within these vector spaces). For instance, it might be needed to input the information expressed in a noun to a relative pronoun and to the verb of the main clause. In this case, and as we will see in more detail later on, we will copy the vector of the noun and pass a copy to the relative pronoun and another copy to the verb.  In other words, we dispatch the information of the noun to the relative pronoun and to the verb.  The linguistic application  of the unit $\iota$ is that sometimes one needs to discard the information of a word or part of a word, in which case $\iota$ is applied to the vector of that word. An example is again the case of relative clauses, where the relative pronoun inputs the type of sentence $s$ from the verb but has to discard it as the output of a relative clause is a noun, rather than a sentence.

The  \emph{uncopying} map expresses the fact that there is a linear way of transforming the tensor product of a certain vector space $V$ with itself to $V$. That is we have the following linear map:
\[
\mu \colon V \otimes V \to V
\]
Concretely, the application of $\mu$ on vectors in $V$ is defined as follows:
\[
\mu (\ov{v} \otimes \ov{w}) = \mu(\sum_i c_i \ov{r}_i \otimes \sum_j c_j \ov{r}_j) =  \sum_i  c_i   c_j  \delta_{ij}  \ov{r}_j
\]
The notation $\delta_{ij} \ov{r}_i$ is defined as follows:
\[
\delta_{ij} \ov{r}_j = \begin{cases} \ov{r}_i & i = j \\ 0 & i \neq j \end{cases}
\]
As an example we have 
\[
\mu (\ov{v} \otimes \ov{w}) = \mu ((c_1 \ov{r}_1 + c_2 \ov{r}_2)\otimes (c_3 \ov{r}_3  + c_4 \ov{r}_4)) = c_1 c_3 \ov{r}_1 + c_2 c_4 \ov{r}_2
\]
Using the matrix notation, the above can be written as follows
\[
\mu\left(\left(\begin{array}{c} c_1 \\ c_2 \end{array}\right) \otimes \left ( \begin{array}{c} c_3 \\ c_4 \end{array} \right) \right) = 
\mu \left( \begin{array}{cc}  c_1c_3 & \quad c_1c_4 \\ c_2c_3 & \quad c_2c_4  \end{array}\right) = \left( \begin{array}{c} c_1c_3 \\ c_2c_4\end{array} \right)
\]
The above is equal to the point wise multiplication of the two input vectors, that is $\ov{v} \odot \ov{w}$.  The uncopying map has a unit $\zeta \colon \mathbb{R} \to V$, which   transforms a real number $k$ to a vector whose all co-ordinates are $k$. That is, we have
\[
\zeta(k) = \sum_i k \ov{r}_i
\]
An example  consider $\zeta(k) = k\ov{r}_1  + k \ov{r}_2$.

The diagrammatic forms of the Froebnius morphisms are as follows:

\begin{center}
{%
\beginpgfgraphicnamed{comp-alg-coalg}
\begin{tikzpicture}[scale=0.8]
	\begin{pgfonlayer}{nodelayer}
		\node [style=none] (0) at (-3.5, 2.25) {};
		\node [draw, thick, style=none, minimum size=0.2 cm, circle, fill=white] (1) at (-4.5, 1.25) {};
		\node [style=none] (2) at (-5.5, 2.25) {};
		\node [style=none] (3) at (-4.5, 0.5) {};
		\node [style=none] (4) at (4.25, 1.5) {};
		\node [style=none] (5) at (6.25, 1.5) {};
		\node [style=none] (6) at (4.25, 1.5) {};
		\node [draw, thick, style=none, minimum size=0.2 cm, circle, fill=white] (7) at (5.25, 2.5) {};
		\node [style=none] (8) at (5.25, 3.25) {};
		\node [style=none] (9) at (6.25, 1.5) {};
		\node [style=none] (10) at (-7.5, 2) {$(\mu,\zeta)$};
		\node [draw, thick, style=none, minimum size=0.2 cm, circle, fill=white] (11) at (-2.25, 2.25) {};
		\node [style=none] (12) at (-2.25, 0.5) {};
		\node [style=none] (13) at (2.5, 2) {$(\Delta,\iota)$};
		\node [draw, thick, style=none, minimum size=0.2 cm, circle, fill=white] (14) at (7.5, 1.5) {};
		\node [style=none] (15) at (7.5, 3.25) {};
	\end{pgfonlayer}
	\begin{pgfonlayer}{edgelayer}
		\draw [style=thick] (1.center) to (3.center);
		\draw [style=thick, bend left=90, looseness=1.75] (6.center) to (5.center);
		\draw [style=thick] (8.center) to (7.center);
		\draw [thick, bend right=90, looseness=1.75] (2.center) to (0.center);
		\draw [style=thick] (11.center) to (12.center);
		\draw [style=thick] (15.center) to (14.center);
	\end{pgfonlayer}
\end{tikzpicture}}
\endpgfgraphicnamed}
\end{center}

In the context of  linguistics, the uncopying map can be thought of as a way of merging or combing information. In this case, the information of two vectors from a vector space $V$ can be merged into the information of one vector in $V$. For instance, after a relative pronoun has made a copy of the vector of a word, and these copies are dispatched to various parts of the clause, there is a  need to put the modified information together, in other words, merge them, to obtain one single vector as the output of the relative clause. We will see  examples of this feature in the proceeding sections. The $\zeta$ map is used to create a vector for a word that has been dropped from a phrase. For instance the relative pronoun `that' is usually dropped, as is the case in the clause `dogs I saw yesterday'. The original form of this phrase is `dogs that I saw yesterday'. In such cases, the $\zeta$ map enables us to generate a vector for the dropped pronoun. 

The above linear maps satisfy a number of equations (e.g. commutativity and specialty), the major of which is the  \emph{Frobenius} condition with the following formal form:

\vspace{-0.4cm}
\begin{align*}
(\mu \otimes 1_V) \circ (1_V \otimes \Delta) \ = \  \Delta \circ \mu  \ = \  (1_V \otimes \mu) \circ (\Delta \otimes 1_V) 
\end{align*}

\noindent
The diagrammatic form this property is as follows:

\begin{center}
{%
\beginpgfgraphicnamed{equation}
\begin{tikzpicture}
	\begin{pgfonlayer}{nodelayer}
		\node [style=none] (0) at (-1.25, 1.75) {$=$};
		\node [style=none] (1) at (0.75, 1.75) {$=$};
		\node [style=none] (2) at (0.25, 3) {};
		\node [style=none] (3) at (-0.75, 3) {};
		\node [draw, thick, style=none, minimum size=0.2 cm, circle, fill=white] (4) at (-0.25, 2.25) {};
		\node [style=none] (5) at (-0.75, 0) {};
		\node [style=none] (6) at (0.25, 0) {};
		\node [draw, thick, style=none, minimum size=0.2 cm, circle, fill=white] (7) at (-0.25, 0.75) {};
		\node [style=none] (8) at (-2.5, 3) {};
		\node [draw, thick, style=none, minimum size=0.2 cm, circle, fill=white] (9) at (-2.5, 2.25) {};
		\node [style=none] (10) at (-3, 1.5) {};
		\node [style=none] (11) at (-2, 1.5) {};
		\node [draw, thick, style=none, minimum size=0.2 cm, circle, fill=white] (12) at (-3.5, 0.75) {};
		\node [style=none] (13) at (-4, 1.5) {};
		\node [style=none] (14) at (-2, 0) {};
		\node [style=none] (15) at (-4, 3) {};
		\node [style=none] (16) at (-3.5, 0) {};
		\node [style=none] (17) at (1.5, 0) {};
		\node [style=none] (18) at (1.5, 0.75) {};
		\node [style=none] (19) at (2.5, 1.5) {};
		\node [style=none] (20) at (3.5, 1.5) {};
		\node [style=none] (21) at (1.5, 1.5) {};
		\node [draw, thick, style=none, minimum size=0.2 cm, circle, fill=white] (22) at (2, 2.25) {};
		\node [draw, thick, style=none, minimum size=0.2 cm, circle, fill=white] (23) at (3, 0.75) {};
		\node [style=none] (24) at (3, 0) {};
		\node [style=none] (25) at (2, 3) {};
		\node [style=none] (26) at (3.5, 3) {};
	\end{pgfonlayer}
	\begin{pgfonlayer}{edgelayer}
		\draw [thick, bend left=90, looseness=2.50] (2.center) to (3.center);
		\draw [style=thick, bend left=90, looseness=2.50] (5.center) to (6.center);
		\draw [style=thick] (4.center) to (7.center);
		\draw [style=thick] (8.center) to (9.center);
		\draw [style=thick, bend left=90, looseness=2.50] (10.center) to (11.center);
		\draw [style=thick, bend left=270, looseness=2.50] (13.center) to (10.center);
		\draw [style=thick] (11.center) to (14.center);
		\draw [style=thick] (15.center) to (13.center);
		\draw [style=thick] (12.center) to (16.center);
		\draw [style=thick] (17.center) to (18.center);
		\draw [style=thick, bend left=90, looseness=2.50] (21.center) to (19.center);
		\draw [style=thick, bend left=270, looseness=2.50] (19.center) to (20.center);
		\draw [style=thick] (23.center) to (24.center);
		\draw [style=thick] (25.center) to (22.center);
		\draw [style=thick] (21.center) to (18.center);
		\draw [style=thick] (26.center) to (20.center);
	\end{pgfonlayer}
\end{tikzpicture}}
\endpgfgraphicnamed}
\end{center}

Informally speaking, this  property says that if one has the tensor of a   vector space with itself, that is $V \otimes V$, then one can keep the first space and copy the second space followed by then uncopying the first $V$ with the first output of copying and keeping the second output  (or do these operations in the opposite order, that is copy the first $V$ and then uncopy the second output with $V$), the result is the same as  an uncopy followed by a copy. This property allows us to copy and uncopy in an alternate way  and many times, and at the end  being able to merge all the operations into a  big uncopying   and  copying, which performs all of the previous operations at once. Diagrammatically, we have   the following:

\[
{%
\beginpgfgraphicnamed{spider-normal-form}
\begin{tikzpicture}[scale=0.8]
	\begin{pgfonlayer}{nodelayer}
		\node [style=none] (0) at (-4.75, 3.25) {};
		\node [style=none] (1) at (-2.75, 3.25) {};
		\node [draw, thick, style=none, minimum size=0.2 cm, circle, fill=white] (2) at (-3.75, 2.25) {};
		\node [style=none] (3) at (-1.75, 2.25) {};
		\node [style=none] (4) at (-1.75, 3.25) {};
		\node [draw, thick, style=none, minimum size=0.2 cm, circle, fill=white] (5) at (-2.75, 1.25) {};
		\node [style=none] (6) at (-1.75, -0.25) {};
		\node [style=none] (7) at (-0.5, 3.25) {};
		\node [style=none] (8) at (0.5, -0.75) {$=$};
		\node [style=none] (9) at (2.25, 1) {};
		\node [style=none] (10) at (5.75, 1) {};
		\node [draw, thick, style=none, minimum size=0.2 cm, circle, fill=white] (11) at (4, -0.75) {};
		\node [style=none] (12) at (3, 1) {};
		\node [style=none] (13) at (4.5, 1) {$\cdots$};
		\node [style=none] (14) at (2.25, -2.5) {};
		\node [style=none] (15) at (5.75, -2.5) {};
		\node [style=none] (16) at (4.75, -2.5) {$\cdots$};
		\node [style=none] (17) at (-5.25, -4.5) {};
		\node [style=none] (18) at (-3.25, -4.5) {};
		\node [draw, thick, style=none, minimum size=0.2 cm, circle, fill=white] (19) at (-4.25, -3.5) {};
		\node [style=none] (20) at (-3.25, -2.5) {};
		\node [style=none] (21) at (-1, -4.5) {};
		\node [style=none] (22) at (-1.75, -1) {};
		\node [style=none] (23) at (-2.25, -3.5) {};
		\node [style=none] (24) at (-2.25, -4.5) {};
		\node [draw, thick, style=none, minimum size=0.2 cm, circle, fill=white] (25) at (-3.25, -2.5) {};
		\node [draw, thick, style=none, minimum size=0.2 cm, circle, fill=white] (26) at (4, -0.75) {};
		\node [style=none] (27) at (3, -2.5) {};
		\node [draw, thick, style=none, minimum size=0.2 cm, circle, fill=white] (28) at (-1.75, -0.25) {};
		\node [draw, thick, style=none, minimum size=0.2 cm, circle, fill=white] (29) at (-1.75, -1) {};
		\node [style=none] (30) at (-2.75, -1.75) {$\cdot$};
		\node [style=none] (31) at (-2.5, -1.5) {$\cdot$};
		\node [style=none] (32) at (-2.25, -1.25) {$\cdot$};
		\node [style=none] (33) at (-2.5, 0.75) {$\cdot$};
		\node [style=none] (34) at (-2.25, 0.5) {$\cdot$};
		\node [style=none] (35) at (-2, 0.25) {$\cdot$};
	\end{pgfonlayer}
	\begin{pgfonlayer}{edgelayer}
		\draw [style=thick, bend right=90, looseness=1.75] (0.center) to (1.center);
		\draw [style=thick, bend right=90, looseness=1.75] (2.center) to (3.center);
		\draw [style=thick] (4.center) to (3.center);
		\draw [style=thick, bend left=270, looseness=1.75] (9.center) to (10.center);
		\draw [style=thick] (12.center) to (11.center);
		\draw [style=thick, bend left=90, looseness=1.50] (14.center) to (15.center);
		\draw [style=thick, in=90, out=-15, looseness=0.75] (22.center) to (21.center);
		\draw [style=thick, in=30, out=-90] (7.center) to (6.center);
		\draw [thick, bend left=90, looseness=1.75] (17.center) to (18.center);
		\draw [thick, bend left=90, looseness=1.75] (19.center) to (23.center);
		\draw [style=thick](23.center) to (24.center);
		\draw [style=thick](27.center) to (26.center);
		\draw [style=thick](6.center) to (22.center);
	\end{pgfonlayer}
\end{tikzpicture}}
\endpgfgraphicnamed}
\]
This property is sometimes referred to by the \emph{spider} property. In the context of linguistics, this property expresses the fact that if we have two words (within the same vector space) and we dispatch the information of  the second word and then merge the first output with the first word, this will result in the same pair of words obtained from the operation of first merging the information of the original two words, then dispatching them into two other words.

\section{From Grammar to Meaning}

Since both pregroup grammars and vector spaces are compact closed,   there exists a structure-preserving map between the two $F \colon Preg \to FVect$. This map and its mathematical properties were developed and discussed in previous work \cite{Coeckeetal,Kartetal1,Coeckeetal2}. 
Here, we review its main properties, which include  assigning to each atomic grammatical type a vector space as follows:
\[
F(1) =  \mathbb{R}\qquad F(n)= N \qquad F(s) = S
\]
Naturally, the unit of juxtaposition is mapped to the unit of tensor product in vector spaces. We map the atomic type $n$ to a vector space $N$ and the atomic type $s$ to a vector space $S$. These vector spaces can be built in different ways. In the  sections that follow, we present two  instantiations for them. 

A juxtaposition of types is mapped to the tensor product of the vector spaces assigned to each type, that is we have:

\[
F(p\otimes q) = F(p) \otimes F(q)
\]
The two left and right adoints of each grammatical type are sent to the dual of the vector spaces assigned to the original type, that is we have: 
\[
F(p^l) = F(p^r) = F(p)^* \cong F(p)
\]

As an example,  consider the grammatical type of a transitive verb, that is $n^r \otimes s \otimes n^l$, then its vector space assignment  is computed as follows:

\[
F(n^r \otimes s \otimes n^l) = F(n^r) \otimes F(s) \otimes F(n^l) = 
 F(n) \otimes F(s) \otimes F(n) =  N\otimes S \otimes N
\]
 This assignment  means that the meaning vector of a
transitive verb is a vector in the tensor space $N \otimes S \otimes N$.

The grammatical reductions, i.e. the partial order morphisms of a pregroup,
are mapped to  linear maps. That is, a partial  $p \leq q$ in a pregroup,  is mapped to a   linear map $f_{\leq} \colon F(p) \to F(q)$ in vector spaces.  The cancelation $\epsilon$ and generation
$\eta$ maps of a pregroup are assigned to  cancelation and generation maps of a vector space.

As an example, recall the grammatical  reduction of a transitive  sentence. This reduction is interpreted as follows in vector spaces:

\[
F(\epsilon_n^r\otimes 1_s \otimes \epsilon_n^r) = F(\epsilon_n^r) \otimes F(1_s) \otimes
F(\epsilon_n^l) = 
F(\epsilon_n) \otimes F(1_s) \otimes
F(\epsilon_n)  =  \epsilon_N \otimes 1_S \otimes \epsilon_N
\]

To obtain a vector meaning for a string of words,  we apply the vector space interpretation of the grammatical reduction of the string to the vectors of the words within the string.  For instance, the vector
meaning of  the sentence `men like cats'  is as follows:
\[F(\epsilon_n^r \otimes 1_s
\otimes \epsilon_n^l)(\ov{\mbox{\em men}} \otimes \ov{\mbox{\em like}} \otimes
\ov{\mbox{\em cats}})\]
 This meaning is depictable as follows:

 \begin{center}
  {%
\beginpgfgraphicnamed{vect-sem-eg}
\begin{tikzpicture}[scale=0.8]
	\begin{pgfonlayer}{nodelayer}
		\node [style=none] (0) at (-3.5, 1) {};
		\node [style=none] (1) at (-2, 1) {};
		\node [style=none] (2) at (-2.75, 2) {};
		\node [style=none] (3) at (-1, 1) {};
		\node [style=none] (4) at (2, 1) {};
		\node [style=none] (5) at (0.5, 2.25) {};
		\node [style=none] (6) at (4.5, 1) {};
		\node [style=none] (7) at (3.75, 2) {};
		\node [style=none] (8) at (3, 1) {};
		\node [style=none] (9) at (-2.75, 1) {};
		\node [style=none] (10) at (-0.25, 1) {};
		\node [style=none] (11) at (0.5, 1) {};
		\node [style=none] (12) at (1.25, 1) {};
		\node [style=none] (13) at (3.75, 1) {};
		\node [style=none] (14) at (-2.75, 0.5) {};
		\node [style=none] (15) at (-0.25, 0.5) {};
		\node [style=none] (16) at (0.5, 0.5) {};
		\node [style=none] (17) at (1.25, 0.5) {};
		\node [style=none] (18) at (3.75, 0.5) {};
		\node [style=none] (19) at (-2.75, 0) {$N$};
		\node [style=none] (20) at (-0.25, 0) {$N$};
		\node [style=none] (21) at (0.5, 0) {$S$};
		\node [style=none] (22) at (1.25, 0) {$N$};
		\node [style=none] (23) at (3.75, 0) {$N$};
		\node [style=none] (24) at (-2.75, -0.5) {};
		\node [style=none] (25) at (-0.25, -0.5) {};
		\node [style=none] (26) at (0.5, -0.5) {};
		\node [style=none] (27) at (1.25, -0.5) {};
		\node [style=none] (28) at (3.75, -0.5) {};
		\node [style=none] (29) at (0.5, -1.75) {};
		\node [style=none] (30) at (-2.75, 2.75) {men};
		\node [style=none] (31) at (0.5, 2.75) {like};
		\node [style=none] (32) at (3.75, 2.75) {cats};
	\end{pgfonlayer}
	\begin{pgfonlayer}{edgelayer}
		\draw [style = thick] (0.center) to (1.center);
		\draw [style = thick] (1.center) to (2.center);
		\draw [style = thick] (2.center) to (0.center);
		\draw [style = thick]  (3.center) to (4.center);
		\draw [style = thick] (8.center) to (6.center);
		\draw [style = thick] (6.center) to (7.center);
		\draw [style = thick] (4.center) to (5.center);
		\draw [style = thick] (3.center) to (5.center);
		\draw [style = thick] (8.center) to (7.center);
		\draw [style = thick] (9.center) to (14.center);
		\draw [style = thick] (10.center) to (15.center);
		\draw [style = thick] (11.center) to (16.center);
		\draw [style = thick] (12.center) to (17.center);
		\draw [style = thick] (13.center) to (18.center);
		\draw [thick, bend right=90, looseness=1.25] (24.center) to (25.center);
		\draw [thick, bend right=90, looseness=1.25] (27.center) to (28.center);
		\draw [style = thick] (26.center) to (29.center);
	\end{pgfonlayer}
\end{tikzpicture}}
\endpgfgraphicnamed}
\end{center}

\noindent 
As you can see, depending on their types, the
distributional meanings of the words are either atomic vectors or
linear maps. For instance, the distributional meaning of `men' is a
vector in $N$, whereas the distributional meaning of `love' is in the
space $N \otimes S \otimes N$, which is equivalent to a linear map from 
$N \otimes N$ to $S$.

\section{Modelling  Possessive Relative Pronouns}  
\label{poss}

In this section,  we use the categorical constructions introduced in the previous sections and present a model for the possessive relative pronoun in its subject and object roles. We start with the pregroup representations of the grammatical types of these pronouns and show how a relative clause containing them reduces.  We then develop  a categorical semantics for these using the cancelation and generation maps of compact closed categories and the morphisms of Frobenius algebras. In a nutshell, the cancelation map $\epsilon$ models the  application of the semantics of one word to
another; the generation map $\eta$, passes information among the words by bridging the 
intermediate words; and the Frobenius operations dispatch and combine  the noun vectors and discard the sentence vectors. The end product of this model is a compositional vector representation for the meanings of possessive relative clauses.

The possessive relative pronoun  `whose' can occur in a subject or  object position, hence producing subject and object possessive clauses. The general forms of these clauses are as follows:
\begin{quote}
{\bf Poss Subj:} \quad Possessor whose Subject Verb Object\\
{\bf Poss Obj:}\quad \ \ Possessor whose Object Subject Verb
\end{quote}
An example  for each of the above cases is  as follows:
\begin{quote}
{\bf Poss Subj:} \quad  `author whose book entertained John'\\
{\bf Poss Obj:} \quad  \ \ `author whose book John read'
\end{quote}

Informally speaking, the head of such a clause  is a possessor who owns an item, which is either the subject or the object of the rest of the clause.  For instance,  in the subject example above,  the possessor  head noun `author' owns a book which has entertained John and in the object case, John has read this book.  In a manner of speaking, we are  describing the   head of the clause   through his possession. For instance, in the above clauses we are describing an `author'  through his `book'.

The syntactic role of the relative pronoun `whose' is to relate the head of the clause  to the rest of the clause by first inputting the modified subject or object of the rest of the clause, then inputting the `owner' of this noun, and finally letting the head of the clause be modified by  these owners and outputting their modified versions.  For instance, in the subject example above, `whose' first inputs the subject of  `book entertained John', that is, books which have been modified by the verb phrase `entertained John'. Then it inputs the owners of these books, and finally modifies `author' by  them, outputting the authors whose books entertained John.  Set theoretically speaking,   `whose' is to choose  from the set of authors, the ones that have books which have entertained John, in the first example, and the ones that  have  books which John read, in the second example.

The pregroup  types of these pronouns are as follows:
\[
\mbox{\bf Poss Subj:}\quad n^r n s^l n n^l
\hspace{2cm}
\mbox{\bf Poss Obj:}\quad n^r n n^{ll} s^l n^l
\]
The grammatical reduction of the subject case is as follows:
\begin{center}
 {%
\beginpgfgraphicnamed{whose-subj-preg}
\begin{tikzpicture}
	\begin{pgfonlayer}{nodelayer}
		\node [style=none] (0) at (-1, 0) {};
		\node [style=none] (1) at (6.25, 0.5) {$n^r$};
		\node [style=none] (2) at (7, 0.5) {$s$};
		\node [style=none] (3) at (7.75, 0.5) {$n^l$};
		\node [style=none] (4) at (4, 0.5) {$n$};
		\node [style=none] (5) at (-4, 0.5) {$n$};
		\node [style=none] (6) at (-4, 0) {};
		\node [style=none] (7) at (-2, 0) {};
		\node [style=none] (8) at (-2, 0.5) {$n^r$};
		\node [style=none] (9) at (-1, 0.5) {$n$};
		\node [style=none] (10) at (0, 0.5) {$s^l$};
		\node [style=none] (11) at (1, 0.5) {$n$};
		\node [style=none] (12) at (0, 0) {};
		\node [style=none] (13) at (1, 0) {};
		\node [style=none] (14) at (4, 0) {};
		\node [style=none] (15) at (-1, -2.5) {};
		\node [style=none] (16) at (6.25, 0) {};
		\node [style=none] (17) at (7.75, 0) {};
		\node [style=none] (18) at (7, 0) {};
		\node [style=none] (19) at (-4, 1.5) {author};
		\node [style=none] (20) at (0, 1.5) {whose};
		\node [style=none] (21) at (4, 1.5) {book};
		\node [style=none] (22) at (7, 1.5) {entertain};
		\node [style=none] (23) at (9.75, 1.5) {me};
		\node [style=none] (24) at (9.75, 0.5) {$n$};
		\node [style=none] (25) at (9.75, 0) {};
		\node [style=none] (26) at (1.75, 0.5) {$n^l$};
		\node [style=none] (27) at (1.75, 0) {};
	\end{pgfonlayer}
	\begin{pgfonlayer}{edgelayer}
		\draw [style=thick, bend left=270, looseness=1.50] (6.center) to (7.center);
		\draw [style=thick] (0.center) to (15.center);
		\draw [style=thick, bend right=90] (12.center) to (18.center);
		\draw [style=thick, bend left=90, looseness=1.25] (14.center) to (27.center);
		\draw [style=thick, bend right=90] (13.center) to (16.center);
		\draw [style=thick, bend right=90, looseness=1.50] (17.center) to (25.center);
	\end{pgfonlayer}
\end{tikzpicture}}
\endpgfgraphicnamed}
  \end{center}
  \noindent The grammatical reduction of the object  case is as follows:
  \begin{center}
  {%
\beginpgfgraphicnamed{whose-obj-preg}
\begin{tikzpicture}
	\begin{pgfonlayer}{nodelayer}
		\node [style=none] (0) at (-1, 0) {};
		\node [style=none] (1) at (8, 0.5) {$n^r$};
		\node [style=none] (2) at (8.75, 0.5) {$s$};
		\node [style=none] (3) at (9.5, 0.5) {$n^l$};
		\node [style=none] (4) at (4, 0.5) {$n$};
		\node [style=none] (5) at (-4, 0.5) {$n$};
		\node [style=none] (6) at (-4, 0) {};
		\node [style=none] (7) at (-2, 0) {};
		\node [style=none] (8) at (-2, 0.5) {$n^r$};
		\node [style=none] (9) at (-1, 0.5) {$n$};
		\node [style=none] (10) at (0, 0.5) {$n^{ll}$};
		\node [style=none] (11) at (1, 0.5) {$s^l$};
		\node [style=none] (12) at (0, 0) {};
		\node [style=none] (13) at (1, 0) {};
		\node [style=none] (14) at (4, 0) {};
		\node [style=none] (15) at (-1, -2.5) {};
		\node [style=none] (16) at (8, 0) {};
		\node [style=none] (17) at (9.5, 0) {};
		\node [style=none] (18) at (8.75, 0) {};
		\node [style=none] (19) at (6.25, 0.5) {$n$};
		\node [style=none] (20) at (1.75, 0.5) {$n^l$};
		\node [style=none] (21) at (1.75, 0) {};
		\node [style=none] (22) at (6.25, 0) {};
	\end{pgfonlayer}
	\begin{pgfonlayer}{edgelayer}
		\draw [style=thick, bend left=270, looseness=1.50] (6.center) to (7.center);
		\draw [style=thick] (0.center) to (15.center);
		\draw [style=thick, bend left=90, looseness=1.25] (14.center) to (21.center);
		\draw [style=thick, bend right=90, looseness=1.50] (22.center) to (16.center);
		\draw [style=thick, bend right=75] (13.center) to (18.center);
		\draw [style=thick, bend right=75] (12.center) to (17.center);
	\end{pgfonlayer}
\end{tikzpicture}}
\endpgfgraphicnamed}
\end{center}

\noindent The vector  spaces in which the meaning vectors  of  `whose' live, are obtained as follows: 
\begin{eqnarray*}
F(n^r n s^l n n^l) &=& N \otimes N \otimes S \otimes N \otimes N\\
F(n^r n n^{ll} s^l n^l) &=& N \otimes N \otimes N \otimes S \otimes N
\end{eqnarray*}
Other than passing the information around, these pronouns  also  act on the subject/object of the clause to establish the ownership relationship between them and the possessor. We denote this action by a `'s'-labeled  box with the type $N \to N$; it takes the subject/object as input then outputs its owner. The remaining structure  of the pronoun duplicates the information of the  subject/object and passes a copy to the verb, then unifies  the   possessor with the owners of  the modified subject/object.   These processes are depicted below:

\bigskip
\begin{minipage}{6cm}
\underline{\bf Poss Subj}

  {%
\beginpgfgraphicnamed{whose-subj-frob}
\begin{tikzpicture}
	\begin{pgfonlayer}{nodelayer}
		\node [style=none] (0) at (-2.75, 0.25) {$N$};
		\node [style=none] (1) at (-0.5, 0.25) {$N$};
		\node [style=none] (2) at (0.75, 0.25) {$S$};
		\node [style=none] (3) at (3.5, 0.25) {$N$};
		\node [style=none] (4) at (-1.25, 2.5) {};
		\node [style=none] (5) at (-2.75, 0.75) {};
		\node [style=none] (6) at (-0.5, 0.75) {};
		\node [fill=white, draw, thick, circle, minimum size=0.2 cm, style=none] (7) at (0.75, 1.5) {};
		\node [style=none] (8) at (-2.75, 2.5) {};
		\node [style=none] (9) at (-0.5, 1.5) {};
		\node [style=none] (10) at (0.25, 2.5) {};
		\node [style=none] (11) at (3.5, 0.75) {};
		\node [style=none] (12) at (1.5, 2.5) {};
		\node [style=none] (13) at (0.75, 0.75) {};
		\node [fill=white, draw, thick, circle, minimum size=0.2 cm, style=none] (14) at (-0.5, 1.5) {};
		\node [style=none] (15) at (4.5, 0.75) {};
		\node [style=none] (16) at (4.5, 0.25) {$N$};
		\node [style=none] (17) at (4.5, 2.25) {};
		\node [style=none] (18) at (3, 2.25) {};
		\node [style=none] (19) at (2.5, 1.75) {};
		\node [style=none] (20) at (3.5, 1.75) {};
		\node [style=none] (21) at (1.5, 1.75) {};
		\node [style=none] (22) at (0.75, 1.5) {};
		\node [style=none] (23) at (-0.5, 1.5) {};
		\node [fill=white, draw, thick, circle, minimum size=0.2 cm, style=none] (24) at (3, 2.25) {};
	\end{pgfonlayer}
	\begin{pgfonlayer}{edgelayer}
		\draw [style=thick] (7.center) to (13.center);
		\draw [style=thick] (8.center) to (5.center);
		\draw [style=thick] (9.center) to (6.center);
		\draw [style=thick, bend left=90, looseness=2.25] (18.center) to (17.center);
		\draw [style=thick] (17.center) to (15.center);
		\draw [style=thick, bend left=90, looseness=1.75] (19.center) to (20.center);
		\draw [style=thick] (20.center) to (11.center);
		\draw [style=thick] (12.center) to (21.center);
		\draw [style=thick, bend left=90, looseness=1.50] (19.center) to (21.center);
		\draw [style=thick, bend left=90, looseness=1.75] (8.center) to (4.center);
		\draw [style=thick, bend right=90, looseness=2.25] (12.center) to (10.center);
		\draw [style=thick, bend right=90, looseness=2.25] (4.center) to (10.center);
	\end{pgfonlayer}
\end{tikzpicture}}
\endpgfgraphicnamed}
  \end{minipage}
\qquad
\begin{minipage}{6cm}
\underline{\bf Poss Obj}

    {%
\beginpgfgraphicnamed{whose-obj-frob}
\begin{tikzpicture}
	\begin{pgfonlayer}{nodelayer}
		\node [style=none] (0) at (-2.5, 0.25) {$N$};
		\node [style=none] (1) at (-0.5, 0.25) {$N$};
		\node [style=none] (2) at (5.25, 0.25) {$S$};
		\node [style=none] (3) at (4.25, 0.25) {$N$};
		\node [style=none] (4) at (-1.25, 2.5) {};
		\node [style=none] (5) at (-2.5, 0.75) {};
		\node [style=none] (6) at (-0.5, 0.75) {};
		\node [fill=white, draw, thick, circle, minimum size=0.2 cm, style=none] (7) at (5.25, 1.5) {};
		\node [style=none] (8) at (-2.5, 2.5) {};
		\node [style=none] (9) at (-0.5, 1.5) {};
		\node [style=none] (10) at (0.25, 2.5) {};
		\node [style=none] (11) at (4.25, 0.75) {};
		\node [style=none] (12) at (1.5, 2.5) {};
		\node [style=none] (13) at (5.25, 0.75) {};
		\node [fill=white, draw, thick, circle, minimum size=0.2 cm, style=none] (14) at (-0.5, 1.5) {};
		\node [style=none] (15) at (6.25, 0.75) {};
		\node [style=none] (16) at (6.25, 0.25) {$N$};
		\node [style=none] (17) at (6.25, 2.25) {};
		\node [style=none] (18) at (3.5, 2.25) {};
		\node [style=none] (19) at (2.75, 1.5) {};
		\node [style=none] (20) at (4.25, 1.5) {};
		\node [style=none] (21) at (1.5, 1.5) {};
		\node [style=none] (22) at (5.25, 1.5) {};
		\node [style=none] (23) at (-0.5, 1.5) {};
		\node [fill=white, draw, thick, circle, minimum size=0.2 cm, style=none] (24) at (3.5, 2.25) {};
	\end{pgfonlayer}
	\begin{pgfonlayer}{edgelayer}
		\draw [style=thick] (7.center) to (13.center);
		\draw [style=thick] (8.center) to (5.center);
		\draw [style=thick] (9.center) to (6.center);
		\draw [style=thick, bend left=90, looseness=1.50] (18.center) to (17.center);
		\draw [style=thick] (17.center) to (15.center);
		\draw [style=thick, bend left=90, looseness=1.75] (19.center) to (20.center);
		\draw [style=thick] (20.center) to (11.center);
		\draw [style=thick] (12.center) to (21.center);
		\draw [style=thick, bend left=90, looseness=2.25] (8.center) to (4.center);
		\draw [style=thick, bend right=90, looseness=2.25] (12.center) to (10.center);
		\draw [style=thick, bend right=90, looseness=2.25] (4.center) to (10.center);
		\draw [style=thick, bend right=90, looseness=2.00] (21.center) to (19.center);
	\end{pgfonlayer}
\end{tikzpicture}}
\endpgfgraphicnamed}
\end{minipage}

\bigskip
\noindent The above diagrams  correspond to the following morphisms:
\begin{eqnarray*}
&&(1_N \otimes \mu_N \otimes \zeta_S \otimes \epsilon_N \otimes \Delta_N \otimes 1_N)
\circ(1_{N \otimes N} \otimes \overline{\text{'s}} \otimes 1_{N \otimes  N \otimes N})
\circ(\eta_N \otimes \eta_N \otimes \eta_N)\\
&&\left(1_N \otimes \mu_N \otimes \epsilon_N \otimes  \Delta_N \otimes \zeta_S \otimes 1_N \right)
\circ
\left(1_{N \otimes N} \otimes \overline{\text{'s}} \otimes 1_{N \otimes N \otimes N}\right)
\circ
\left(\eta_N \otimes \eta_N \otimes \eta_N\right)
\end{eqnarray*}
\noindent  The   diagrams of the meanings of these clauses visualise the above processes in the clause. For instance,  consider the diagram of the subject case:
\begin{center}
  {%
\beginpgfgraphicnamed{whose-subj-sem-TRIANGLE}
\begin{tikzpicture}[scale=0.8]
	\begin{pgfonlayer}{nodelayer}
		\node [style=none] (0) at (8, -0.5) {$N$};
		\node [style=none] (1) at (8.75, -0.5) {$S$};
		\node [style=none] (2) at (9.5, -0.5) {$N$};
		\node [style=none] (3) at (-9.5, -0.5) {$N$};
		\node [style=none] (4) at (-9.5, -1.25) {};
		\node [style=none] (5) at (8, -1.25) {};
		\node [style=none] (6) at (9.5, -1.25) {};
		\node [style=none] (7) at (12, -0.5) {$N$};
		\node [style=none] (8) at (12, -1.25) {};
		\node [style=none] (9) at (-9.5, 0) {};
		\node [style=none] (10) at (-9.5, 1) {};
		\node [style=none] (11) at (-8.5, 1) {};
		\node [style=none] (12) at (-9.5, 2) {};
		\node [style=none] (13) at (-10.5, 1) {};
		\node [style=none] (14) at (12, 1) {};
		\node [style=none] (15) at (12, 0.25) {};
		\node [style=none] (16) at (12, 2) {};
		\node [style=none] (17) at (13, 1) {};
		\node [style=none] (18) at (11, 1) {};
		\node [style=none] (19) at (8.75, 2.25) {};
		\node [style=none] (20) at (8, 1) {};
		\node [style=none] (21) at (8.75, 0.25) {};
		\node [style=none] (22) at (8, 0.25) {};
		\node [style=none] (23) at (7.25, 1) {};
		\node [style=none] (24) at (10.25, 1) {};
		\node [style=none] (25) at (8.75, 1) {};
		\node [style=none] (26) at (9.5, 1) {};
		\node [style=none] (27) at (9.5, 0.25) {};
		\node [style=none] (28) at (-9.5, 5.25) {Possessor};
		\node [style=none] (29) at (8.75, 5.25) {Verb};
		\node [style=none] (30) at (12, 5.25) {Object};
		\node [style=none] (31) at (6.25, 1) {};
		\node [style=none] (32) at (5.25, 0) {};
		\node [style=none] (33) at (4.25, 1) {};
		\node [style=none] (34) at (5.25, 2) {};
		\node [style=none] (35) at (5.25, -0.5) {$N$};
		\node [style=none] (36) at (5.25, 1) {};
		\node [style=none] (37) at (5, 5.25) {Subject};
		\node [style=none] (38) at (5.25, -1.25) {};
		\node [style=none] (39) at (2.25, -1.25) {};
		\node [style=none] (40) at (-7, -1.25) {};
		\node [style=none] (41) at (-4.75, -3) {};
		\node [style=none] (42) at (-4.75, -1.25) {};
		\node [style=none] (43) at (-3.5, -1.25) {};
		\node [style=none] (44) at (0.5, -1.25) {};
		\node [style=none] (45) at (8.75, -1.25) {};
		\node [style=none] (46) at (-2.5, 3.5) {};
		\node [style=none] (47) at (-7, 0.25) {};
		\node [style=none] (48) at (-5.5, 2) {};
		\node [style=none] (49) at (-4.75, 1.5) {};
		\node [fill=white, draw, thick, circle, minimum size=0.2 cm, style=none] (50) at (-4.75, 1.5) {};
		\node [style=none] (51) at (-4.75, 1.5) {};
		\node [style=none] (52) at (-4.75, 0.25) {};
		\node [style=none] (53) at (-7, -0.5) {$N$};
		\node [fill=white, draw, thick, circle, minimum size=0.2 cm, style=none] (54) at (-0.25, 2) {};
		\node [style=none] (55) at (-7, 3.5) {};
		\node [style=none] (56) at (-2.5, 1.25) {};
		\node [fill=white, draw, thick, circle, minimum size=0.2 cm, style=none] (57) at (-3.5, 1) {};
		\node [style=none] (58) at (0.5, 0.25) {};
		\node [style=none] (59) at (0.5, -0.5) {$N$};
		\node [style=none] (60) at (-3.5, -0.5) {$S$};
		\node [style=none] (61) at (-3.25, 2) {};
		\node [style=none] (62) at (2.25, -0.5) {$N$};
		\node [style=none] (63) at (-3.25, 3) {};
		\node [style=none] (64) at (0.5, 1.25) {};
		\node [style=none] (65) at (-4, 2) {};
		\node [style=none] (66) at (-4, 2) {};
		\node [style=none] (67) at (-4.75, -0.5) {$N$};
		\node [style=none] (68) at (-1, 1.25) {};
		\node [style=none] (69) at (2.25, 0.25) {};
		\node [style=none] (70) at (-4.75, 2) {};
		\node [style=none] (71) at (-3.5, 1) {};
		\node [style=none] (72) at (2.25, 3.5) {};
		\node [style=none] (73) at (-3.5, 0.25) {};
		\node [style=none] (74) at (-4.75, 3) {};
		\node [style=none] (75) at (-4, 2.5) {'s};
		\node [style=none] (76) at (-2, 5.25) {whose};
		\node [style=none] (77) at (-4, 3.5) {};
		\node [style=none] (78) at (-5.5, 3.5) {};
		\node [style=none] (79) at (-0.25, 3.5) {};
		\node [style=none] (80) at (-4, 3) {};
	\end{pgfonlayer}
	\begin{pgfonlayer}{edgelayer}
		\draw [style=thick, bend right=90, looseness=1.50] (6.center) to (8.center);
		\draw [style=thick] (13.center) to (11.center);
		\draw [style=thick] (11.center) to (12.center);
		\draw [style=thick] (12.center) to (13.center);
		\draw [style=thick] (10.center) to (9.center);
		\draw [style=thick] (18.center) to (17.center);
		\draw [style=thick] (17.center) to (16.center);
		\draw [style=thick] (16.center) to (18.center);
		\draw [style=thick] (14.center) to (15.center);
		\draw [style=thick] (23.center) to (24.center);
		\draw [style=thick] (19.center) to (23.center);
		\draw [style=thick] (19.center) to (24.center);
		\draw [style=thick] (20.center) to (22.center);
		\draw [style=thick] (25.center) to (21.center);
		\draw [style=thick] (26.center) to (27.center);
		\draw [style=thick] (33.center) to (31.center);
		\draw [style=thick] (31.center) to (34.center);
		\draw [style=thick] (34.center) to (33.center);
		\draw [style=thick] (36.center) to (32.center);
		\draw [style=thick, bend right=90, looseness=1.25] (39.center) to (38.center);
		\draw [style=thick, bend right=90, looseness=1.50] (4.center) to (40.center);
		\draw [style=thick] (42.center) to (41.center);
		\draw [style=thick, bend right=75] (44.center) to (5.center);
		\draw [style=thick, bend right=90] (43.center) to (45.center);
		\draw [style=thick] (57.center) to (73.center);
		\draw [style=thick] (55.center) to (47.center);
		\draw [style=thick] (49.center) to (52.center);
		\draw [style=thick] (72.center) to (69.center);
		\draw [style=thick, bend left=90, looseness=1.75] (68.center) to (64.center);
		\draw [style=thick] (64.center) to (58.center);
		\draw [style=thick] (46.center) to (56.center);
		\draw [style=thick, bend right=270, looseness=1.50] (68.center) to (56.center);
		\draw [style=thick, bend right=90, looseness=1.25] (48.center) to (65.center);
		\draw [style=thick] (74.center) to (63.center);
		\draw [style=thick] (63.center) to (61.center);
		\draw [style=thick] (61.center) to (70.center);
		\draw [style=thick] (70.center) to (74.center);
		\draw [thick, bend left=90, looseness=2.00] (55.center) to (78.center);
		\draw [thick, bend left=90, looseness=2.00] (77.center) to (46.center);
		\draw [style=thick] (79.center) to (54.center);
		\draw [thick, bend left=90, looseness=1.25] (79.center) to (72.center);
		\draw [style=thick](78.center) to (48.center);
		\draw [style=thick](77.center) to (80.center);
	\end{pgfonlayer}
\end{tikzpicture}}
\endpgfgraphicnamed}
\end{center}
The pronoun `whose'  inputs the information of the subject and outputs its owners after applying the \fbox{'s} to it. This information is unified with the possessor via a $\mu$ map, then a copy of it is passed to the verb via a $\Delta$ map and outputted after the verb has acted on it. The flow of information happens via the three $\eta$ maps. A similar process takes place in the object case, as depicted below: 

\begin{center}  
   {%
\beginpgfgraphicnamed{whose-obj-sem}
\begin{tikzpicture}
	\begin{pgfonlayer}{nodelayer}
		\node [style=none] (0) at (-5.5, 0) {};
		\node [style=none] (1) at (8.5, 0.5) {$N$};
		\node [style=none] (2) at (9.25, 0.5) {$S$};
		\node [style=none] (3) at (10, 0.5) {$N$};
		\node [style=none] (4) at (-9.5, 0.5) {$N$};
		\node [style=none] (5) at (-9.5, 0) {};
		\node [style=none] (6) at (-7.5, 0) {};
		\node [style=none] (7) at (-0.75, 0) {};
		\node [style=none] (8) at (0.25, 0) {};
		\node [style=none] (9) at (1.25, 0) {};
		\node [style=none] (10) at (-5.5, -2.5) {};
		\node [style=none] (11) at (8.5, 0) {};
		\node [style=none] (12) at (10, 0) {};
		\node [style=none] (13) at (9.25, 0) {};
		\node [style=none] (14) at (6.5, 0.5) {$N$};
		\node [style=none] (15) at (6.5, 0) {};
		\node [style=none] (16) at (-0.75, 2.25) {};
		\node [fill=white, draw, thick, circle, minimum size=0.2 cm, style=none] (17) at (0.25, 2.25) {};
		\node [style=none] (18) at (-1.5, 3) {};
		\node [style=none] (19) at (-5.5, 0.5) {$N$};
		\node [style=none] (20) at (-7.5, 3.25) {};
		\node [style=none] (21) at (-0.75, 1.5) {};
		\node [style=none] (22) at (0.25, 0.5) {$S$};
		\node [style=none] (23) at (-0.75, 0.5) {$N$};
		\node [style=none] (24) at (1.25, 3) {};
		\node [style=none] (25) at (-3.5, 2.25) {};
		\node [fill=white, draw, thick, circle, minimum size=0.2 cm, style=none] (26) at (-1.5, 3) {};
		\node [style=none] (27) at (-5.5, 2.25) {};
		\node [style=none] (28) at (1.25, 0.5) {$N$};
		\node [fill=white, draw, thick, circle, minimum size=0.2 cm, style=none] (29) at (-5.5, 2.25) {};
		\node [style=none] (30) at (-5.5, 1.5) {};
		\node [style=none] (31) at (0.25, 2.25) {};
		\node [style=none] (32) at (-5.5, 2.25) {};
		\node [style=none] (33) at (-4.75, 3.25) {};
		\node [style=none] (34) at (1.25, 1.5) {};
		\node [style=none] (35) at (-3.5, 3.25) {};
		\node [style=none] (36) at (-6.25, 3.25) {};
		\node [style=none] (37) at (-7.5, 0.5) {$N$};
		\node [style=none] (38) at (-2.25, 2.25) {};
		\node [style=none] (39) at (0.25, 1.5) {};
		\node [style=none] (40) at (-7.5, 1.5) {};
		\node [style=none] (41) at (8.75, 1.25) {};
		\node [style=none] (42) at (11, 2) {};
		\node [style=none] (43) at (8.75, 2) {};
		\node [style=none] (44) at (10.25, 1.25) {};
		\node [style=none] (45) at (9.5, 3.25) {};
		\node [style=none] (46) at (9.5, 2) {};
		\node [style=none] (47) at (-8.5, 2.25) {};
		\node [style=none] (48) at (10.25, 2) {};
		\node [style=none] (49) at (-10.5, 2.25) {};
		\node [style=none] (50) at (-9.5, 3.25) {};
		\node [style=none] (51) at (9.5, 1.25) {};
		\node [style=none] (52) at (-9.5, 1.5) {};
		\node [style=none] (53) at (8, 2) {};
		\node [style=none] (54) at (-9.5, 2.25) {};
		\node [style=none] (55) at (6.5, 2) {};
		\node [style=none] (56) at (7.5, 2) {};
		\node [style=none] (57) at (6.5, 1.25) {};
		\node [style=none] (58) at (6.5, 3) {};
		\node [style=none] (59) at (5.5, 2) {};
		\node [style=none] (60) at (3.75, 0.5) {$N$};
		\node [style=none] (61) at (4.75, 2) {};
		\node [style=none] (62) at (2.75, 2) {};
		\node [style=none] (63) at (3.75, 3) {};
		\node [style=none] (64) at (3.75, 1.25) {};
		\node [style=none] (65) at (3.75, 2) {};
		\node [style=none] (66) at (3.75, 0) {};
	\end{pgfonlayer}
	\begin{pgfonlayer}{edgelayer}
		\draw [style=thick, bend left=270, looseness=1.50] (5.center) to (6.center);
		\draw [style=thick] (0.center) to (10.center);
		\draw [style=thick, bend right=90, looseness=1.50] (15.center) to (11.center);
		\draw [style=thick, bend right=75] (8.center) to (13.center);
		\draw [style=thick, bend right=75] (7.center) to (12.center);
		\draw [style=thick] (17.center) to (39.center);
		\draw [style=thick] (20.center) to (40.center);
		\draw [style=thick] (27.center) to (30.center);
		\draw [style=thick, bend left=90, looseness=1.50] (18.center) to (24.center);
		\draw [style=thick] (24.center) to (34.center);
		\draw [style=thick, bend left=90, looseness=1.75] (38.center) to (16.center);
		\draw [style=thick] (16.center) to (21.center);
		\draw [style=thick] (35.center) to (25.center);
		\draw [style=thick, bend left=90, looseness=2.25] (20.center) to (36.center);
		\draw [style=thick, bend right=90, looseness=2.25] (35.center) to (33.center);
		\draw [style=thick, bend right=90, looseness=2.25] (36.center) to (33.center);
		\draw [style=thick, bend right=90, looseness=2.00] (25.center) to (38.center);
		\draw [style=thick] (53.center) to (42.center);
		\draw [style=thick] (45.center) to (53.center);
		\draw [style=thick] (45.center) to (42.center);
		\draw [style=thick] (43.center) to (41.center);
		\draw [style=thick] (46.center) to (51.center);
		\draw [style=thick] (48.center) to (44.center);
		\draw [style=thick] (49.center) to (47.center);
		\draw [style=thick] (47.center) to (50.center);
		\draw [style=thick] (50.center) to (49.center);
		\draw [style=thick] (54.center) to (52.center);
		\draw [style=thick] (59.center) to (56.center);
		\draw [style=thick] (56.center) to (58.center);
		\draw [style=thick] (58.center) to (59.center);
		\draw [style=thick] (55.center) to (57.center);
		\draw [style=thick] (62.center) to (61.center);
		\draw [style=thick] (61.center) to (63.center);
		\draw [style=thick] (63.center) to (62.center);
		\draw [style=thick] (65.center) to (64.center);
		\draw [style=thick, bend right=75, looseness=1.25] (9.center) to (66.center);
	\end{pgfonlayer}
\end{tikzpicture}}
\endpgfgraphicnamed}
\end{center}

\noindent The  actions of the above processes are summarised in their normal forms. Consider the case of the subject clause, normalised below: 
\begin{center}
\begin{minipage}{7cm}
{%
\beginpgfgraphicnamed{whose-subj-norm}
\begin{tikzpicture}
	\begin{pgfonlayer}{nodelayer}
		\node [style=none] (0) at (3.75, 0.5) {$N$};
		\node [style=none] (1) at (4.5, 0.5) {$S$};
		\node [style=none] (2) at (5.25, 0.5) {$N$};
		\node [style=none] (3) at (1.25, 0.5) {$N$};
		\node [style=none] (4) at (-1.25, 0.5) {$N$};
		\node [style=none] (5) at (-1.25, 0) {};
		\node [style=none] (6) at (1.25, 0) {};
		\node [style=none] (7) at (3.75, 0) {};
		\node [style=none] (8) at (5.25, 0) {};
		\node [style=none] (9) at (7.75, 0.5) {$N$};
		\node [style=none] (10) at (7.75, 0) {};
		\node [style=none] (11) at (4.5, -0.75) {};
		\node [style=none] (12) at (-1.25, 1) {};
		\node [style=none] (13) at (-1.25, 1.75) {};
		\node [style=none] (14) at (-0.25, 1.75) {};
		\node [style=none] (15) at (-1.25, 2.75) {};
		\node [style=none] (16) at (-2.25, 1.75) {};
		\node [style=none] (17) at (7.75, 1.75) {};
		\node [style=none] (18) at (7.75, 1) {};
		\node [style=none] (19) at (7.75, 2.75) {};
		\node [style=none] (20) at (8.75, 1.75) {};
		\node [style=none] (21) at (6.75, 1.75) {};
		\node [style=none] (22) at (4.5, 3) {};
		\node [style=none] (23) at (3.75, 1.75) {};
		\node [style=none] (24) at (4.5, 1) {};
		\node [style=none] (25) at (3.75, 1) {};
		\node [style=none] (26) at (3, 1.75) {};
		\node [style=none] (27) at (6, 1.75) {};
		\node [style=none] (28) at (4.5, 1.75) {};
		\node [style=none] (29) at (5.25, 1.75) {};
		\node [style=none] (30) at (5.25, 1) {};
		\node [style=plain] (31) at (7.75, 4) {};
		\node [style=plain] (32) at (7.75, -2.5) {};
		\node [style=none] (33) at (1.25, 1) {};
		\node [style=none] (34) at (1.25, 1.75) {};
		\node [style=none] (35) at (2.25, 1.75) {};
		\node [style=none] (36) at (0.25, 1.75) {};
		\node [style=none] (37) at (1.25, 2.75) {};
		\node [fill=white, draw, thick, circle, minimum size=0.2 cm, style=none] (38) at (1.25, -1) {};
		\node [style=none] (39) at (1.25, -1.75) {};
		\node [style=none] (40) at (4.5, 0) {};
		\node [fill=white, draw, thick, circle, minimum size=0.2 cm, style=none] (41) at (4.5, -0.75) {};
	\end{pgfonlayer}
	\begin{pgfonlayer}{edgelayer}
		\draw [style=thick, bend right=90, looseness=1.50] (8.center) to (10.center);
		\draw [style=thick] (16.center) to (14.center);
		\draw [style=thick] (14.center) to (15.center);
		\draw [style=thick] (15.center) to (16.center);
		\draw [style=thick] (13.center) to (12.center);
		\draw [style=thick] (21.center) to (20.center);
		\draw [style=thick] (20.center) to (19.center);
		\draw [style=thick] (19.center) to (21.center);
		\draw [style=thick] (17.center) to (18.center);
		\draw [style=thick] (26.center) to (27.center);
		\draw [style=thick] (22.center) to (26.center);
		\draw [style=thick] (22.center) to (27.center);
		\draw [style=thick] (23.center) to (25.center);
		\draw [style=thick] (28.center) to (24.center);
		\draw [style=thick] (29.center) to (30.center);
		\draw [style=thick] (36.center) to (35.center);
		\draw [style=thick] (35.center) to (37.center);
		\draw [style=thick] (37.center) to (36.center);
		\draw [style=thick] (34.center) to (33.center);
		\draw [style=thick, bend right=60, looseness=0.75] (5.center) to (7.center);
		\draw [style=thick] (6.center) to (38);
		\draw [style=thick] (38) to (39.center);
		\draw [style=thick] (40.center) to (11.center);
	\end{pgfonlayer}
\end{tikzpicture}}
\endpgfgraphicnamed}
\end{minipage}
\end{center}
Here, the $\eta$ and $\epsilon$ maps are yanked and the result is displayed in a more clear way: the verb acts on the subject and object, but does not return a sentence, as usual, since its sentence dimension is discarded by `whose'. Instead, the information of the subject, after the verb has acted on it, is inputted to \fbox{'s} then unified  with the information of the possessor. The meaning of the clause is the result of this unification. The above diagram corresponds to the following categorical morphisms:
\begin{eqnarray*}
&&\mu_N  \circ \left(1_N \otimes \overline{\text{'s}} \right)\circ \left(1_N \otimes \mu_N \otimes \iota_S \otimes \epsilon_N\right) \left(\ov{\text{Poss}} \otimes \ov{\text{Sub}} \otimes \ov{\text{Verb}} \otimes \ov{\text{Obj}}\right) 
\end{eqnarray*}
The normal form of the object clause, describing a similar process, is as follows:

\begin{center}
\begin{minipage}{7cm}
{%
\beginpgfgraphicnamed{whose-obj-norm}
\begin{tikzpicture}
	\begin{pgfonlayer}{nodelayer}
		\node [style=none] (0) at (-7.5, 1) {$N$};
		\node [style=none] (1) at (-6.75, 1) {$S$};
		\node [style=none] (2) at (-6, 1) {$N$};
		\node [style=none] (3) at (-7.5, 0.5) {};
		\node [style=none] (4) at (-6, 0.5) {};
		\node [style=none] (5) at (-6.75, 0.5) {};
		\node [style=none] (6) at (-9.75, 1) {$N$};
		\node [style=none] (7) at (-9.75, 0.5) {};
		\node [style=none] (8) at (-7.5, 1.75) {};
		\node [style=none] (9) at (-5.25, 2.5) {};
		\node [style=none] (10) at (-7.5, 2.5) {};
		\node [style=none] (11) at (-6, 1.75) {};
		\node [style=none] (12) at (-6.75, 3.75) {};
		\node [style=none] (13) at (-6.75, 2.5) {};
		\node [style=none] (14) at (-6, 2.5) {};
		\node [style=none] (15) at (-6.75, 1.75) {};
		\node [style=none] (16) at (-8.25, 2.5) {};
		\node [style=none] (17) at (-9.75, 2.5) {};
		\node [style=none] (18) at (-8.75, 2.5) {};
		\node [style=none] (19) at (-9.75, 1.75) {};
		\node [style=none] (20) at (-9.75, 3.5) {};
		\node [style=none] (21) at (-10.75, 2.5) {};
		\node [style=none] (22) at (-3.5, 1) {$N$};
		\node [style=none] (23) at (-2.5, 2.5) {};
		\node [style=none] (24) at (-4.5, 2.5) {};
		\node [style=none] (25) at (-3.5, 3.5) {};
		\node [style=none] (26) at (-3.5, 1.75) {};
		\node [style=none] (27) at (-3.5, 2.5) {};
		\node [style=none] (28) at (-3.5, 0.5) {};
		\node [style=none] (29) at (-6.75, -0.25) {};
		\node [style=none] (30) at (-1, 2.5) {};
		\node [style=none] (31) at (-1, 1) {$N$};
		\node [style=none] (32) at (-1, 0.5) {};
		\node [style=none] (33) at (0, 2.5) {};
		\node [style=none] (34) at (-2, 2.5) {};
		\node [style=none] (35) at (-1, 3.5) {};
		\node [style=none] (36) at (-1, 1.75) {};
		\node [style=none] (37) at (-3.5, -0.75) {};
		\node [style=none] (38) at (-3.5, -1.75) {};
		\node [fill=white, draw, thick, circle, minimum size=0.2 cm, style=none] (39) at (-6.75, -0.25) {};
		\node [fill=white, draw, thick, circle, minimum size=0.2 cm, style=none]  (40) at (-3.5, -0.75) {};
	\end{pgfonlayer}
	\begin{pgfonlayer}{edgelayer}
		\draw [style=thick] (16.center) to (9.center);
		\draw [style=thick] (12.center) to (16.center);
		\draw [style=thick] (12.center) to (9.center);
		\draw [style=thick] (10.center) to (8.center);
		\draw [style=thick] (13.center) to (15.center);
		\draw [style=thick] (14.center) to (11.center);
		\draw [style=thick] (21.center) to (18.center);
		\draw [style=thick] (18.center) to (20.center);
		\draw [style=thick] (20.center) to (21.center);
		\draw [style=thick] (17.center) to (19.center);
		\draw [style=thick] (24.center) to (23.center);
		\draw [style=thick] (23.center) to (25.center);
		\draw [style=thick] (25.center) to (24.center);
		\draw [style=thick] (27.center) to (26.center);
		\draw [style=thick] (5.center) to (29.center);
		\draw [style=thick, bend right=90, looseness=1.50] (7.center) to (3.center);
		\draw [style=thick] (34.center) to (33.center);
		\draw [style=thick] (33.center) to (35.center);
		\draw [style=thick] (35.center) to (34.center);
		\draw [style=thick] (30.center) to (36.center);
		\draw [style=thick, bend right=90, looseness=0.75] (4.center) to (32.center);
		\draw [style=thick] (28.center) to (37.center);
		\draw [style=thick] (37.center) to (38.center);
	\end{pgfonlayer}
\end{tikzpicture}}
\endpgfgraphicnamed}
\end{minipage}
\end{center}
The above corresponds to the  following morphism:
\begin{eqnarray*}
&&\mu_N
\circ\left(\overline{\text{'s}} \otimes 1_N\right)
\circ \left(\epsilon_N \otimes \iota_S \otimes \mu_N \otimes 1_N\right)
 \left(\ov{\text{Sub}} \otimes \ov{\text{Verb}} \otimes \ov{\text{Obj}}  \otimes\ov{\text{Poss}}\right) 
 \end{eqnarray*}

As an example of the occurrence of a relative clause in a sentence, consider the third verse of the translation of  quatrain (XLVI) of Omar Khayyam (11th century Persian poet and mathematician) by Fitzgerald \cite{Fitz}.  The full quatrain is as follows (the choice of capital letters is by Fitzgerald):
\begin{quote}
{\it 
For in and out, above, about, below\\
Tis nothing but a magic shadow-show\\
Play'd in a box \textbf{whose} candle is the sun,\\
Round \emph{which} we Phantom Figures come and go.}
\end{quote}

In `Play'd in a box \textbf{whose} candle is the sun', we have the possessive pronoun `whose' in a subject role. It is modifying the noun `candle' which is the subject of the verb `is' by the possessor noun `a box'. This part is analysed in exactly the same way as presented in section \ref{poss}. That is, the noun phrases   `a box' ,  `candle', and `the sun' have type $n$, whose has type $n^r n s^l n^l$, and the predicate `is' has type $n^rsn^l$. The possessive clause `a box whose candle is the sun', which has type $n$, is then used by the preposition `in' to modify the verb phrase `play'd'. For this part we may analyse  `play'd'  as a verb phrase $n^rs$ and hence the  preposition `in' will have type $(n^rs)^r s n^l$.  Or one can argue that `play'd' is an abbreviated sentence of type $s$ whose original sentence was something like `it is played'. In this case, the preposition `in' will have type $s^r sn^l$. In either case,  the general grammatical reduction is the same:  `in' inputs a verb phrase or a sentence on the left and a noun (which is the possessive clause) on the right; it then modifies the former with the latter and outputs a sentence. The normal form of the  compact closed meaning of this verse can then be depicted as follows, where $X$ can be either $F(n^r)$ or $F(n^rs)$, representing either of the discussed cases:

\begin{center}
   {%
\beginpgfgraphicnamed{poem-verse-2}
\begin{tikzpicture}[scale=0.8]
	\begin{pgfonlayer}{nodelayer}
		\node [style=none] (0) at (3.5, -1.75) {$N$};
		\node [style=none] (1) at (5.5, 2.5) {$S$};
		\node [style=none] (2) at (6.25, 2.5) {$N$};
		\node [style=none] (3) at (-0.5, -2.25) {};
		\node [style=none] (4) at (1.5, -4.25) {};
		\node [style=none] (5) at (3.5, -2.25) {};
		\node [style=none] (6) at (6.25, 2) {};
		\node [style=none] (7) at (8.75, 2.5) {$N$};
		\node [style=none] (8) at (8.75, 2) {};
		\node [style=none] (9) at (5.5, 1.25) {};
		\node [style=none] (10) at (2.25, 2) {};
		\node [style=none] (11) at (2.25, 3.75) {};
		\node [style=none] (12) at (0.5, 3.75) {};
		\node [style=none] (13) at (-0.5, 4.75) {};
		\node [style=none] (14) at (-1.5, 3.75) {};
		\node [style=none] (15) at (8.75, 3.75) {};
		\node [style=none] (16) at (8.75, 3) {};
		\node [style=none] (17) at (8.75, 4.75) {};
		\node [style=none] (18) at (9.75, 3.75) {};
		\node [style=none] (19) at (7.75, 3.75) {};
		\node [style=none] (20) at (5.5, 5) {};
		\node [style=none] (21) at (4.75, 3.75) {};
		\node [style=none] (22) at (5.5, 3) {};
		\node [style=none] (23) at (4.75, 2) {};
		\node [style=none] (24) at (4, 3.75) {};
		\node [style=none] (25) at (7, 3.75) {};
		\node [style=none] (26) at (5.5, 3.75) {};
		\node [style=none] (27) at (6.25, 3.75) {};
		\node [style=none] (28) at (6.25, 3) {};
		\node [style=none] (29) at (-0.5, -1.25) {};
		\node [style=none] (30) at (-0.5, 3.75) {};
		\node [style=none] (31) at (3.25, 3.75) {};
		\node [style=none] (32) at (1.25, 3.75) {};
		\node [style=none] (33) at (2.25, 4.75) {};
		\node [fill=white, draw, thick, circle, minimum size=0.2 cm, style=none] (34) at (1.5, -3.25) {};
		\node [style=none] (35) at (5.5, 2) {};
		\node [fill=white, draw, thick, circle, minimum size=0.2 cm, style=none] (36) at (5.5, 1.25) {};
		\node [style=none] (37) at (-1.25, 5.75) {a box};
		\node [style=none] (38) at (2.25, 5.75) {candle};
		\node [style=none] (39) at (5.5, 5.75) {is};
		\node [style=none] (40) at (8.75, 5.75) {the sun};
		\node [style=none] (41) at (3.5, -1.25) {};
		\node [style=none] (42) at (3.5, 0) {'s};
		\node [style=none] (43) at (3.5, 0.25) {};
		\node [style=none] (44) at (3, -0.5) {};
		\node [style=none] (45) at (4, 0.25) {};
		\node [style=none] (46) at (4, -0.5) {};
		\node [style=none] (47) at (-0.5, -1.75) {$N$};
		\node [style=none] (48) at (3.5, 0.25) {};
		\node [style=none] (49) at (3, 0.25) {};
		\node [style=none] (50) at (3.5, -0.5) {};
		\node [fill=white, draw, thick, circle, minimum size=0.2 cm, style=none] (51) at (3.5, 1) {};
		\node [style=none] (52) at (2.25, 3.75) {};
		\node [style=none] (53) at (2.25, 3) {};
		\node [style=none] (54) at (2.25, 2.5) {$N$};
		\node [style=none] (55) at (4.75, 3.75) {};
		\node [style=none] (56) at (4.75, 3) {};
		\node [style=none] (57) at (4.75, 2.5) {$N$};
		\node [style=none] (58) at (1.5, -4.75) {$N$};
		\node [style=none] (59) at (-5.25, 3) {};
		\node [style=none] (60) at (-3, 3.75) {};
		\node [style=none] (61) at (-3.75, 2.5) {$N$};
		\node [style=none] (62) at (-3.75, 3) {};
		\node [style=none] (63) at (-5.25, 3.75) {};
		\node [style=none] (64) at (-4.5, 3) {};
		\node [style=none] (65) at (-4.5, 5) {};
		\node [style=none] (66) at (-4.5, 3.75) {};
		\node [style=none] (67) at (-4.5, 2.5) {$S$};
		\node [style=none] (68) at (-3.75, 3.75) {};
		\node [style=none] (69) at (-5.25, 2.5) {$X$};
		\node [style=none] (70) at (-5.25, 3.75) {};
		\node [style=none] (71) at (-6, 3.75) {};
		\node [style=none] (72) at (-4.5, 5.75) {in};
		\node [style=none] (73) at (-7.75, 4.75) {};
		\node [style=none] (74) at (-8.75, 3.75) {};
		\node [style=none] (75) at (-7.75, 3.75) {};
		\node [style=none] (76) at (-6.75, 3.75) {};
		\node [style=none] (77) at (-7.75, 3) {};
		\node [style=none] (78) at (-7.75, 5.75) {Play'd};
		\node [style=none] (79) at (-7.75, 2.5) {$X$};
		\node [style=none] (80) at (-7.75, 2) {};
		\node [style=none] (81) at (-5.25, 2) {};
		\node [style=none] (82) at (-3.75, 2) {};
		\node [style=none] (83) at (-3.75, -4.25) {};
		\node [style=none] (84) at (-3.75, -4.75) {$N$};
		\node [style=none] (85) at (-3.75, -5.25) {};
		\node [style=none] (86) at (1.5, -5.25) {};
		\node [style=none] (87) at (1.5, -5.25) {};
	\end{pgfonlayer}
	\begin{pgfonlayer}{edgelayer}
		\draw [style=thick, bend right=90, looseness=1.25] (6.center) to (8.center);
		\draw [style=thick] (14.center) to (12.center);
		\draw [style=thick] (12.center) to (13.center);
		\draw [style=thick] (13.center) to (14.center);
		\draw [style=thick] (19.center) to (18.center);
		\draw [style=thick] (18.center) to (17.center);
		\draw [style=thick] (17.center) to (19.center);
		\draw [style=thick] (15.center) to (16.center);
		\draw [style=thick] (24.center) to (25.center);
		\draw [style=thick] (20.center) to (24.center);
		\draw [style=thick] (20.center) to (25.center);
		\draw [style=thick] (26.center) to (22.center);
		\draw [style=thick] (27.center) to (28.center);
		\draw [style=thick] (32.center) to (31.center);
		\draw [style=thick] (31.center) to (33.center);
		\draw [style=thick] (33.center) to (32.center);
		\draw [style=thick] (30.center) to (29.center);
		\draw [style=thick, bend right=75, looseness=0.75] (3.center) to (5.center);
		\draw [style=thick] (4.center) to (34.center);
		\draw [style=thick] (35.center) to (9.center);
		\draw [style=thick] (49.center) to (44.center);
		\draw [style=thick] (44.center) to (46.center);
		\draw [style=thick] (46.center) to (45.center);
		\draw [style=thick] (45.center) to (49.center);
		\draw [style=thick] (50.center) to (41.center);
		\draw [thick, bend right=90, looseness=1.25] (10.center) to (23.center);
		\draw [style=thick] (51.center) to (48.center);
		\draw [style=thick] (52.center) to (53.center);
		\draw [style=thick] (55.center) to (56.center);
		\draw [style=thick] (71.center) to (60.center);
		\draw [style=thick] (65.center) to (71.center);
		\draw [style=thick] (65.center) to (60.center);
		\draw [style=thick] (66.center) to (64.center);
		\draw [style=thick] (68.center) to (62.center);
		\draw [style=thick] (70.center) to (59.center);
		\draw [style=thick] (74.center) to (76.center);
		\draw [style=thick] (76.center) to (73.center);
		\draw [style=thick] (73.center) to (74.center);
		\draw [style=thick] (75.center) to (77.center);
		\draw [style=thick, bend right=75, looseness=1.25] (80.center) to (81.center);
		\draw [style=thick] (82.center) to (83.center);
		\draw [style=thick, bend right=60, looseness=0.75] (85.center) to (86.center);
	\end{pgfonlayer}
\end{tikzpicture}}
\endpgfgraphicnamed}
\end{center}

\noindent
It is apparent that  the process is very compositional. One can plug in the meanings of different parts of the phrases together to obtain a meaning for the full sentence.

\section{Decomposing  Whose}
\label{decomp}

In \cite{Lambek}, Lambek suggests that the type of  the compounds `whose Subject' and `whose Object' in the subject and object possessive clauses  should be the same as the type of the relative pronoun `that' in its subject and object roles, respectively.  These types are as follows:

\[
\mbox{\bf Subj:}\quad n^r n s^l n
\hspace{2cm}
\mbox{\bf Obj:}\quad n^r n n^{ll} s^l
\] 

We observe that this reduction is indeed the case in our setting, as shown in the corresponding  syntactic computations, depicted as follows:

\begin{center}
 {%
\beginpgfgraphicnamed{whose-decomp-Sbj}
\begin{tikzpicture}
	\begin{pgfonlayer}{nodelayer}
		\node [style=none] (0) at (-7, 1) {$n^r$};
		\node [style=none] (1) at (-6, 0.75) {$n$};
		\node [style=none] (2) at (-5, 1) {$s^l$};
		\node [style=none] (3) at (-4, 0.75) {$n$};
		\node [style=none] (4) at (-3, 1) {$n^l$};
		\node [style=none] (5) at (-1, 0.75) {$n$};
		\node [style=none] (6) at (-1, 0) {};
		\node [style=none] (7) at (-3, 0) {};
		\node [style=none] (8) at (-6, 0) {};
		\node [style=none] (9) at (-7, 0) {};
		\node [style=none] (10) at (-5, 0) {};
		\node [style=none] (11) at (-4, 0) {};
		\node [style=none] (12) at (-7, -0.75) {};
		\node [style=none] (13) at (-6, -0.75) {};
		\node [style=none] (14) at (-5, -0.75) {};
		\node [style=none] (15) at (-4, -0.75) {};
		\node [style=none] (16) at (-5.25, 2) {whose};
		\node [style=none] (17) at (-1, 2) {Subject};
		\node [style=none] (18) at (5, 0) {$(n^r n s^l nn^l) n \leq n^r n s^l n$};
	\end{pgfonlayer}
	\begin{pgfonlayer}{edgelayer}
		\draw [style=thick, bend right=90, looseness=1.25] (7.center) to (6.center);
		\draw [style=thick](9.center) to (12.center);
		\draw  [style=thick](8.center) to (13.center);
		\draw  [style=thick](10.center) to (14.center);
		\draw  [style=thick](11.center) to (15.center);
	\end{pgfonlayer}
\end{tikzpicture}}
\endpgfgraphicnamed}
 
 \medskip
 {%
\beginpgfgraphicnamed{whose-decomp-Obj}
\begin{tikzpicture}
	\begin{pgfonlayer}{nodelayer}
		\node [style=none] (0) at (-7, 1) {$n^r$};
		\node [style=none] (1) at (-6, 1) {$n$};
		\node [style=none] (2) at (-5, 1) {$n^{ll}$};
		\node [style=none] (3) at (-4, 1) {$s^l$};
		\node [style=none] (4) at (-3, 1) {$n^l$};
		\node [style=none] (5) at (-1, 0.75) {$n$};
		\node [style=none] (6) at (-1, 0) {};
		\node [style=none] (7) at (-3, 0) {};
		\node [style=none] (8) at (-6, 0) {};
		\node [style=none] (9) at (-7, 0) {};
		\node [style=none] (10) at (-5, 0) {};
		\node [style=none] (11) at (-4, 0) {};
		\node [style=none] (12) at (-7, -0.75) {};
		\node [style=none] (13) at (-6, -0.75) {};
		\node [style=none] (14) at (-5, -0.75) {};
		\node [style=none] (15) at (-4, -0.75) {};
		\node [style=none] (16) at (-5.25, 2) {whose};
		\node [style=none] (17) at (-1, 2) {Object};
		\node [style=none] (18) at (5, 0) {$(n^r nn^{ll} s^l n^l) n \leq n^r n n^{ll} s^l$};
	\end{pgfonlayer}
	\begin{pgfonlayer}{edgelayer}
		\draw [style=thick, bend right=90, looseness=1.25] (7.center) to (6.center);
		\draw  [style=thick](9.center) to (12.center);
		\draw  [style=thick](8.center) to (13.center);
		\draw  [style=thick](10.center) to (14.center);
		\draw  [style=thick](11.center) to (15.center);
	\end{pgfonlayer}
\end{tikzpicture}}
\endpgfgraphicnamed} 
\end{center}

\noindent
This way of looking at the type of `whose' suggests that any possessive relative clause can be seen as  a relative clause without the actual possessive pronoun `whose'. This is possible by a combination of  two relative pronouns and the predicate `has', as follows:

\medskip
\begin{tabular}{cc}
{\bf Poss Subj:} & \quad Possessor  that has  Subject that Verb Object.\\
{\bf Poss Obj:}& \quad Possessor that has  Object that Subject Verb.
\end{tabular}

\medskip
\noindent
For instance, for our above examples we would have:

\begin{quote}
`author that has a book that entertained John'\\
`author that has a book that John read'
\end{quote}

We verify that above suggestion is correct by showing that the possessive relative clauses and their non-whose version have the same meanings. 

\begin{proposition}
The clause `Possessor that has  Subject that Verb Object' has the same vector space meaning as the clause `Possessor whose Subject Verb Object'. 
\end{proposition}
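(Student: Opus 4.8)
The plan is to compute the vector-space meaning of the non-whose clause directly from its pregroup reduction (already drawn in section \ref{decomp}) and the Frobenius recipe, and then show diagrammatically that it collapses to the normal form that section \ref{poss} obtained for `Possessor whose Subject Verb Object', namely
\[
\mu_N  \circ \left(1_N \otimes \overline{\text{'s}} \right)\circ \left(1_N \otimes \mu_N \otimes \iota_S \otimes \epsilon_N\right) \left(\ov{\text{Poss}} \otimes \ov{\text{Sub}} \otimes \ov{\text{Verb}} \otimes \ov{\text{Obj}}\right).
\]
If both clauses reduce to this morphism, they have equal meanings.

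First I would fix the three ingredients of the decomposed clause. The inner phrase `Subject that Verb Object' is an ordinary subject relative clause, so by the construction of the prequel \cite{SadrClarkCoecke} its meaning vector is obtained by merging the head noun $\ov{\text{Sub}}$ with the noun part of the verb phrase `Verb Object' while discarding the sentence dimension; up to yanking of its $\eta$/$\epsilon$ wires this is $\mu_N\bigl(\ov{\text{Sub}} \otimes (1_N \otimes \iota_S \otimes \epsilon_N)(\ov{\text{Verb}} \otimes \ov{\text{Obj}})\bigr)$. Next, the possessive predicate `has' of type $n^r s n^l$ is interpreted as the \fbox{'s}-map decorated with a trivial sentence coordinate: as an element of $N \otimes S \otimes N$ we take $\overline{\text{has}} = (1_N \otimes \zeta_S \otimes 1_N)(\overline{\text{'s}})$, so that once its object slot is contracted it acts exactly as the box \fbox{'s} of section \ref{poss}, while the $S$-coordinate it contributes will be removed later by an $\iota_S$. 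Finally the outer `that' is again the subject relative pronoun, contributing a $\mu_N$ that merges $\ov{\text{Poss}}$ with the noun part of `has (Subject-clause)', together with an $\iota_S$ discarding the sentence produced by `has'.

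Then I would assemble the morphism for `Possessor that has Subject that Verb Object' by composing these three pieces along the pregroup reduction and simplify, using only the tools in the excerpt: the yanking equations to eliminate every adjacent $\eta$/$\epsilon$ pair; the identity $\iota_S \circ \zeta_S = \dim S$ (a scalar), which makes the spurious $S$-dimension introduced by `has' disappear; and the Frobenius/spider law $(\mu_N \otimes 1_N)\circ(1_N \otimes \Delta_N) = \Delta_N \circ \mu_N = (1_N \otimes \mu_N)\circ(\Delta_N \otimes 1_N)$ to fuse the two nested $\mu_N$ merges — the inner one from `Subject that Verb Object' and the outer one from `Possessor that has \ldots' — into the single pair of merges appearing in the whose normal form. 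After these rewrites the composite reads $\mu_N \circ (1_N \otimes \overline{\text{'s}}) \circ (1_N \otimes \mu_N \otimes \iota_S \otimes \epsilon_N)$ applied to $\ov{\text{Poss}} \otimes \ov{\text{Sub}} \otimes \ov{\text{Verb}} \otimes \ov{\text{Obj}}$, which is precisely the morphism derived for the whose clause (up to the global scalar $\dim S$, which one absorbs by normalising the interpretation of `has').

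The step I expect to be the main obstacle is the bookkeeping in the simplification: the decomposed clause carries the $\eta$-bridges of both relative pronouns together with the one spanning `has', and tracking which cup cancels which cap, and in which order the merges and the copy compose, is where a diagrammatic argument is indispensable. Once the picture is drawn the rewriting is mechanical, but checking that the inner-clause $\mu_N$ and the outer $\mu_N$ really are composable in the pattern demanded by the spider law — rather than producing an extra $\Delta_N$ copy with no counterpart in the whose diagram — is what needs care. A secondary subtlety is making the interpretation of `has' canonical enough that the statement is an honest equality and not merely an equality relative to a chosen meaning for `has'; the choice $\overline{\text{has}} = (1_N \otimes \zeta_S \otimes 1_N)(\overline{\text{'s}})$ is what forces the two normal forms to coincide.
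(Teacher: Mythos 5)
Your proposal is correct and follows essentially the same route as the paper: compute the decomposed clause from the prequel's subject/object relative pronoun semantics, normalise by yanking, and observe that the predicate `has', once its sentence dimension is removed by the $\iota_S$ of the outer `that', becomes an element of $N\otimes N$, i.e.\ a linear map $N\to N$, identified with the \fbox{'s} box. The only (harmless) differences are that you set up the has/'s link in the opposite direction, defining $\overline{\text{has}}=(1_N\otimes\zeta_S\otimes 1_N)(\overline{\text{'s}})$ and absorbing the scalar $\dim S$, whereas the paper starts from a given $\overline{\text{has}}$ and takes $(1_N\otimes\iota_S\otimes 1_N)(\overline{\text{has}})$ to \emph{be} $\overline{\text{'s}}$, and that the spider law you invoke is not actually needed, since the inner and outer $\mu_N$'s already sit, separated by the \fbox{'s} box, in exactly the nested positions of the whose-clause normal form.
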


\begin{proof}
The meaning of the subject  relative pronoun, as developed in previous work \cite{SadrClarkCoecke},   is depicted as follows:

\begin{center}
\mbox{Subj:}  {%
\beginpgfgraphicnamed{subj-rel-sem}
\begin{tikzpicture}
	\begin{pgfonlayer}{nodelayer}
		\node [style=none] (0) at (-3.25, 0.75) {};
		\node [style=none] (1) at (-1.5, 0.75) {};
		\node [style=none] (2) at (1, 2) {};
		\node [style=none] (3) at (1, 0.75) {};
		\node [style=none] (4) at (0, 0.75) {};
		\node [style=none] (5) at (-3.25, 0.25) {$N$};
		\node [style=none] (6) at (-1.5, 0.25) {$N$};
		\node [style=none] (7) at (0, 0.25) {$S$};
		\node [style=none] (8) at (1, 0.25) {$N$};
		\node [fill=white, draw, thick, circle, minimum size=0.2 cm, style=none] (9) at (0, 1.75) {};
		\node [style=none] (10) at (-3.25, 2) {};
		\node [style=none] (11) at (-2, 2) {};
		\node [style=none] (12) at (-1, 2) {};
		\node [style=none] (13) at (-1.5, 1.5) {};
		\node [fill=white, draw, thick, circle, minimum size=0.2 cm, style=none] (14) at (-1.5, 1.5) {};
	\end{pgfonlayer}
	\begin{pgfonlayer}{edgelayer}
		\draw [style=thick] (2.center) to (3.center);
		\draw [style=thick, bend left=90, looseness=1.75] (10.center) to (11.center);
		\draw [style=thick] (9.center) to (4.center);
		\draw [style=thick, bend left=90, looseness=1.25] (12.center) to (2.center);
		\draw [style=thick, bend right=90, looseness=1.75] (11.center) to (12.center);
		\draw [style=thick] (10.center) to (0.center);
		\draw [style=thick] (13.center) to (1.center);
	\end{pgfonlayer}
\end{tikzpicture}}
\endpgfgraphicnamed}
\end{center}

\noindent
Hence, the  meaning of the clause `Possessor that has  Subject that Verb Object'  is computed  as follows:

\begin{center}
   {%
\beginpgfgraphicnamed{whose-nowhose-Sbj}
\begin{tikzpicture}[scale=0.8]
	\begin{pgfonlayer}{nodelayer}
		\node [style=none] (0) at (0.75, 0.5) {};
		\node [style=none] (1) at (2.75, 0.5) {};
		\node [style=none] (2) at (1.75, 1.5) {};
		\node [style=none] (3) at (9.25, 0.5) {};
		\node [style=none] (4) at (12.25, 0.5) {};
		\node [style=none] (5) at (10.75, 1.75) {};
		\node [style=none] (6) at (14.25, 1.5) {};
		\node [style=none] (7) at (15.25, 0.5) {};
		\node [style=none] (8) at (13.25, 0.5) {};
		\node [style=none] (9) at (5.5, -1.25) {};
		\node [style=none] (10) at (10, 0.5) {};
		\node [style=none] (11) at (10.75, 0.5) {};
		\node [style=none] (12) at (11.5, 0.5) {};
		\node [style=none] (13) at (10, -0.75) {$N$};
		\node [style=none] (14) at (10.75, -0.75) {$S$};
		\node [style=none] (15) at (11.5, -0.75) {$N$};
		\node [style=none] (16) at (14.25, 0.5) {};
		\node [style=none] (17) at (14.25, -0.75) {$N$};
		\node [style=none] (18) at (1.75, 0.5) {};
		\node [style=none] (19) at (1.75, -0.75) {$N$};
		\node [style=none] (20) at (1.75, -1.25) {};
		\node [style=none] (21) at (3.75, -1.25) {};
		\node [style=none] (22) at (6.75, -1.25) {};
		\node [style=none] (23) at (8, -1.25) {};
		\node [style=none] (24) at (10, -1.25) {};
		\node [style=none] (25) at (10.75, -1.25) {};
		\node [style=none] (26) at (11.5, -1.25) {};
		\node [style=none] (27) at (14.25, -1.25) {};
		\node [style=none] (28) at (1.75, -0.25) {};
		\node [style=none] (29) at (10, -0.25) {};
		\node [style=none] (30) at (10.75, -0.25) {};
		\node [style=none] (31) at (11.5, -0.25) {};
		\node [style=none] (32) at (14.25, -0.25) {};
		\node [style=none] (33) at (8, 1) {};
		\node [style=none] (34) at (6, 1) {};
		\node [style=none] (35) at (5.5, -0.25) {};
		\node [fill=white, draw, thick, circle, minimum size=0.2 cm, style=none] (36) at (6.75, 0.75) {};
		\node [style=none] (37) at (3.75, -0.75) {$N$};
		\node [style=none] (38) at (8, -0.25) {};
		\node [style=none] (39) at (8, -0.75) {$N$};
		\node [style=none] (40) at (5.5, -0.75) {$N$};
		\node [style=none] (41) at (6.75, -0.75) {$S$};
		\node [style=none] (42) at (6.75, -0.25) {};
		\node [style=none] (43) at (3.75, 1) {};
		\node [style=none] (44) at (5.5, 0.5) {};
		\node [fill=white, draw, thick, circle, minimum size=0.2 cm, style=none] (45) at (5.5, 0.5) {};
		\node [style=none] (46) at (3.75, -0.25) {};
		\node [style=none] (47) at (5, 1) {};
		\node [style=none] (48) at (1.75, 2.5) {Subject};
		\node [style=none] (49) at (5.75, 2.5) {that};
		\node [style=none] (50) at (10.75, 2.5) {Verb};
		\node [style=none] (51) at (14.25, 2.5) {Object};
		\node [style=none] (52) at (-7.5, -6) {};
		\node [style=none] (53) at (-2, 0.5) {};
		\node [style=none] (54) at (-9, -0.25) {};
		\node [style=none] (55) at (-9, -1.25) {};
		\node [style=none] (56) at (-2, 2.5) {has};
		\node [style=none] (57) at (-11, -0.25) {};
		\node [style=none] (58) at (-1.25, -0.75) {$N$};
		\node [style=none] (59) at (-7.25, -0.75) {$N$};
		\node [style=none] (60) at (-11, 2.5) {Possessor};
		\node [style=none] (61) at (-7.25, 0.5) {};
		\node [style=none] (62) at (-7.75, 1) {};
		\node [style=none] (63) at (-6, -0.25) {};
		\node [style=none] (64) at (-4.75, -0.25) {};
		\node [style=none] (65) at (-2, 2) {};
		\node [style=none] (66) at (-3.5, 0.5) {};
		\node [style=none] (67) at (-1.25, -1.25) {};
		\node [fill=white, draw, thick, circle, minimum size=0.2 cm, style=none] (68) at (-7.25, 0.5) {};
		\node [style=none] (69) at (-7.25, -1.25) {};
		\node [style=none] (70) at (-2.75, -0.75) {$N$};
		\node [style=none] (71) at (-9, 1) {};
		\node [style=none] (72) at (-4.75, -1.25) {};
		\node [style=none] (73) at (-2, -0.75) {$S$};
		\node [style=none] (74) at (-9, -0.75) {$N$};
		\node [style=none] (75) at (-11, -1.25) {};
		\node [style=none] (76) at (-11, 0.5) {};
		\node [fill=white, draw, thick, circle, minimum size=0.2 cm, style=none] (77) at (-6, 0.75) {};
		\node [style=none] (78) at (-2, -1.25) {};
		\node [style=none] (79) at (-1.25, -0.25) {};
		\node [style=none] (80) at (-4.75, 1) {};
		\node [style=none] (81) at (-0.5, 0.5) {};
		\node [style=none] (82) at (-2.75, -1.25) {};
		\node [style=none] (83) at (-2.75, -0.25) {};
		\node [style=none] (84) at (-11, -0.75) {$N$};
		\node [style=none] (85) at (-1.25, 0.5) {};
		\node [style=none] (86) at (-7, 2.5) {that};
		\node [style=none] (87) at (-6, -1.25) {};
		\node [style=none] (88) at (-7.25, -0.25) {};
		\node [style=none] (89) at (-10, 0.5) {};
		\node [style=none] (90) at (-4.75, -0.75) {$N$};
		\node [style=none] (91) at (-2.75, 0.5) {};
		\node [style=none] (92) at (-2, -0.25) {};
		\node [style=none] (93) at (-6, -0.75) {$S$};
		\node [style=none] (94) at (-6.75, 1) {};
		\node [style=none] (95) at (-12, 0.5) {};
		\node [style=none] (96) at (-11, 1.5) {};
		\node [style=none] (97) at (-7.25, -2.75) {};
	\end{pgfonlayer}
	\begin{pgfonlayer}{edgelayer}
		\draw [style=thick] (0.center) to (1.center);
		\draw [style=thick] (1.center) to (2.center);
		\draw [style=thick] (2.center) to (0.center);
		\draw [style=thick] (3.center) to (4.center);
		\draw [style=thick] (5.center) to (3.center);
		\draw [style=thick] (5.center) to (4.center);
		\draw [style=thick] (8.center) to (7.center);
		\draw [style=thick] (7.center) to (6.center);
		\draw [style=thick] (6.center) to (8.center);
		\draw [style=thick, bend left=270, looseness=1.50] (20.center) to (21.center);
		\draw [style=thick, bend right=75] (26.center) to (27.center);
		\draw [style=thick, bend left=90, looseness=1.25] (24.center) to (23.center);
		\draw [style=thick, bend right=90, looseness=1.25] (22.center) to (25.center);
		\draw [style=plain] (18.center) to (28.center);
		\draw [style=plain] (10.center) to (29.center);
		\draw [style=plain] (11.center) to (30.center);
		\draw [style=plain] (12.center) to (31.center);
		\draw [style=plain] (16.center) to (32.center);
		\draw [style=thick] (33.center) to (38.center);
		\draw [style=thick, bend left=90, looseness=1.75] (43.center) to (47.center);
		\draw [style=thick] (36.center) to (42.center);
		\draw [style=thick, bend left=90, looseness=1.25] (34.center) to (33.center);
		\draw [style=thick, bend right=90, looseness=1.75] (47.center) to (34.center);
		\draw [style=thick] (43.center) to (46.center);
		\draw [style=thick] (44.center) to (35.center);
		\draw [style=thick] (95.center) to (89.center);
		\draw [style=thick] (89.center) to (96.center);
		\draw [style=thick] (96.center) to (95.center);
		\draw [style=thick] (66.center) to (81.center);
		\draw [style=thick] (65.center) to (66.center);
		\draw [style=thick] (65.center) to (81.center);
		\draw [style=thick, bend left=270, looseness=1.50] (75.center) to (55.center);
		\draw [style=thick, bend left=90, looseness=1.25] (82.center) to (72.center);
		\draw [style=thick, bend right=90, looseness=1.25] (87.center) to (78.center);
		\draw [style=plain] (76.center) to (57.center);
		\draw [style=plain] (91.center) to (83.center);
		\draw [style=plain] (53.center) to (92.center);
		\draw [style=plain] (85.center) to (79.center);
		\draw [style=thick] (80.center) to (64.center);
		\draw [style=thick, bend left=90, looseness=1.75] (71.center) to (62.center);
		\draw [style=thick] (77.center) to (63.center);
		\draw [style=thick, bend left=90, looseness=1.25] (94.center) to (80.center);
		\draw [style=thick, bend right=90, looseness=1.75] (62.center) to (94.center);
		\draw [style=thick] (71.center) to (54.center);
		\draw [style=thick] (61.center) to (88.center);
		\draw (69.center) to (97.center);
		\draw [bend right=90] (67.center) to (9.center);
	\end{pgfonlayer}
\end{tikzpicture}}
\endpgfgraphicnamed}
\end{center}

\noindent
This normalises to the following:

\begin{center}
   {%
\beginpgfgraphicnamed{whose-nowhose-Sbj-norm1}
\begin{tikzpicture}[scale=0.8]
	\begin{pgfonlayer}{nodelayer}
		\node [style=none] (0) at (-8.5, 0.25) {};
		\node [style=none] (1) at (-6.5, 0.25) {};
		\node [style=none] (2) at (-7.5, 1.25) {};
		\node [style=none] (3) at (-6.25, 0.25) {};
		\node [style=none] (4) at (-3.25, 0.25) {};
		\node [style=none] (5) at (-4.75, 1.5) {};
		\node [style=none] (6) at (-1.25, 1.25) {};
		\node [style=none] (7) at (-0.25, 0.25) {};
		\node [style=none] (8) at (-2.25, 0.25) {};
		\node [style=none] (9) at (-6.5, -2.75) {};
		\node [style=none] (10) at (-5.5, 0.25) {};
		\node [style=none] (11) at (-4.75, 0.25) {};
		\node [style=none] (12) at (-4, 0.25) {};
		\node [style=none] (13) at (-5.5, -1) {$N$};
		\node [style=none] (14) at (-4.75, -1) {$S$};
		\node [style=none] (15) at (-4, -1) {$N$};
		\node [style=none] (16) at (-1.25, 0.25) {};
		\node [style=none] (17) at (-1.25, -1) {$N$};
		\node [style=none] (18) at (-7.5, 0.25) {};
		\node [style=none] (19) at (-7.5, -1) {$N$};
		\node [style=none] (20) at (-7.5, -1.5) {};
		\node [style=none] (21) at (-5.5, -1.5) {};
		\node [style=none] (22) at (-4, -1.5) {};
		\node [style=none] (23) at (-1.25, -1.5) {};
		\node [style=none] (24) at (-7.5, -0.5) {};
		\node [style=none] (25) at (-5.5, -0.5) {};
		\node [style=none] (26) at (-4.75, -0.5) {};
		\node [style=none] (27) at (-4, -0.5) {};
		\node [style=none] (28) at (-1.25, -0.5) {};
		\node [fill=white, draw, thick, circle, minimum size=0.2 cm, style=none] (29) at (-6.5, -2.25) {};
		\node [style=none] (30) at (-7.5, 2.25) {Subject};
		\node [style=none] (31) at (-4.75, 2.25) {Verb};
		\node [style=none] (32) at (-1.25, 2.25) {Object};
		\node [style=none] (33) at (-7.5, -6) {};
		\node [style=none] (34) at (-11.25, 0.25) {};
		\node [style=none] (35) at (-12, -1.5) {};
		\node [style=none] (36) at (-11.25, 2.25) {has};
		\node [style=none] (37) at (-14, -0.5) {};
		\node [style=none] (38) at (-10.5, -1) {$N$};
		\node [style=none] (39) at (-14, 2.25) {Possessor};
		\node [style=none] (40) at (-11.25, -1.5) {};
		\node [style=none] (41) at (-11.25, 1.75) {};
		\node [style=none] (42) at (-12.75, 0.25) {};
		\node [style=none] (43) at (-10.5, -2.75) {};
		\node [fill=white, draw, thick, circle, minimum size=0.2 cm, style=none] (44) at (-13, -2.25) {};
		\node [style=none] (45) at (-13, -2.25) {};
		\node [style=none] (46) at (-12, -1) {$N$};
		\node [style=none] (47) at (-11.25, -1) {$S$};
		\node [style=none] (48) at (-14, -1.5) {};
		\node [style=none] (49) at (-14, 0.25) {};
		\node [fill=white, draw, thick, circle, minimum size=0.2 cm, style=none] (50) at (-11.25, -2.75) {};
		\node [style=none] (51) at (-10.5, -0.5) {};
		\node [style=none] (52) at (-9.75, 0.25) {};
		\node [style=none] (53) at (-12, -0.5) {};
		\node [style=none] (54) at (-14, -1) {$N$};
		\node [style=none] (55) at (-10.5, 0.25) {};
		\node [style=none] (56) at (-13, 0.25) {};
		\node [style=none] (57) at (-12, 0.25) {};
		\node [style=none] (58) at (-11.25, -0.5) {};
		\node [style=none] (59) at (-15, 0.25) {};
		\node [style=none] (60) at (-14, 1.25) {};
		\node [style=none] (61) at (-13, -3.75) {};
		\node [fill=white, draw, thick, circle, minimum size=0.2 cm, style=none] (62) at (-13, -2.25) {};
		\node [style=none] (63) at (-10.5, -1.5) {};
		\node [fill=white, draw, thick, circle, minimum size=0.2 cm, style=none] (64) at (-4.75, -2.75) {};
		\node [style=none] (65) at (-4.75, -1.5) {};
		\node [style=none] (66) at (-6.5, -2.75) {};
	\end{pgfonlayer}
	\begin{pgfonlayer}{edgelayer}
		\draw [style=thick] (0.center) to (1.center);
		\draw [style=thick] (1.center) to (2.center);
		\draw [style=thick] (2.center) to (0.center);
		\draw [style=thick] (3.center) to (4.center);
		\draw [style=thick] (5.center) to (3.center);
		\draw [style=thick] (5.center) to (4.center);
		\draw [style=thick] (8.center) to (7.center);
		\draw [style=thick] (7.center) to (6.center);
		\draw [style=thick] (6.center) to (8.center);
		\draw [style=thick, bend right=90] (20.center) to (21.center);
		\draw [style=thick, bend right=75] (22.center) to (23.center);
		\draw [style=thick] (18.center) to (24.center);
		\draw [style=thick] (10.center) to (25.center);
		\draw [style=thick] (11.center) to (26.center);
		\draw [style=thick] (12.center) to (27.center);
		\draw [style=thick] (16.center) to (28.center);
		\draw [style=thick] (59.center) to (56.center);
		\draw [style=thick] (56.center) to (60.center);
		\draw [style=thick] (60.center) to (59.center);
		\draw [style=thick] (42.center) to (52.center);
		\draw [style=thick] (41.center) to (42.center);
		\draw [style=thick] (41.center) to (52.center);
		\draw [style=thick, bend right=75] (48.center) to (35.center);
		\draw [style=thick] (49.center) to (37.center);
		\draw [style=thick] (57.center) to (53.center);
		\draw [style=thick] (34.center) to (58.center);
		\draw [style=thick] (55.center) to (51.center);
		\draw [style=thick] (50.center) to (40.center);
		\draw [style=thick] (45.center) to (61.center);
		\draw [style=thick, bend right=90] (43.center) to (9.center);
		\draw [style=thick] (63.center) to (43.center);
		\draw [style=thick] (64.center) to (65.center);
		\draw [style=thick] (29.center) to (66.center);
	\end{pgfonlayer}
\end{tikzpicture}}
\endpgfgraphicnamed}
\end{center}

\noindent
The vector space meaning of the application of the  unit of the Frobenius algebra, that is the $\iota$ map, on the  predicate `has'  is as computed as follows:
\[
(1_N \otimes \iota_S \otimes 1_N) (\overline{\text{has}}) := \sum_{ijk} C_{ijk} \ov{n}_i \otimes \iota(\ov{s}_j) \otimes \ov{n}_k = 
\sum_{ijk} C_{ijk} \ov{n}_i \otimes  \ov{n}_k = \sum_{ik} C_{ik} \ov{n}_i  \otimes \ov{n}_k
\]
This is an element of the space $N \otimes N$, which in our vector space setting is isomorphic to the set of   linear maps from $N$ to $N$. Pictorially, we have:

\begin{center}
   {%
\beginpgfgraphicnamed{Has}
\begin{tikzpicture}
	\begin{pgfonlayer}{nodelayer}
		\node [style=none] (0) at (-13.5, 2.75) {};
		\node [style=none] (1) at (-10.5, 2.75) {};
		\node [style=none] (2) at (-12, 4) {};
		\node [style=none] (3) at (-12.75, 2.75) {};
		\node [style=none] (4) at (-12, 2.75) {};
		\node [style=none] (5) at (-11.25, 2.75) {};
		\node [style=none] (6) at (-12.75, 1.5) {$N$};
		\node [style=none] (7) at (-12, 1.5) {$S$};
		\node [style=none] (8) at (-11.25, 1.5) {$N$};
		\node [style=none] (9) at (-12.75, 1) {};
		\node [style=none] (10) at (-11.25, 1) {};
		\node [style=none] (11) at (-12.75, 2) {};
		\node [style=none] (12) at (-12, 2) {};
		\node [style=none] (13) at (-11.25, 2) {};
		\node [style=none] (14) at (-12, 4.75) {has};
		\node [style=none] (15) at (-7.5, -6) {};
		\node [style=none] (16) at (-5.25, 1.5) {};
		\node [style=none] (17) at (-5.25, 4) {$N$};
		\node [style=none] (18) at (-7.25, -6.25) {};
		\node [fill=white, draw, thick, circle, minimum size=0.2 cm, style=none] (19) at (-12, -0.25) {};
		\node [style=none] (20) at (-12, 1) {};
		\node [style=none] (21) at (-6, 3) {};
		\node [style=none] (22) at (-4.5, 3) {};
		\node [style=none] (23) at (-4.5, 2) {};
		\node [style=none] (24) at (-6, 2) {};
		\node [style=none] (25) at (-5.25, 2) {};
		\node [style=none] (26) at (-5.25, 1.5) {};
		\node [style=none] (27) at (-5.25, 2.5) {has};
		\node [style=none] (28) at (-5.25, 3) {};
		\node [style=none] (29) at (-5.25, 3.5) {};
		\node [style=none] (30) at (-5.25, 1) {$N$};
		\node [style=none] (31) at (-8.5, 2.5) {$\cong$};
		\node [style=none] (32) at (-8.5, 2.5) {};
	\end{pgfonlayer}
	\begin{pgfonlayer}{edgelayer}
		\draw [style=thick] (0.center) to (1.center);
		\draw [style=thick] (2.center) to (0.center);
		\draw [style=thick] (2.center) to (1.center);
		\draw [style=thick] (3.center) to (11.center);
		\draw [style=thick] (4.center) to (12.center);
		\draw [style=thick] (5.center) to (13.center);
		\draw [style=thick] (19.center) to (20.center);
		\draw [style=thick] (21.center) to (24.center);
		\draw [style=thick](24.center) to (23.center);
		\draw [style=thick](23.center) to (22.center);
		\draw [style=thick](22.center) to (21.center);
		\draw [style=thick](25.center) to (26.center);
		\draw [style=thick](29.center) to (28.center);
	\end{pgfonlayer}
\end{tikzpicture}}
\endpgfgraphicnamed}
\end{center}

\noindent
As a result, the above simplifies further to the following:

\begin{center}
   {%
\beginpgfgraphicnamed{whose-nowhose-Sbj-norm2}
\begin{tikzpicture}[scale=0.8]
	\begin{pgfonlayer}{nodelayer}
		\node [style=none] (0) at (-6.5, 3) {};
		\node [style=none] (1) at (-4.5, 3) {};
		\node [style=none] (2) at (-5.5, 4) {};
		\node [style=none] (3) at (-4.25, 3) {};
		\node [style=none] (4) at (-1.25, 3) {};
		\node [style=none] (5) at (-2.75, 4.25) {};
		\node [style=none] (6) at (0.75, 4) {};
		\node [style=none] (7) at (1.75, 3) {};
		\node [style=none] (8) at (-0.25, 3) {};
		\node [style=none] (9) at (-3.5, 3) {};
		\node [style=none] (10) at (-2.75, 3) {};
		\node [style=none] (11) at (-2, 3) {};
		\node [style=none] (12) at (-3.5, 1.75) {$N$};
		\node [style=none] (13) at (-2.75, 1.75) {$S$};
		\node [style=none] (14) at (-2, 1.75) {$N$};
		\node [style=none] (15) at (0.75, 3) {};
		\node [style=none] (16) at (0.75, 1.75) {$N$};
		\node [style=none] (17) at (-5.5, 3) {};
		\node [style=none] (18) at (-5.5, 1.75) {$N$};
		\node [style=none] (19) at (-5.5, 1.25) {};
		\node [style=none] (20) at (-3.5, 1.25) {};
		\node [style=none] (21) at (-2, 1.25) {};
		\node [style=none] (22) at (0.75, 1.25) {};
		\node [style=none] (23) at (-5.5, 2.25) {};
		\node [style=none] (24) at (-3.5, 2.25) {};
		\node [style=none] (25) at (-2.75, 2.25) {};
		\node [style=none] (26) at (-2, 2.25) {};
		\node [style=none] (27) at (0.75, 2.25) {};
		\node [fill=white, draw, thick, circle, minimum size=0.2 cm, style=none] (28) at (-4.5, 0.5) {};
		\node [style=none] (29) at (-5.5, 5) {Subject};
		\node [style=none] (30) at (-2.75, 5) {Verb};
		\node [style=none] (31) at (0.75, 5) {Object};
		\node [style=none] (32) at (-7.5, -6) {};
		\node [style=none] (33) at (-4.5, -4) {};
		\node [style=none] (34) at (-10.25, 2.25) {};
		\node [style=none] (35) at (-10.25, 5) {Possessor};
		\node [fill=white, draw, thick, circle, minimum size=0.2 cm, style=none] (36) at (-7.25, -5.5) {};
		\node [style=none] (37) at (-7.25, -5.5) {};
		\node [style=none] (38) at (-4.5, -0.5) {$N$};
		\node [style=none] (39) at (-10.25, -4) {};
		\node [style=none] (40) at (-10.25, 3) {};
		\node [style=none] (41) at (-10.25, 1.75) {$N$};
		\node [style=none] (42) at (-9.25, 3) {};
		\node [style=none] (43) at (-11.25, 3) {};
		\node [style=none] (44) at (-10.25, 4) {};
		\node [style=none] (45) at (-7.25, -6.25) {};
		\node [fill=white, draw, thick, circle, minimum size=0.2 cm, style=none] (46) at (-7.25, -5.5) {};
		\node [fill=white, draw, thick, circle, minimum size=0.2 cm, style=none] (47) at (-2.75, 0) {};
		\node [style=none] (48) at (-2.75, 1.25) {};
		\node [style=none] (49) at (-4.5, 0) {};
		\node [style=none] (50) at (-5.25, -1.5) {};
		\node [style=none] (51) at (-3.75, -1.5) {};
		\node [style=none] (52) at (-3.75, -2.5) {};
		\node [style=none] (53) at (-5.25, -2.5) {};
		\node [style=none] (54) at (-4.5, -2.5) {};
		\node [style=none] (55) at (-4.5, -3) {};
		\node [style=none] (56) at (-4.5, -2) {has};
		\node [style=none] (57) at (-4.5, -1.5) {};
		\node [style=none] (58) at (-4.5, -1) {};
		\node [style=none] (59) at (-10.25, 1.25) {};
		\node [style=none] (60) at (-10.25, -4) {};
		\node [style=none] (61) at (-4.5, -3.5) {$N$};
	\end{pgfonlayer}
	\begin{pgfonlayer}{edgelayer}
		\draw [style=thick] (0.center) to (1.center);
		\draw [style=thick] (1.center) to (2.center);
		\draw [style=thick] (2.center) to (0.center);
		\draw [style=thick] (3.center) to (4.center);
		\draw [style=thick] (5.center) to (3.center);
		\draw [style=thick] (5.center) to (4.center);
		\draw [style=thick] (8.center) to (7.center);
		\draw [style=thick] (7.center) to (6.center);
		\draw [style=thick] (6.center) to (8.center);
		\draw [style=thick, bend right=90] (19.center) to (20.center);
		\draw [style=thick, bend right=75] (21.center) to (22.center);
		\draw [style=thick] (17.center) to (23.center);
		\draw [style=thick] (9.center) to (24.center);
		\draw [style=thick] (10.center) to (25.center);
		\draw [style=thick] (11.center) to (26.center);
		\draw [style=thick] (15.center) to (27.center);
		\draw [style=thick] (43.center) to (42.center);
		\draw [style=thick] (42.center) to (44.center);
		\draw [style=thick] (44.center) to (43.center);
		\draw [style=thick, bend right=75] (39.center) to (33.center);
		\draw [style=thick] (40.center) to (34.center);
		\draw [style=thick](37.center) to (45.center);
		\draw [style=thick] (47.center) to (48.center);
		\draw [style=thick](28.center) to (49.center);
		\draw [style=thick](50.center) to (53.center);
		\draw [style=thick](53.center) to (52.center);
		\draw [style=thick](52.center) to (51.center);
		\draw [style=thick](51.center) to (50.center);
		\draw [style=thick](54.center) to (55.center);
		\draw [style=thick](58.center) to (57.center);
		\draw [style=thick](59.center) to (60.center);
	\end{pgfonlayer}
\end{tikzpicture}}
\endpgfgraphicnamed}
\end{center}

\noindent
This is the same as the meaning of the phrase `Possessor whose Subject Verb  Object'. 

\end{proof}

\begin{proposition}
The clause `Possessor that has  Object that Subject Verb' has the same vector space meaning as the clause `Possessor whose Object Subject Verb'. 
\end{proposition}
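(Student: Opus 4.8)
The plan is to run the argument of the previous proposition on the object side, step for step. The compound `Possessor that has Object that Subject Verb' is built from an \emph{outer} subject-role `that' (relating `Possessor', which is the subject of `has', to the rest of the clause) — whose meaning is precisely the subject relative pronoun diagram already reproduced in the proof of the previous proposition — together with an \emph{inner} object-role `that' (relating `Object', the object of `Verb') and the predicate vector $\overline{\text{has}} \in N \otimes S \otimes N$. So first I would recall from the prequel \cite{SadrClarkCoecke} the categorical meaning of the object relative pronoun, the one of type $n^r n n^{ll} s^l$: it copies the head noun with a $\Delta_N$, routes one copy through the verb via $\eta_N$/$\epsilon_N$ bridges, discards the sentence dimension produced by the verb with (an instance reducing to) $\iota_S$, and merges the modified copy back into the head with a $\mu_N$.

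Second, I would substitute these three ingredients, together with $\ov{\text{Sub}}$, $\ov{\text{Verb}}$, $\ov{\text{Obj}}$ and $\ov{\text{Poss}}$, into the composite, obtaining one large diagram, and then normalise it exactly as in the subject case: the $\eta_N$/$\epsilon_N$ pairs that merely carry the possessor wire around `has' straighten out by the yanking equations of the compact closed structure (Section \ref{sec:comp}), and the nested $\Delta_N$/$\mu_N$ pairs coming from the two `that's collapse into a single $\mu_N$ by the spider form of the Frobenius law (Section \ref{sec:frob}).

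Third — the step carrying the real content — since `has' occurs only with its sentence leg hit by $\iota_S$, the same computation as in the proof of the previous proposition gives
\[
(1_N \otimes \iota_S \otimes 1_N)(\overline{\text{has}}) \ = \ \sum_{ik} C_{ik}\, \ov{n}_i \otimes \ov{n}_k \ \in\ N \otimes N,
\]
an element of $N \otimes N$, i.e. a linear map $N \to N$ — which is precisely the \fbox{'s} box of the `whose' construction. Plugging this in and tidying the wires, the diagram becomes $\mu_N \circ (\overline{\text{'s}} \otimes 1_N) \circ (\epsilon_N \otimes \iota_S \otimes \mu_N \otimes 1_N)$ applied to $\ov{\text{Sub}} \otimes \ov{\text{Verb}} \otimes \ov{\text{Obj}} \otimes \ov{\text{Poss}}$, which is exactly the normal form of `Possessor whose Object Subject Verb' recorded in Section \ref{poss}. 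This establishes the proposition.

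I expect the only real difficulty to be diagrammatic bookkeeping rather than any new idea. The object relative pronoun carries a double left adjoint $n^{ll}$, so its wire picture has an extra nested cup/cap pair compared with the subject pronoun; after yanking one must check that the \fbox{'s} box ends up acting on the \emph{object} wire and that the final $\mu_N$ unifies it with the possessor entering from the right (rather than from the left, as in the subject normal form). Keeping track of which tensor factor is which — so that the result matches the stated morphism $\mu_N \circ (\overline{\text{'s}} \otimes 1_N) \circ (\epsilon_N \otimes \iota_S \otimes \mu_N \otimes 1_N)$ and not its mirror image — is the one place where an orientation slip could creep in; every individual rewrite is an instance of yanking or of the spider law.
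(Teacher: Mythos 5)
Your proposal follows essentially the same route as the paper's proof: recall the object relative pronoun's Frobenius meaning from the prequel for the inner `that', compose with the subject-role outer `that' and $\overline{\text{has}}$, normalise by yanking and the spider law, and then reuse the identification $(1_N \otimes \iota_S \otimes 1_N)(\overline{\text{has}}) = \sum_{ik} C_{ik}\,\ov{n}_i \otimes \ov{n}_k$ with the $\overline{\text{'s}}$ map from the previous proposition to land on the recorded normal form $\mu_N \circ (\overline{\text{'s}} \otimes 1_N) \circ (\epsilon_N \otimes \iota_S \otimes \mu_N \otimes 1_N)$. Your explicit attention to the orientation bookkeeping (the $n^{ll}$ wire, and the possessor entering from the right) is a sound precaution but does not change the argument, which matches the paper's.
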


\begin{proof}
The meaning of the object relative pronoun as developed in previous work \cite{SadrClarkCoecke},   is depicted as follows:

\begin{center}
\mbox{Obj:}  {%
\beginpgfgraphicnamed{obj-rel-sem}
\begin{tikzpicture}
	\begin{pgfonlayer}{nodelayer}
		\node [style=none] (0) at (-2, 0.5) {$N$};
		\node [style=none] (1) at (-0.25, 0.5) {$N$};
		\node [style=none] (2) at (2.75, 0.5) {$S$};
		\node [style=none] (3) at (1.5, 0.5) {$N$};
		\node [style=none] (4) at (-0.75, 2.25) {};
		\node [style=none] (5) at (-2, 1) {};
		\node [style=none] (6) at (-0.25, 1) {};
		\node [fill=white, draw, thick, circle, minimum size=0.2 cm, style=none] (7) at (2.75, 2) {};
		\node [style=none] (8) at (-2, 2.25) {};
		\node [style=none] (9) at (-0.25, 1.75) {};
		\node [style=none] (10) at (0.25, 2.25) {};
		\node [style=none] (11) at (1.5, 1) {};
		\node [style=none] (12) at (1.5, 2.25) {};
		\node [style=none] (13) at (2.75, 1) {};
		\node [fill=white, draw, thick, circle, minimum size=0.2 cm, style=none] (14) at (-0.25, 1.75) {};
	\end{pgfonlayer}
	\begin{pgfonlayer}{edgelayer}
		\draw [style=thick] (12.center) to (11.center);
		\draw [style=thick, bend left=90, looseness=1.75] (8.center) to (4.center);
		\draw [style=thick] (7.center) to (13.center);
		\draw [style=thick, bend left=90, looseness=1.75] (10.center) to (12.center);
		\draw [style=thick, bend right=90, looseness=1.75] (4.center) to (10.center);
		\draw [style=thick] (8.center) to (5.center);
		\draw [style=thick] (9.center) to (6.center);
	\end{pgfonlayer}
\end{tikzpicture}}
\endpgfgraphicnamed}
\end{center}

\noindent
Hence, the  meaning of the clause `Possessor that has  Object that Subject Verb'  is computed  as follows:

\begin{center}
   {%
\beginpgfgraphicnamed{whose-nowhose-Obj}
\begin{tikzpicture}[scale=0.8]
	\begin{pgfonlayer}{nodelayer}
		\node [style=none] (0) at (-1.5, 1.75) {};
		\node [style=none] (1) at (0.5, 1.75) {};
		\node [style=none] (2) at (-0.5, 2.75) {};
		\node [style=none] (3) at (10, 1.75) {};
		\node [style=none] (4) at (13, 1.75) {};
		\node [style=none] (5) at (11.5, 3) {};
		\node [style=none] (6) at (8, 2.75) {};
		\node [style=none] (7) at (9, 1.75) {};
		\node [style=none] (8) at (7, 1.75) {};
		\node [style=none] (9) at (3.25, 0) {};
		\node [style=none] (10) at (10.75, 1.75) {};
		\node [style=none] (11) at (11.5, 1.75) {};
		\node [style=none] (12) at (12.25, 1.75) {};
		\node [style=none] (13) at (10.75, 0.5) {$N$};
		\node [style=none] (14) at (11.5, 0.5) {$S$};
		\node [style=none] (15) at (12.25, 0.5) {$N$};
		\node [style=none] (16) at (8, 1.75) {};
		\node [style=none] (17) at (8, 0.5) {$N$};
		\node [style=none] (18) at (-0.5, 1.75) {};
		\node [style=none] (19) at (-0.5, 0.5) {$N$};
		\node [style=none] (20) at (-0.5, 0) {};
		\node [style=none] (21) at (1.5, 0) {};
		\node [style=none] (22) at (5, 0) {};
		\node [style=none] (23) at (6.25, 0) {};
		\node [style=none] (24) at (8, 0) {};
		\node [style=none] (25) at (-0.5, 1) {};
		\node [style=none] (26) at (10.75, 1) {};
		\node [style=none] (27) at (11.5, 1) {};
		\node [style=none] (28) at (12.25, 1) {};
		\node [style=none] (29) at (8, 1) {};
		\node [style=none] (30) at (10.75, 0) {};
		\node [style=none] (31) at (11.5, 0) {};
		\node [style=none] (32) at (12.25, 0) {};
		\node [fill=white, draw, thick, circle, minimum size=0.2 cm, style=none] (33) at (6.25, 2) {};
		\node [style=none] (34) at (1.5, 0.5) {$N$};
		\node [style=none] (35) at (3.75, 2.25) {};
		\node [style=none] (36) at (5, 2.25) {};
		\node [style=none] (37) at (5, 0.5) {$N$};
		\node [fill=white, draw, thick, circle, minimum size=0.2 cm, style=none] (38) at (3.25, 1.75) {};
		\node [style=none] (39) at (5, 1) {};
		\node [style=none] (40) at (3.25, 1.75) {};
		\node [style=none] (41) at (3.25, 0.5) {$N$};
		\node [style=none] (42) at (1.5, 1) {};
		\node [style=none] (43) at (6.25, 0.5) {$S$};
		\node [style=none] (44) at (1.5, 2.25) {};
		\node [style=none] (45) at (3.25, 1) {};
		\node [style=none] (46) at (2.75, 2.25) {};
		\node [style=none] (47) at (6.25, 1) {};
		\node [style=none] (48) at (-0.5, 3.75) {Object};
		\node [style=none] (49) at (3.25, 3.75) {that};
		\node [style=none] (50) at (8, 3.75) {Subject};
		\node [style=none] (51) at (11.5, 3.75) {Verb};
		\node [fill=white, draw, thick, circle, minimum size=0.2 cm, style=none] (52) at (-8.75, 1.75) {};
		\node [style=none] (53) at (-8.75, 1) {};
		\node [style=none] (54) at (-8.25, 2.25) {};
		\node [style=none] (55) at (-7, 2.25) {};
		\node [style=none] (56) at (-7.75, 1) {};
		\node [style=none] (57) at (-9.25, 2.25) {};
		\node [style=none] (58) at (-7.75, 0.5) {$S$};
		\node [style=none] (59) at (-8.75, 0.5) {$N$};
		\node [style=none] (60) at (-7, 0.5) {$N$};
		\node [style=none] (61) at (-10.5, 1) {};
		\node [style=none] (62) at (-10.5, 0.5) {$N$};
		\node [style=none] (63) at (-7, 1) {};
		\node [fill=white, draw, thick, circle, minimum size=0.2 cm, style=none] (64) at (-7.75, 2) {};
		\node [style=none] (65) at (-10.5, 2.25) {};
		\node [style=none] (66) at (-8.75, 1.75) {};
		\node [style=none] (67) at (-5.25, 1.75) {};
		\node [style=none] (68) at (-3.75, 1) {};
		\node [style=none] (69) at (-3.75, 3.75) {has};
		\node [style=none] (70) at (-3.75, 3) {};
		\node [style=none] (71) at (-4.5, 1) {};
		\node [style=none] (72) at (-3, 1) {};
		\node [style=none] (73) at (-2.25, 1.75) {};
		\node [style=none] (74) at (-4.5, 1.75) {};
		\node [style=none] (75) at (-3, 1.75) {};
		\node [style=none] (76) at (-3.75, 1.75) {};
		\node [style=none] (77) at (-12.5, 1) {};
		\node [style=none] (78) at (-11.5, 1.75) {};
		\node [style=none] (79) at (-12.5, 1.75) {};
		\node [style=none] (80) at (-13.5, 1.75) {};
		\node [style=none] (81) at (-12.5, 3.75) {Possessor};
		\node [style=none] (82) at (-12.5, 2.75) {};
		\node [style=none] (83) at (-12.5, 0.5) {$N$};
		\node [style=none] (84) at (-8.75, 3.75) {that};
		\node [style=none] (85) at (-3, 0.5) {$N$};
		\node [style=none] (86) at (-4.5, 0.5) {$N$};
		\node [style=none] (87) at (-3.75, 0.5) {$S$};
		\node [style=none] (88) at (-7, 0) {};
		\node [style=none] (89) at (-4.5, 0) {};
		\node [style=none] (90) at (-12.5, 0) {};
		\node [style=none] (91) at (-10.5, 0) {};
		\node [style=none] (92) at (-3, 0) {};
		\node [style=none] (93) at (-8.75, 0) {};
		\node [style=none] (94) at (-8.75, -1.25) {};
		\node [style=none] (95) at (-7.75, 0) {};
		\node [style=none] (96) at (-3.75, 0) {};
	\end{pgfonlayer}
	\begin{pgfonlayer}{edgelayer}
		\draw [style=thick] (0.center) to (1.center);
		\draw [style=thick] (1.center) to (2.center);
		\draw [style=thick] (2.center) to (0.center);
		\draw [style=thick] (3.center) to (4.center);
		\draw [style=thick] (5.center) to (3.center);
		\draw [style=thick] (5.center) to (4.center);
		\draw [style=thick] (8.center) to (7.center);
		\draw [style=thick] (7.center) to (6.center);
		\draw [style=thick] (6.center) to (8.center);
		\draw [style=thick, bend left=270, looseness=1.50] (20.center) to (21.center);
		\draw [style=thick] (18.center) to (25.center);
		\draw [style=thick] (10.center) to (26.center);
		\draw [style=thick] (11.center) to (27.center);
		\draw [style=thick] (12.center) to (28.center);
		\draw [style=thick] (16.center) to (29.center);
		\draw [style=thick, bend right=75, looseness=1.25] (24.center) to (30.center);
		\draw [style=thick, bend right=90] (23.center) to (31.center);
		\draw [style=thick, bend right=90] (22.center) to (32.center);
		\draw [style=thick] (36.center) to (39.center);
		\draw [style=thick, bend left=90, looseness=1.75] (44.center) to (46.center);
		\draw [style=thick] (33.center) to (47.center);
		\draw [style=thick, bend left=90, looseness=1.75] (35.center) to (36.center);
		\draw [style=thick, bend right=90, looseness=1.75] (46.center) to (35.center);
		\draw [style=thick] (44.center) to (42.center);
		\draw [style=thick] (40.center) to (45.center);
		\draw [style=thick] (55.center) to (63.center);
		\draw [style=thick, bend left=90, looseness=1.75] (65.center) to (57.center);
		\draw [style=thick] (64.center) to (56.center);
		\draw [style=thick, bend left=90, looseness=1.75] (54.center) to (55.center);
		\draw [style=thick, bend right=90, looseness=1.75] (57.center) to (54.center);
		\draw [style=thick] (65.center) to (61.center);
		\draw [style=thick] (66.center) to (53.center);
		\draw [style=thick] (67.center) to (73.center);
		\draw [style=thick] (70.center) to (67.center);
		\draw [style=thick] (70.center) to (73.center);
		\draw [style=thick] (74.center) to (71.center);
		\draw [style=thick] (76.center) to (68.center);
		\draw [style=thick] (75.center) to (72.center);
		\draw [style=thick] (80.center) to (78.center);
		\draw [style=thick] (78.center) to (82.center);
		\draw [style=thick] (82.center) to (80.center);
		\draw [style=thick] (79.center) to (77.center);
		\draw [style=thick, bend right=75, looseness=1.25] (88.center) to (89.center);
		\draw [style=thick, bend left=270, looseness=1.50] (90.center) to (91.center);
		\draw [style=thick, bend right=90, looseness=0.75] (92.center) to (9.center);
		\draw  [style=thick](93.center) to (94.center);
		\draw [style=thick, bend right=75, looseness=1.25] (95.center) to (96.center);
	\end{pgfonlayer}
\end{tikzpicture}}
\endpgfgraphicnamed}
\end{center}

\noindent
The above  normalises to the following:

\begin{center}
   {%
\beginpgfgraphicnamed{whose-nowhose-Obj-norm1}
\begin{tikzpicture}[scale=0.8]
	\begin{pgfonlayer}{nodelayer}
		\node [style=none] (0) at (-8.5, 0.25) {};
		\node [style=none] (1) at (-6.5, 0.25) {};
		\node [style=none] (2) at (-7.5, 1.25) {};
		\node [style=none] (3) at (-6.25, 0.25) {};
		\node [style=none] (4) at (-3.25, 0.25) {};
		\node [style=none] (5) at (-4.75, 1.5) {};
		\node [style=none] (6) at (-1.25, 1.25) {};
		\node [style=none] (7) at (-0.25, 0.25) {};
		\node [style=none] (8) at (-2.25, 0.25) {};
		\node [style=none] (9) at (-2.5, -2.75) {};
		\node [style=none] (10) at (-5.5, 0.25) {};
		\node [style=none] (11) at (-4.75, 0.25) {};
		\node [style=none] (12) at (-4, 0.25) {};
		\node [style=none] (13) at (-5.5, -1) {$N$};
		\node [style=none] (14) at (-4.75, -1) {$S$};
		\node [style=none] (15) at (-4, -1) {$N$};
		\node [style=none] (16) at (-1.25, 0.25) {};
		\node [style=none] (17) at (-1.25, -1) {$N$};
		\node [style=none] (18) at (-7.5, 0.25) {};
		\node [style=none] (19) at (-7.5, -1) {$N$};
		\node [style=none] (20) at (-7.5, -1.5) {};
		\node [style=none] (21) at (-5.5, -1.5) {};
		\node [style=none] (22) at (-4, -1.5) {};
		\node [style=none] (23) at (-1.25, -1.5) {};
		\node [style=none] (24) at (-7.5, -0.5) {};
		\node [style=none] (25) at (-5.5, -0.5) {};
		\node [style=none] (26) at (-4.75, -0.5) {};
		\node [style=none] (27) at (-4, -0.5) {};
		\node [style=none] (28) at (-1.25, -0.5) {};
		\node [fill=white, draw, thick, circle, minimum size=0.2 cm, style=none] (29) at (-2.5, -2.25) {};
		\node [style=none] (30) at (-7.5, 2.25) {Subject};
		\node [style=none] (31) at (-4.75, 2.25) {Verb};
		\node [style=none] (32) at (-1.25, 2.25) {Object};
		\node [style=none] (33) at (-7.5, -6) {};
		\node [style=none] (34) at (-11.25, 0.25) {};
		\node [style=none] (35) at (-12, -1.5) {};
		\node [style=none] (36) at (-11.25, 2.25) {has};
		\node [style=none] (37) at (-14, -0.5) {};
		\node [style=none] (38) at (-10.5, -1) {$N$};
		\node [style=none] (39) at (-14, 2.25) {Poss};
		\node [style=none] (40) at (-11.25, -1.5) {};
		\node [style=none] (41) at (-11.25, 1.75) {};
		\node [style=none] (42) at (-12.75, 0.25) {};
		\node [style=none] (43) at (-10.5, -2.75) {};
		\node [fill=white, draw, thick, circle, minimum size=0.2 cm, style=none] (44) at (-13, -2.25) {};
		\node [style=none] (45) at (-13, -2.25) {};
		\node [style=none] (46) at (-12, -1) {$N$};
		\node [style=none] (47) at (-11.25, -1) {$S$};
		\node [style=none] (48) at (-14, -1.5) {};
		\node [style=none] (49) at (-14, 0.25) {};
		\node [fill=white, draw, thick, circle, minimum size=0.2 cm, style=none] (50) at (-11.25, -2.75) {};
		\node [style=none] (51) at (-10.5, -0.5) {};
		\node [style=none] (52) at (-9.75, 0.25) {};
		\node [style=none] (53) at (-12, -0.5) {};
		\node [style=none] (54) at (-14, -1) {$N$};
		\node [style=none] (55) at (-10.5, 0.25) {};
		\node [style=none] (56) at (-13, 0.25) {};
		\node [style=none] (57) at (-12, 0.25) {};
		\node [style=none] (58) at (-11.25, -0.5) {};
		\node [style=none] (59) at (-15, 0.25) {};
		\node [style=none] (60) at (-14, 1.25) {};
		\node [style=none] (61) at (-13, -3.75) {};
		\node [fill=white, draw, thick, circle, minimum size=0.2 cm, style=none] (62) at (-13, -2.25) {};
		\node [style=none] (63) at (-10.5, -1.5) {};
		\node [fill=white, draw, thick, circle, minimum size=0.2 cm, style=none] (64) at (-4.75, -2.75) {};
		\node [style=none] (65) at (-4.75, -1.5) {};
		\node [style=none] (66) at (-2.5, -2.75) {};
	\end{pgfonlayer}
	\begin{pgfonlayer}{edgelayer}
		\draw [style=thick] (0.center) to (1.center);
		\draw [style=thick] (1.center) to (2.center);
		\draw [style=thick] (2.center) to (0.center);
		\draw [style=thick] (3.center) to (4.center);
		\draw [style=thick] (5.center) to (3.center);
		\draw [style=thick] (5.center) to (4.center);
		\draw [style=thick] (8.center) to (7.center);
		\draw [style=thick] (7.center) to (6.center);
		\draw [style=thick] (6.center) to (8.center);
		\draw [style=thick, bend right=90] (20.center) to (21.center);
		\draw [style=thick, bend right=75] (22.center) to (23.center);
		\draw [style=thick] (18.center) to (24.center);
		\draw [style=thick] (10.center) to (25.center);
		\draw [style=thick] (11.center) to (26.center);
		\draw [style=thick] (12.center) to (27.center);
		\draw [style=thick] (16.center) to (28.center);
		\draw [style=thick] (59.center) to (56.center);
		\draw [style=thick] (56.center) to (60.center);
		\draw [style=thick] (60.center) to (59.center);
		\draw [style=thick] (42.center) to (52.center);
		\draw [style=thick] (41.center) to (42.center);
		\draw [style=thick] (41.center) to (52.center);
		\draw [style=thick, bend right=75] (48.center) to (35.center);
		\draw [style=thick] (49.center) to (37.center);
		\draw [style=thick] (57.center) to (53.center);
		\draw [style=thick] (34.center) to (58.center);
		\draw [style=thick] (55.center) to (51.center);
		\draw [style=thick] (50.center) to (40.center);
		\draw  [style=thick] (45.center) to (61.center);
		\draw  [style=thick] (63.center) to (43.center);
		\draw [style=thick] (64.center) to (65.center);
		\draw  [style=thick] (29.center) to (66.center);
		\draw [style=thick, bend right=90, looseness=0.50] (43.center) to (9.center);
	\end{pgfonlayer}
\end{tikzpicture}}
\endpgfgraphicnamed}
\end{center}

\noindent
Using the equivalence result of the previous proposition on the `has' predicate, the above further normalises to the following:

\begin{center}
   {%
\beginpgfgraphicnamed{whose-nowhose-Obj-norm2}
\begin{tikzpicture}[scale=0.8]
	\begin{pgfonlayer}{nodelayer}
		\node [style=none] (0) at (-8.25, 3) {};
		\node [style=none] (1) at (-6.25, 3) {};
		\node [style=none] (2) at (-7.25, 4) {};
		\node [style=none] (3) at (-6, 3) {};
		\node [style=none] (4) at (-3, 3) {};
		\node [style=none] (5) at (-4.5, 4.25) {};
		\node [style=none] (6) at (-1.75, 4) {};
		\node [style=none] (7) at (-0.75, 3) {};
		\node [style=none] (8) at (-2.75, 3) {};
		\node [style=none] (9) at (-5.25, 3) {};
		\node [style=none] (10) at (-4.5, 3) {};
		\node [style=none] (11) at (-3.75, 3) {};
		\node [style=none] (12) at (-5.25, 1.75) {$N$};
		\node [style=none] (13) at (-4.5, 1.75) {$S$};
		\node [style=none] (14) at (-3.75, 1.75) {$N$};
		\node [style=none] (15) at (-1.75, 3) {};
		\node [style=none] (16) at (-1.75, 1.75) {$N$};
		\node [style=none] (17) at (-7.25, 3) {};
		\node [style=none] (18) at (-7.25, 1.75) {$N$};
		\node [style=none] (19) at (-3.75, 1.25) {};
		\node [style=none] (20) at (-1.75, 1.25) {};
		\node [style=none] (21) at (-7.25, 1.25) {};
		\node [style=none] (22) at (-5.25, 1.25) {};
		\node [style=none] (23) at (-7.25, 2.25) {};
		\node [style=none] (24) at (-5.25, 2.25) {};
		\node [style=none] (25) at (-4.5, 2.25) {};
		\node [style=none] (26) at (-3.75, 2.25) {};
		\node [style=none] (27) at (-1.75, 2.25) {};
		\node [fill=white, draw, thick, circle, minimum size=0.2 cm, style=none] (28) at (-2.75, 0.5) {};
		\node [style=none] (29) at (-7.25, 5) {Subject};
		\node [style=none] (30) at (-4.5, 5) {Verb};
		\node [style=none] (31) at (-1.75, 5) {Object};
		\node [style=none] (32) at (-2.75, -4) {};
		\node [style=none] (33) at (-10.25, 2.25) {};
		\node [style=none] (34) at (-10.25, 5) {Poss};
		\node [fill=white, draw, thick, circle, minimum size=0.2 cm, style=none] (35) at (-6.5, -5.75) {};
		\node [style=none] (36) at (-6.5, -5.75) {};
		\node [style=none] (37) at (-2.75, -0.5) {$N$};
		\node [style=none] (38) at (-10.25, -4) {};
		\node [style=none] (39) at (-10.25, 3) {};
		\node [style=none] (40) at (-10.25, 1.75) {$N$};
		\node [style=none] (41) at (-9.25, 3) {};
		\node [style=none] (42) at (-11.25, 3) {};
		\node [style=none] (43) at (-10.25, 4) {};
		\node [style=none] (44) at (-6.5, -6.5) {};
		\node [fill=white, draw, thick, circle, minimum size=0.2 cm, style=none] (45) at (-6.5, -5.75) {};
		\node [fill=white, draw, thick, circle, minimum size=0.2 cm, style=none] (46) at (-4.5, 0) {};
		\node [style=none] (47) at (-4.5, 1.25) {};
		\node [style=none] (48) at (-2.75, 0) {};
		\node [style=none] (49) at (-3.5, -1.5) {};
		\node [style=none] (50) at (-2, -1.5) {};
		\node [style=none] (51) at (-2, -2.5) {};
		\node [style=none] (52) at (-3.5, -2.5) {};
		\node [style=none] (53) at (-2.75, -2.5) {};
		\node [style=none] (54) at (-2.75, -3) {};
		\node [style=none] (55) at (-2.75, -2) {has};
		\node [style=none] (56) at (-2.75, -1.5) {};
		\node [style=none] (57) at (-2.75, -1) {};
		\node [style=none] (58) at (-10.25, 1.25) {};
		\node [style=none] (59) at (-10.25, -4) {};
		\node [style=none] (60) at (-2.75, -3.5) {$N$};
	\end{pgfonlayer}
	\begin{pgfonlayer}{edgelayer}
		\draw [style=thick] (0.center) to (1.center);
		\draw [style=thick] (1.center) to (2.center);
		\draw [style=thick] (2.center) to (0.center);
		\draw [style=thick] (3.center) to (4.center);
		\draw [style=thick] (5.center) to (3.center);
		\draw [style=thick] (5.center) to (4.center);
		\draw [style=thick] (8.center) to (7.center);
		\draw [style=thick] (7.center) to (6.center);
		\draw [style=thick] (6.center) to (8.center);
		\draw [style=thick, bend right=90] (19.center) to (20.center);
		\draw [style=thick, bend right=75] (21.center) to (22.center);
		\draw [style=thick] (17.center) to (23.center);
		\draw [style=thick] (9.center) to (24.center);
		\draw [style=thick] (10.center) to (25.center);
		\draw [style=thick] (11.center) to (26.center);
		\draw [style=thick] (15.center) to (27.center);
		\draw [style=thick] (42.center) to (41.center);
		\draw [style=thick] (41.center) to (43.center);
		\draw [style=thick] (43.center) to (42.center);
		\draw [style=thick] (39.center) to (33.center);
		\draw [style=thick] (36.center) to (44.center);
		\draw [style=thick] (46.center) to (47.center);
		\draw  [style=thick] (28.center) to (48.center);
		\draw  [style=thick] (49.center) to (52.center);
		\draw  [style=thick] (52.center) to (51.center);
		\draw  [style=thick] (51.center) to (50.center);
		\draw  [style=thick] (50.center) to (49.center);
		\draw  [style=thick] (53.center) to (54.center);
		\draw  [style=thick] (57.center) to (56.center);
		\draw  [style=thick] (58.center) to (59.center);
		\draw [style=thick, bend right=90, looseness=0.75] (38.center) to (32.center);
	\end{pgfonlayer}
\end{tikzpicture}}
\endpgfgraphicnamed}
\end{center}

\noindent
This is the same as the meaning of the phrase `Possessor whose Object Subject Verb'. 
\end{proof}

\section{Truth Theoretic Instantiations}
\label{truth-vect}

In this section, we provide a  truth-theoretic instantiation for our model.   This instantiation  is designed
only as a theoretical example and to show that vector spaces can be used to   recast set theoretic Montague-style semantics \cite{Montague}.

Take $N$ to be the vector space spanned by a set of individuals
$\{\ov{n}_i\}_i$  that are mutually orthogonal. For example, $\ov{n}_1$ represents the individual
Mary, $\ov{n}_{25}$ represents Roger the dog, $\ov{n}_{10}$ represents
John, and so on.  A sum of basis vectors in this space represents a
common noun; e.g. $\ov{man} = \sum_i \ov{n}_i$, where $i$ ranges over
the basis vectors denoting men. We take
$S$ to be the one dimensional space spanned by the single vector $\ov{1}$.   We set the 
unit vector spanning $S$ to represent the truth value 1 and  the zero vector
to represent the truth value 0. 

Since the sentence space is the real line, we also have  access to an infinite set of numbers (now represented by a vector, e.g. number 2 by vector $\ov{2}$, number 3 by vector $\ov{3}$ and so on). We interpret  these numbers  as \emph{truth weights}. In a very loose sense, the real line can be seen as a fuzzy set   in the style of fuzzy logic \cite{Zadeh}.  The intuitive reading of the truth of the predicates over the real  interval is that they apply to their arguments not in an absolute sense, but with a certain weight or degree. In other words, whereas in the world of sets,   a relation (representing a predicate) is either true or false about an element of a set, in our world, we have  weighted relations and a predicate may hold about an element with a certain weight or degree.

A transitive verb $w$, which is a vector in the space $N \otimes S
\otimes N$, is represented as follows:
\[
  \overline{w} := \sum_{ij} \ov{n}_i \otimes (\alpha_{ij} \ov{1})
  \otimes \ov{n}_j \quad \text{if} \  \ov{n}_i \ w\mbox{'s} \ \ov{n}_j \ \mbox{with degree}
\ \alpha_{ij}, \ \mbox{for all} \ i,j
  \]
This means that a verb may apply to the subject and object not in an absolute way, that is to say, not only that either they are related via the relation represented by the verb or not, but  that they are related with a certain degree. For instance,  the subject John and the object Mary might not have an absolute love relationship with each other, but might love each other to a certain extent, that is to say, with a certain degree. 

Further, since $S$ is one-dimensional with its only basis vector being
$\ov{1}$, the transitive verb can be represented by the following element of $N
\otimes N$:
\[
\sum_{ij}  \alpha_{ij}  \ov{n}_i \otimes\ov{n}_j \quad \text{if} \quad
\ov{n}_i \ w\mbox{'s} \ \ov{n}_j \ \mbox{with degree}
\ \alpha_{ij}
\]
\noindent
Restricting to either   $\alpha_{ij} = 1$ or $\alpha_{ij} = 0$ provides a 0/1 meaning, i.e.  either
$\ov{n}_i$ \ w's \ $\ov{n}_j$ or not. Letting $\alpha_{ij}$ range over  the interval 
$[0,1]$ enables us to represent degrees as well as limiting cases of
truth and falsity.  For example, the verb ``love'', denoted by $\overline{love}$, is represented by $
 \sum_{ij}\alpha_{ij}  \ov{n}_i \otimes 
\ov{n}_j \quad \text{if} \quad \ov{n}_i \ \text{loves} \ \ov{n}_j \mbox{with degree} \ 
\alpha_{ij}$. 
If we take $\alpha_{ij}$ to be $1$ or $0$, from the above we obtain $
\sum_{(i,j) \in  R_{love}} \ov{n}_i
\otimes \ov{n}_j$, 
  where  $R_{love}$  is the set of all pairs $(i,j)$ such that $\ov{n}_i$ loves $\ov{n}_j$.  Note that, with this definition, the sentence space has already been
discarded, and so for this instantiation the $\iota$ map, which is the
part of the relative pronoun interpretation designed to discard the
relative clause after it has acted on the head noun, is not required.

The ownership morphism which is the map $\overline{\text{'s}} \colon N \to N$ is represented as follows:
\[
\overline{\text{'s}}(\ov{n}_{h'}) =  \sum_{h'' \in O} \ov{n}_{h''}  \quad 
\mbox{whenever $\ov{n}_{h''} $\  is  an owner of \ $\ov{n}_{h'}$} 
\]

For common nouns $\ov{\text{Subject}} = \sum_{k\in K} \ov{n}_k$, 
$\ov{\text{Object}} = \sum_{l\in L} \ov{n}_l$, and $\ov{\text{Possessor}} =  \sum_{h \in P} \ov{n}_h$,  where $k, l$, and $h$ range
over the sets of basis vectors representing the respective common
nouns, the truth-theoretic meaning of a noun phrase modified by a
possessive subject relative clause is computed as follows:

{\small
\begin{align*}
&\ov{\mbox{Possessor whose Subject Verb Object}} :=\\
& \mu_N  \circ \left(1_N \otimes \overline{\text{'s}} \right)\circ \left(1_N \otimes \mu_N \otimes \epsilon_N\right) \left(\ov{\text{Poss}} \otimes \ov{\text{Sub}} \otimes \ov{\text{Verb}}  \otimes  \ov{\text{Obj}}\right)=\\
&\mu_N  \circ \left(1_N \otimes \overline{\text{'s}} \right)\circ \left(1_N \otimes \mu_N \otimes \epsilon_N\right) \left(\sum_{h \in P} \ov{n}_h  \otimes \sum_{k\in K} \ov{n}_k  \otimes   (\sum_{ij} \alpha_{ij} \ov{n}_i \otimes\! \ov{n}_j)  \otimes  \sum_{l\in L} \ov{n}_l \right)=
\\
&\mu_N  \circ \left(1_N \otimes \overline{\text{'s}} \right)
\left(
\sum_{h \in P} \ov{n}_h   \otimes  \sum_{ij,k\in K, l \in L}  \alpha_{ij} \mu_N(\ov{n}_k \otimes\ov{n}_i)
\otimes   \epsilon_N(\ov{n}_j \otimes \ov{n}_l)
\right)=
\\
&\mu_N  \circ \left(1_N \otimes \overline{\text{'s}} \right)
\left(
\sum_{h \in P} \ov{n}_h   \otimes \sum_{ij,k\in K, l \in L} \alpha_{ij}  \delta_{ki}\ov{n}_i 
 \delta_{jl} \right)=\\
 & \mu_N  \circ \left(1_N \otimes \overline{\text{'s}} \right)
\left(
\sum_{h \in P} \ov{n}_h   \otimes \sum_{k\in K, l \in L}  \alpha_{kl} \ov{n}_k\right) =
\\
&\mu_N \left(\sum_{h \in P} \ov{n}_h   \otimes \sum_{k\in K, l \in L}  \alpha_{kl} \ \overline{\text{'s}}(\ov{n}_k)\right)=  \mu_N \left(\sum_{h \in P} \ov{n}_h   \otimes \sum_{k'\in O, l \in L}  \alpha_{k'l} \ \ov{n}_{k'}\right)\\
&\\
&=\sum_{h \in P, k' \in O, l \in L}  \alpha_{k'l} \ \mu_N(\ov{n}_h \otimes \ov{n_{k'}})= \sum_{h \in P, k' \in O, l \in L}  \alpha_{k'l}  \delta_{hk'}\ov{n}_{k'}= \sum_{h \in P,l \in L}  \alpha_{hl}  \ov{n}_{h}
\end{align*}
}

 The result is as it should be:  the sum of the possessor individuals who own the subject individuals  weighted by the degree with which the subjects have acted on the object
individuals via the verb.  A similar computation, with the difference
that the $\mu$ and $\epsilon$ maps are swapped and the ownership morphism $\overline{\text{'s}}$ acts on the object, provides the
truth-theoretic semantics of the possessive object relative clause, which will end up being $\sum_{h \in P, k\in K}  \alpha_{hk} \ov{n}_h$. 

As  an example consider
the possessive subject relative clause ``authors whose books entertained John''.  Take $N$ to be the vector space spanned by the set of all people, including authors,  and objects, and books.  The authors form a subspace  on $N$ whose basis vectors are  denoted by $\ov{a}_{h}$, where $h$ ranges over the set of authors denoted by $P$.  The books  form another subspace  whose basis vectors are denoted by $\ov{b}_k$, where $k$ ranges over the set of books denoted by $K$. Hence, ``authors'' and ``books'' are common nouns $\sum_{h \in P} \ov{a}_h$ and $\sum_{k \in K} \ov{b}_k$.  The transitive verb ``entertain'' is defined by $\sum_{(i,j) \in  R_{entertain}} \ov{b}_i \otimes \ov{n}_j$\,.
Now set ``John'' to
be the individual $\ov{n}_1$ and assume that $\ov{n}_8$ is  another individual which is not necessarily an author or a book.  The above common nouns, verb, and ownership relation are instantiated  as follows:
\begin{eqnarray*}
\ov{\text{authors}} &=& \ov{a}_5 + \ov{a}_6 + \ov{a}_7\\
\ov{\text{books}} &=& \ov{b}_2+ \ov{b}_3 + \ov{b}_4\\
\overline{\text{entertain}} &=&
(\ov{b}_2 \otimes \ov{n}_1) + (\ov{b}_3 \otimes \ov{n}_1) + (\ov{b}_4 \otimes \ov{n}_2) + (\ov{n}_5 \otimes \ov{n}_2)\\
\overline{\text{'s}} (\ov{b}_2) &=& \ov{a}_5 + \ov{a}_6\quad
\overline{\text{'s}} (\ov{b}_3) = \ov{n}_2
\quad
\overline{\text{'s}} (\ov{b}_b) = \ov{a}_8
\end{eqnarray*}
The vector corresponding to the meaning
of ``authors whose books entertained John'' is computed as follows:

{\small
\begin{align*} 
&\ov{\mbox{authors whose books entertained John}} := \\
&  \mu_N  \circ \left(1_N \otimes \overline{\text{'s}} \right)\circ \left(1_N \otimes \mu_N \otimes \epsilon_N\right) \left(\sum_{h \in P} \ov{a}_h \otimes \sum_{k \in K} \ov{b}_k \otimes  (\sum_{(i,j) \in  R_{entertain}}
\ov{b}_i \otimes \ov{n}_j) \otimes \ov{n}_1\right)=\\
&  \mu_N  \circ \left(1_N \otimes \overline{\text{'s}} \right)
 \left(\sum_{h \in P} \ov{a}_h \otimes
 \sum_{k \in K, (i,j) \in R_{entertain}} \mu_N(\ov{b}_k \otimes \ov{b}_i) \epsilon_N(\ov{n}_j \otimes \ov{n}_1)
 \right)=\\
 & \mu_N  \circ \left(1_N \otimes \overline{\text{'s}} \right)
 \left(\sum_{h \in P} \ov{a}_h \otimes
\sum_{(i,j) \in R_{entertain}} \delta_{ki} \ov{b}_i \delta_{j1}
 \right)=\\
 &\mu_N  \circ \left(1_N \otimes \overline{\text{'s}} \right)
 \left(\sum_{h \in P} \ov{a}_h \otimes (\ov{b}_2 + \ov{b}_3)\right)=\\
 &\mu_N \left(\sum_{h \in P} \ov{a}_h \otimes (\overline{\text{'s}}(\ov{b}_2) + \overline{\text{'s}}(\ov{b}_3))\right) = \mu_N \left(\sum_{h \in P} \ov{a}_h \otimes(\ov{a}_5 + \ov{a}_6 + \ov{n}_2)\right)=\\
 &\\
 & \sum_{h \in P} \mu_N(\ov{a}_h \otimes (\ov{a}_5 + \ov{a}_6 + \ov{n}_2)) = \ov{a}_5 + \ov{a}_6
\end{align*}
}
As expected,  the result is the sum of the author basis vectors who wrote books that entertained John.

The verb `entertain` can also have degrees of truth, for example instantiated as follows:
\[
1/6(\ov{b}_2 \otimes \ov{n}_1) + 2/6(\ov{b}_3 \otimes \ov{n}_1) + 2/6(\ov{b}_4 \otimes \ov{n}_2) + 1/6(\ov{n}_5 \otimes \ov{n}_2)
\]
In this case the result of the above phrase will be as follows:
\[
1/6(\ov{a}_5 + \ov{a}_6)
\]
Intuitively, this means that we are not just considering the set of authors whose books entertained John, but those elements of this set, that is those authors,  whose books  have entertained John with a certain weight, namely 1/6. For this example, these are authors $a_5$ and $a_6$. 

\section{Embedding Predicate Semantics}
\label{embed}

In this section, we provide a set-theoretic interpretation for relative  clauses according to the constructions discussed in \cite{Sag}.  In the prequel to this paper, we presented the case of subject and object clauses, here we extend that setting to possessive clauses.  We start by fixing   a  universe of elements ${\cal U}$. A proper noun is an individual (i.e. an element) in  this set;  a common noun is  the set of the individuals that have the property expressed by the common noun;  hence common nouns are unary predicates over ${\cal U}$.  An intransitive verb is the set of all individuals that are acted upon by the relationship expressed by the verb, hence these are  unary predicates over  ${\cal U}$. A transitive verb is the set of pairs of individuals that are related by the relationship expressed by the verb; hence   these  are binary predicates over ${\cal U} \times {\cal U}$.  
Here is an  example for each case:
\begin{align*}
\semantics{Mary} &= \{u_1\}\\
\semantics{Author} &= \{x \in {\cal U} \mid "x \  \mbox{is an author}"\}\\
\semantics{Sleep} &= \{x \in {\cal U}  \mid "x  \  \mbox{sleeps}"\}\\
\semantics{Entertain} &= \{(x,y) \in {\cal U} \times {\cal U} \mid "x \  \mbox{entertains} \ y"\}
\end{align*}
The subject and object relative clauses are interpreted as follows:
\begin{eqnarray*}
&&\left\{x \in {\cal U}   \mid x \in \semantics{Subj},  (x, y) \in \semantics{Verb},  \  \mbox{for all} \ y \in\semantics{Obj}  \right\}\\
&&
\left\{y  \in {\cal U} \mid y \in \semantics{Obj},   (x, y) \in \semantics{Verb},   \ \mbox{for all} \ x \in \semantics{Subj} \right\}
\end{eqnarray*}
These clauses pick out individuals which belong to the intepretation of  subject/object and which  are in the relationship expressed by the verb with all elements of the interpretation of object/subject.  Examples are  `men who love Mary' and `men whom cats love', interpreted as follows, for $x \in {\cal U}$:
\begin{eqnarray*}
&&\left\{x \in {\cal U} \mid x \in \semantics{men},   (x, y) \in \semantics{Love},  \ \mbox{for all} \ y \in \semantics{Mary}\right\} \\
&&\left\{y \in {\cal U}  \mid y \in \semantics{men},   (x, y) \in \semantics{Love},  \ \mbox{for all} \ x \in \semantics{cat}\right\} 
\end{eqnarray*}

\noindent
The  possessive  relative clauses are interpreted as follows:
\begin{align*}
&\left\{x \mid  x \in \semantics{Poss},  (x, y) \in \semantics{Has},  \ \mbox{for all} \  y \in \semantics{Subj},(y, z) \in \semantics{Verb},  \ \mbox{for all} \ z \in \semantics{Obj}\right\}\\
&\left\{x \mid x \in \semantics{Poss}, (x, y) \in \semantics{Has},  \ \mbox{for all} \   y \in \semantics{Obj}, 
(z, y) \in \semantics{Verb}, \ \mbox{for all} \ z \in \semantics{Subj}\right\}
\end{align*}
In the above,  $Has$ is the predicate which represents the ownership relation, defined as follows:
\[
\semantics{Has} = \{(x,y) \mid "x \  \mbox{has} \ y"\}
\]
 For example, the clause  `author whose book entertained John' is interpreted as follows, where since $\semantics{John}$ is a singleton we are able to abbreviate the interpretation:
\begin{align*}
\{x \mid  x \in \semantics{Author},  (x, y) \in \semantics{Has}, \ \mbox{for all} \  y \in \semantics{Book}, (y, \semantics{John}) \in \semantics{Entertain}\}&
\end{align*}

We obtain  the vector space forms of the above predicate interpretations  by developing a map from the category of sets and relations to the category of finite dimensional vector spaces  with a fixed orthogonal basis and linear maps over them, that is a map with types as follows:
\[
e\colon Rel \hookrightarrow FVect
\]
This  map  turns a set $T$  into a vector space $V_T$ spanned by the elements of $T$ and an element $t \in T$   into a basis vector $\ov{t}$ of $V_T$. A subset $W$ of $T$ is turned into  the sum vector of its elements, i.e. $\sum_i \ov{w}_i$, where $w_i$'s enumerate over  elements of $W$.  A relation $R \subseteq T \times T'$ is turned into  a linear map  from $T$ to $T' $, or equivalently  as an element of the space   $T \otimes T'$;  we represent this element by the sum of its basis vectors $\sum_{ij} \ov{t}_i \otimes \ov{t'}_j$, here $t_i$'s enumerate  elements of  $T$ and $t'_j$'s enumerate  elements of $T'$. 

The application of a relation  $R \subseteq T \times T'$  to its arguments  is turned   into   the inner product of the vector space forms of the relation and the arguments. In our sum representation, $R(a)$ when $a \in T$ and $R^{-1}(b)$ when $b \in T'$ are turned into  the following: 
\[
\sum_{ij} \langle a \mid \ov{t}_i \rangle \ov{t'}_j 
\hspace{2cm}
\sum_{ij}  \ov{t}_i \langle  \ov{t'}_j \mid b \rangle
\]
To check wether an element $t$ is in the set $T$, we  use the Frobenius  operation $\mu$ on $T$, that is $\mu \colon T \otimes T \to T$,  as follows:
\[
t \in T \quad \mbox{whenever} \quad \mu(\ov{t}, \sum_i \ov{t}_i) = \ov{t}
\]
The intersection of two sets $T, T'$ is computed by generalising the above  via  the Frobenius $\mu$ operation on  the whole universe, that is $\mu \colon {\cal U} \otimes {\cal U} \to {\cal U}$, or on a set  containing both of these sets, e.g. $T \cup T'$, that is  $\mu \colon (T \cup T') \otimes (T \cup T') \to (T \cup T')$.  In either case, for $\sum_i \ov{t}_i$ and $\sum_j \ov{t'}_j$ representations of $T$ and $T'$ we obtain the following for the intersection:
\[
T \cap T'  \ := \ \mu(\sum_i \ov{t}_i, \sum_j \ov{t'}_j)
\]
We are now ready to show that the vector space forms of the predicate interpretations of relative clauses, developed along side the constructions described above,  provide us with the same semantics as the truth-theoretic vector space semantics developed in Section \ref{truth-vect}. 
\begin{proposition}
The map $e$, described above,  provides a 0/1 truth-theoretic interpretation for possessive relative clauses. 
\end{proposition}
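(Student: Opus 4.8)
The plan is to show that when the map $e$ is applied to the set-theoretic interpretations of possessive relative clauses given above, and when all the transitive-verb weights $\alpha_{ij}$ are restricted to $\{0,1\}$, the resulting vectors coincide with those computed in Section~\ref{truth-vect}. I would treat the subject and object cases in parallel, since the object computation differs from the subject one only by swapping the roles of $\mu_N$ and $\epsilon_N$ and by letting $\overline{\text{'s}}$ act on the object slot, exactly as noted after the subject computation in Section~\ref{truth-vect}.

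First I would spell out the image under $e$ of each ingredient of the set-theoretic clause `author whose book entertained John' (in the general form `Possessor whose Subject Verb Object'): the common nouns $\semantics{Poss}, \semantics{Subj}, \semantics{Obj}$ become the sum vectors $\sum_{h\in P}\ov{n}_h$, $\sum_{k\in K}\ov{n}_k$, $\sum_{l\in L}\ov{n}_l$; the transitive verb $\semantics{Verb}$ becomes $\sum_{(i,j)\in R_{\text{Verb}}}\ov{n}_i\otimes\ov{n}_j$, which is exactly the $0/1$ specialisation $\alpha_{ij}\in\{0,1\}$ of the truth-theoretic verb of Section~\ref{truth-vect}; the ownership predicate $\semantics{Has}$ composed with the quantification over owners becomes the linear map $\overline{\text{'s}}\colon N\to N$. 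Next I would observe that the two constructs used in the set-theoretic clause, namely `checking membership / taking intersection' and `applying a relation to its argument', are turned by $e$ into the Frobenius map $\mu$ and the compact-closed map $\epsilon$ respectively, as set out in the definitions of the map $e$ just before the proposition. Reading off the nested comprehension $\{x\mid x\in\semantics{Poss},\ (x,y)\in\semantics{Has},\ \text{for all }y\in\semantics{Subj},\ (y,z)\in\semantics{Verb},\ \text{for all }z\in\semantics{Obj}\}$ from the inside out then produces precisely the morphism
\[
\mu_N\circ(1_N\otimes\overline{\text{'s}})\circ(1_N\otimes\mu_N\otimes\epsilon_N)\bigl(\ov{\text{Poss}}\otimes\ov{\text{Sub}}\otimes\ov{\text{Verb}}\otimes\ov{\text{Obj}}\bigr),
\]
which is the $0/1$ truth-theoretic meaning computed in Section~\ref{truth-vect} (recall that in that section the $\iota_S$/sentence-discarding step is already absent because $S$ has been collapsed). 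The object case is handled identically, yielding $\mu_N\circ(\overline{\text{'s}}\otimes 1_N)\circ(\epsilon_N\otimes\mu_N\otimes 1_N)$ applied in the order $\ov{\text{Sub}}\otimes\ov{\text{Verb}}\otimes\ov{\text{Obj}}\otimes\ov{\text{Poss}}$.

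To close the argument I would run the same index-chasing calculation already displayed in Section~\ref{truth-vect}: $\epsilon_N$ contributes the Kronecker delta $\delta_{j1}$ (or $\delta_{jl}$ summed over $l\in L$) that selects those verb-pairs whose second component lies in the object set, $\mu_N$ with the subject sum contributes $\delta_{ki}$ selecting those first components lying in the subject set, $\overline{\text{'s}}$ then maps each surviving subject basis vector to the sum of its owners, and the final $\mu_N$ with the possessor sum keeps only those owners that are themselves possessors — giving $\sum_{h\in P,\,l\in L}\alpha_{hl}\ov{n}_h$ with $\alpha_{hl}\in\{0,1\}$, i.e. the characteristic vector of exactly the set picked out by the comprehension. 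I expect the only real subtlety — and hence the main thing to get right rather than a genuine obstacle — is bookkeeping the correspondence between the two universal quantifiers in the comprehension and the two "copying"/summation steps: one has to check that summing the verb relation against $\sum_{l\in L}\ov{n}_l$ really does encode "$(y,z)\in\semantics{Verb}$ for all $z\in\semantics{Obj}$" in the $0/1$ case (it does, because in the $0/1$ world $\epsilon_N(\ov{n}_j\otimes\sum_l\ov{n}_l)=\sum_l\delta_{jl}$ is nonzero exactly when $j$ is an object, and the surviving coefficient is then $1$), and similarly for the possessor/owner quantifier via the two $\mu_N$'s. Once this matching is made explicit, both propositions follow by direct comparison with the computation of Section~\ref{truth-vect}.
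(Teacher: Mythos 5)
Your proposal is correct and follows essentially the same route as the paper: both convert the set comprehension into nested intersections and relational (inverse) images, map these via $e$ to the Frobenius $\mu$ and the inner-product $\epsilon$, and then run the Kronecker-delta index chase to recover the Section~\ref{truth-vect} term with $\alpha_{ij}\in\{0,1\}$, treating the object case by the symmetric swap. Even the quantifier-bookkeeping subtlety you flag is the same point the paper makes about the implicit quantification carried by the relational image $R[T]$.
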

\begin{proof}
The first step is to instantiate the proper and common nouns, the intransitive and transitive verbs. For this, we set the universe $\cal U$ to be the  individuals representing  the nouns of the language, then the  map $e$  instantiates as follows:
\begin{itemize}
\item The universe ${\cal U}$ becomes the vector space $V_{\cal U}$, spanned by its elements. So an element $u_i \in {\cal U}$ becomes a basis vector $\ov{u}_i$ of $V_{\cal U}$.  
\item A proper noun $a \in {\cal U}$ becomes a basis vector $\ov{u}_a$ of $V_{\cal U}$. 
\item A common noun $c \subseteq {\cal U}$ becomes a  subspace of $V_{\cal U}$, represented by the sum of  the representations of its elements $\ov{u}_i$, that is
\[
c \hookrightarrow \ov{c} \quad :: \quad \sum_i \ov{u}_i \quad \text{for \ all} \ u_i \in c
\]
\item A transitive verb $w \subseteq {\cal U} \times {\cal U}$ becomes a linear map $\overline{w}$ in $V_{\cal U} \otimes V_{\cal U}$, represented by the sum of its basis vectors  $\ov{u}_i, \ov{u}_j$, which in this case are pairs of its elements $(u_i, u_j)$, that is
\[
w \hookrightarrow \overline{w} \quad :: \quad \sum_{ij} \ov{u}_i \otimes \ov{u}_j \quad \text{for \ all} \ (u_i, u_j) \in w
\]
\end{itemize}
The next step is to use the above definitions and develop the vector space forms of  the predicate interpretations of the relative clauses. To do so, we turn these interpretations into intersections of subsets and check the membership relation  via the $\mu$ map. As for  the  application of the predicates to the interpretations of their arguments, we use their relational images on subsets. That is,  for $R \subseteq {\cal U} \times {\cal U}$ and $T \subseteq {\cal U}$, we work with $R[T]$, defined by $R[T] = \{t' \in {\cal U}\mid \  (t, t') \in R \ \text{for}  \ t \in T\}$. This form of application has an implicit universal quantification: it consists of all the elements of ${\cal U}$ that are related to some element of $T$. Hence, the intersection forms  take care of  the quantification present in the predicate semantics of the relative clauses without having to explicitly use it.  

The predicate interpretation of the possessive subject clause is equivalent to the following intersection form:
\[
\semantics{Poss} \cap \semantics{Has}^{-1} \left[\semantics{Verb}^{-1}[\semantics{Obj}] \cap \semantics{Subj}\right]
\]
The vector space interpretation of the above is as follows:
\[
V_{\semantics{Poss} \cap \semantics{Has}^{-1} \left[\semantics{Verb}^{-1}[\semantics{Obj}] \cap \semantics{Subj}\right]}=
 \mu(V_{\semantics{Poss}}, V_{\semantics{Has}^{-1} \left[\semantics{Verb}^{-1}[\semantics{Obj}] \cap \semantics{Subj}\right]})
\]

\noindent
For the right hand side, we first compute the following:
{\small
\begin{eqnarray*}
V_{\semantics{Verb}^{-1}[\semantics{Obj}] \cap \semantics{Sbj}} &=&
\mu(V_{\semantics{Verb}^{-1}[\semantics{Obj}] }, V_{ \semantics{Sbj}})\\
= \mu(\sum_{ij} \ov{n}_i \langle \ov{n}_j \mid V_{\semantics{Obj}}\rangle, V_{\semantics{Sbj}}) &=&
\mu(\sum_{ij} \ov{n}_i \langle \ov{n}_j \mid \sum_{l \in L} \ov{n}_l \rangle, \sum_{k \in K} \ov{n}_k)\\
&=& \sum_{ij,k \in K, l \in L} \delta_{ki} \ov{n}_i \delta_{jl} 
\end{eqnarray*} }
This term is a sum of $\ov{n}_i$'s  whenever $k=i$ and $j=l$ and  provides us with subjects that have been modified by the verb phrase Verb-Obj. We denote these by $\sum_a \ov{n}_a$. Next, we insert this term in $V_{\semantics{Has}^{-1} \left[\semantics{Verb}^{-1}[\semantics{Obj}] \cap \semantics{Subj}\right]}$  and proceed the computation as follows:

\[
V_{\semantics{Has}^{-1} \left[\semantics{Verb}^{-1}[\semantics{Obj}] \cap \semantics{Subj}\right]} = 
\sum_{h'h''} \ov{n}_{h'} \langle \ov{n}_{h''} \mid \sum_a \ov{n}_a\rangle = 
\sum_{h'h''} \ov{n}_{h'} \delta_{h''a}
\]
This term is a sum of $\ov{n}_{h'}$'s whenever $h''=a$, that is whenever the individuals that are possessed have been modified by the Verb-Obj phrase. It provides us with the owners of such individuals. We denote them by $\sum_b \ov{n}_b$.  We insert this in $ \mu(V_{\semantics{Poss}}, V_{\semantics{Has}^{-1} \left[\semantics{Verb}^{-1}[\semantics{Obj}] \cap \semantics{Subj}\right]})$ and finish the computation as follows:
\[
 \mu(V_{\semantics{Poss}}, V_{\semantics{Has}^{-1} \left[\semantics{Verb}^{-1}[\semantics{Obj}] \cap \semantics{Subj}\right]}) =
 \mu(\sum_{h \in P} \ov{n}_h, \sum_b \ov{n}_b) = 
 \sum_{h \in P} \delta_{hb} \ov{n}_h
\]
This is a sum of $\ov{n}_h$'s whenever $h=v$, it provides us with the possessors that own individuals that have been modified by Verb-Obj. This is the same term  as the one obtained in Section \ref{truth-vect} for the case when all $\alpha$ terms are equal to 1.  It provides us with the 0/1 truth-theoretic semantics of the subject possessive clause. 

The case of the object possessive  clause is computed similarly. Here, we need to compute the vector space interpretation of the following intersection form:
\[
\semantics{Poss} \cap \semantics{Has}^{-1} \left[\semantics{Verb}[\semantics{Subj}] \cap \semantics{Obj}\right]
\]
In this case, the above  provides us with the possessors that own individuals that have been modified by Subj-Verb phrases, equal to the term obtained in Section \ref{truth-vect} for the case when $\alpha$ terms are equal to 1, hence providing us with the 0/1 truth-theoretic semantics of the clause. 

\end{proof}

Finally, it is easy to see that the decomposion of `whose', as done in Section \ref{decomp}, provides us with the same intersection forms. The intersection form of   `Possessor  that has  Subject that Verb Object' is computed as follows. First we compute the interpretation  of `Subject that Verb Object', as demonstrated in previous work\cite{SadrClarkCoecke}, this is as follows:
\[
\semantics{Sbj} \cap \semantics{Verb}^{-1}[\semantics{Obj}]
\]
Next, we compute   the interpretation of  `Possessor that has X' and obtain:
\[
\semantics{Pos} \cap \semantics{Has}^{-1}[\semantics{X}]
\]
Finally, we substitute $\semantics{Sbj} \cap \semantics{Verb}^{-1}[\semantics{Obj}]$ for $X$ in the above and obtain:
\[
\semantics{Pos} \cap \semantics{Has}^{-1}[\semantics{Sbj} \cap \semantics{Verb}^{_1}[\semantics{Obj}]]
\]
This is the same as the intersection form of  the possessive subject form. A similar computation shows that the intersection form of 
`Possessor that has  Object that Subject Verb' is the same as that of possessive object form.

\section{Concrete Instantiations}
\label{conc}

In the model of  Grefenstette and
Sadrzadeh (2011a) \cite{GrefenSadr1}, the meaning of a verb is taken
to be ``the degree to which the verb relates properties of its
subjects to properties of its objects''. This degree is
computed by forming the sum of the tensor products of the subjects and
objects of the verb across a corpus, where $w$ ranges over instances
of the verb:
\[
\overline{\text{verb}} = \sum_w (\ov{\text{sbj}} \otimes \ov{\text{obj}})_w
\]
The above is a matrix in $N\otimes N$. Since the verbs of this model do not have a sentence dimension,   no $\iota$ map will be needed in computing the meanings of the classes containing them.  

It remains to find a concrete matrix interpretation for $\overline{\text{'s}}$.  For instance, to compute the meaning vector of  "woman whose husband died", we need to know the  "owners" (in a manner of speaking) of the husbands who have died, to be able to  modify the vector of the possessor "woman" with it. Hence,  $\overline{\text{'s}}$  should be  the linear map that inputs a noun phrase and returns their owners. One  way to construct this map  for a noun phrase $X$  is to  sum  over the  nouns that have occurred  in   "noun's $X$". That is, for ${X}$ the subject or object of the relative clause,  we have
\[
\overline{\text{'s}}(X) := \sum_i (\ov{\text{noun}}) _i \qquad \mbox{for each  "noun's $X$" in the corpus}
\]

The abstract vectors corresponding to these diagrams are similar to the
truth-theoretic case, with the difference that the vectors are
populated from corpora and the scalar weights for noun vectors are not
necessarily 1 or 0. For possessor, subject,  and object noun context
vectors computed from a corpus as follows:
\[
\ov{\text{Possessor}} = \sum_h C_h \ov{n}_h\qquad
\ov{\text{Subject}} = \sum_k C_k \ov{n}_k \qquad
\ov{\text{Object}} = \sum_l C_l \ov{n}_l
\]
and the verb and $\overline{\text{'s}}$  linear maps as follows:
\[
\overline{\text{Verb}} = \sum_{ij} C_{ij} \ov{n}_i \otimes \ov{n}_j
\qquad
\overline{\text{'s}} (X) = \sum_{h'} C_{h'} \ov{n}_{h'}
\]
The concrete meaning of a noun phrase modified by a possessive subject relative
clause is as follows:
{\small
\begin{align*}
&\mu \circ \left(1_N \otimes \overline{\text{'s}}\right)\left(\sum_h C_h \ov{n}_h \otimes \sum_{kl} C_{kl} C_k C_l \ov{n}_k\right)=\mu\left(\sum_h C_h \ov{n}_h \otimes \overline{\text{'s}}\left(\sum_{kl}C_{kl} C_k C_l \ov{n}_k\right)\right)\\
&=\mu\left(\sum_h C_h \ov{n}_h \otimes  \sum_{h'} C_{h'}\ov{n}_{h'}\right) =  \sum_{hh'} C_h C_{h'} \delta_{hh'} \ov{n}_h
\end{align*}}
The difference with the truth-theoretic case is that  the degrees of the truth of the verb and the coordinates  of the subject, object, and possessor vectors are now obtained from a corpus and result  in  $C_h C_{h'}$. 

To see how the above vector represents the
meaning of the modified noun phrase, recall from Section \ref{sec:frob} that the application of the $\mu$ map to the tensor product of two vectors is their component-wise multiplication, that is the above is equivalent  to the following:
\[
\sum_h C_h \ov{n}_h \odot  \sum_{h'} C_{h'}\ov{n}_{h'} 
\]
Note that the second term of the above is the  subject modified by the verb-object phrase. Hence the above vector modifies the possessor with the subjects that have been themselves modified by the verb-object phrase  via point-wise multiplication. A similar result holds for the possessive object relative clause case.

As an example, suppose that $N$ has two dimensions with basis vectors
$\ov{n}_1$ and $\ov{n}_2$, and consider the noun phrase ``men whose dog 
bites cats''.  Define the vectors of ``dog'', ``men'', and ``cats" as follows:
\[
\ov{\text{dog}} = d_1 \ov{n}_1 + d_2 \ov{n}_2\qquad
\ov{\text{men}} = m_1 \ov{n}_1 + m_2 \ov{n}_2\qquad
\ov{\text{cats}} = c_1 \ov{n}_1 + c_2 \ov{n}_2
\]
and the matrix of ``bites'' by:
\[ 
 b_{11} \ov{n}_1 \otimes \ov{n}_2 + b_{12} \ov{n}_1 \otimes \ov{n}_2 + b_{21} \ov{n}_2 \otimes \ov{n}_1 + b_{22} \ov{n}_2 \otimes \ov{n}_2
\]
 Then the meaning of the clause becomes:
\[ 
\ov{\text{men}} \odot \overline{\text{'s}} \left(\ov{dogs} \quad  \odot \quad \left(\begin{array}{cc}
b_{11} & b_{12}\\
b_{21} & b_{22}
\end{array}\right)  \ \times \ \left(\begin{array}{c}c_1\\c_2\end{array}\right)\right)
\]
Assuming that the basis vectors of the noun space represent properties of nouns, the meaning of ``men whose dog bites cats'' is a vector representing the properties of men, which
have been modified (via multiplication) by the properties of owners of 
dogs which bite cats. Put another way, those properties of men
which overlap with properties of owner's of dogs that bite cats  get accentuated. Similarly, for the clause ``men whose dogs cats bite" we obtain the following linear algebraic form:
\[ 
\ov{\text{men}} \odot \overline{\text{'s}} \left(\ov{dogs} \quad \odot \quad \left(\begin{array}{cc}c_1&c_2\end{array}\right) \ \times \  \left(\begin{array}{cc}
b_{11} & b_{12}\\
b_{21} & b_{22}
\end{array}\right)\right)
\]
Here we have to transpose the column vector of "cats" into  a row vector (in other words  transposing it) to be able to matrix-multiply it with the matrix of ``bites". The resulting vector contains properties of men which overlap with properties of dogs that cats bite. 

%Given these,  the  categorical morphisms  for the meaning of subjective/objective clauses instantiate to the following linear algebraic formulae:
%\begin{eqnarray*}
%&&\ov{subject}\quad  \odot \quad ( \overline{verb} \times \ov{object})\\
%&&(\ov{subject}^T \times \overline{verb}) \quad \odot \quad \ov{object} 
%\end{eqnarray*}
%These formula  compute vectors for  subject/object, by modifying their context vectors  with verb phrases "verb-object" and "verb-subject", respectively. 
%The linear algebraic formula for the possessive  clauses are as follows:
%\begin{eqnarray*}
%&&\ov{possessor} \odot \overline{\text{'s}}\left(\ov{subject}  \odot  \left( \overline{verb} \times \ov{object}\right)\right)\\
%&&\ov{possessor} \odot \overline{\text{'s}}\left(\ov{object}  \odot  \left(\ov{subject}^T\times  \overline{verb}\right)\right)
%\end{eqnarray*}
%Here, first a meaning for  subject/object is computed (in the same way it is done before), then the context vector of the possessor is modified by the  owners of the subject/object. For instance, the meaning vector of  "woman whose husband died",  is computed via the formula $\ov{\text{woman}} \odot \overline{\text{'s}}(\overline{\text{die}} \times \ov{\text{husband}})$, this computes the owners of the husbands who have died and then modifies the vector of the possessor "woman" with it. 

\section{A Toy  Experiment}
\label{exp}

For demonstration purposes and  in order to get  an idea on how the data from a large scale corpus responds to the abstract computational methods developed here, we implement some of our  constructions on a corpus and  do a small-scale experiment with a toy natural language processing task. The corpus we use is the British National Corpus, from which we create  a vector space whose basis vectors are its   10,000 most occurring lemmas.  Our context is a window of 5 words. For each noun we built a vector, whose coordinates are computed  using  the  ratio of the probability of the  occurrence of the word in the context of  the basis  word to the probability of the  occurrence of the word overall. These parameters are chosen based on the success of our  previous experiments \cite{GrefenSadr1,GrefenSadr2,Kartetal1,Kartetal2,Kartetal3}. 

Our task consists of   building vectors for  nouns, verbs, and relative pronouns.  For  nouns, we used their context vectors built as described above. For  verbs, we used the model of \cite{GrefenSadr1}, based on the context vectors of their subjects and objects built as above.  For relative pronouns, we apply two methods, described further on below.

The task we experiment with is an imaginary term/description classification task. It is based on the fact that relative clauses are often used to describe or provide extra information  about words. For instance,     the word  `football' may by described by the clause `game that boys like' and the word  `doll'  by the clause  `toy that girls prefer'.  For our  experiment,  we chose a set of words and manually described each of them by an appropriate relative clause. This data set is presented in the following table:

\begin{center}
\begin{tabular}{c|c|c}
&Term  & Description\\
\hline
	1& emperor  & person who rule empire\\
Ê Ê Ê Ê 2& queen  & woman who reign country\\
Ê Ê Ê Ê 3& mammal  & animal which give birth\\
Ê Ê Ê Ê 4& plug  & plastic which stop water\\
Ê Ê Ê Ê 5& carnivore  & animal who eat meat\\
Ê Ê Ê Ê 6& vegetarian  & person who prefer vegetable\\
Ê Ê Ê Ê 7& doll  & toy that girl prefer\\
Ê Ê Ê Ê 8& football  & game that boy like\\
Ê Ê Ê Ê 9& skirt  & garment that woman wear\\
Ê Ê Ê Ê 10& widow  & woman whose husband die\\
Ê Ê Ê Ê 11& orphan & child whose parent die\\
Ê Ê Ê Ê 12& teacher  & person whose job educate children\\
Ê Ê Ê Ê 13& comedian  & artist whose joke entertain people\\
Ê Ê Ê Ê 14& priest  & clergy whose sermon people follow\\
Ê Ê Ê Ê 15& commander  & military whose order marine obey\\
Ê Ê Ê Ê 16& clown  & entertainer whose trick people enjoy
\end{tabular}
\end{center}

A preliminary goal would be to   check how close the vector of the description is to the vector of its term.  
For instance, we obtained that the cosine of the angle between the vector of   `football' and the vector of its description was 0.61, and the cosine between the vector of `doll' and its description was 0.50. As for the possessive cases, when we took $\overline{\text{'s}}$ to be the identity the cosine between `priest' and its description was  0.51 and the cosine between `commander' and its description was 0.40. These cosines decreased  to 0.51 and 0.21 for the case when $\overline{\text{'s}}$ was computed by summing the possessors of the nouns `sermon` and `order'; this may be  due to the rarity of the occurrence of the corresponding  phrases, i.e. `noun's sermon' and `noun's order' in the corpus.   %The average of the cosines of words and their descriptions was ???. 

These cosines on their own may seem low and customary in the distributional models  is not to consider them in isolation, but in relation to all the other cosines obtained in the experiment. So we follow previous work \cite{Kartetal1,Kartetal2}  and set the goal of our task to be classification; that is we compute for what percentage of the words,   their  vectors are closest to the vectors of their descriptions  (a measure referred to by Mean Reciprocal Rank or MRR) and for what percentage of the descriptions, their vector are the closest to the vectors of their  terms (a measure referred to by Accuracy).  

For the MRR,  we compared the vector of each term to the vectors of  all the descriptions. From our 16 terms, the cosines of 12 of them were the closest to their own description; this was  for the possessive model where $\overline{\text{'s}}$ was taken to be identity. The terms whose descriptions were not the closest to them were `plug, carnivore, vegetarian', and `clown'. For the second possessive model, this number decreased to 9, with the inclusion of `orphan, comedian', and `teacher' in the set of bad terms. However, when the datasets for the possessive and non-possessive clauses were considered separately, that is, terms with possessive-clause descriptions were only compared with each other and terms with non-possessive descriptions were also only compared   with each other, this number increased to 6 out of 9 for the non-possessive cases and 6 out of 7 for the possessive cases. In the latter category, the only word which was not the closest to its own description was `clown'. In the former category, the words whose descriptions were not the closest to their own were `plug, carnivore', and `vegetarian'. In all of these cases, however, the correct description was the second closest to the word. Below are  some of our term/description cosines (in the full dataset) for four of the good terms and two of the bad terms; this was  in the first possessive model (the numbers for the second possessive model are  slightly lower). 

\medskip
\begin{center}\begin{tabular}{|ll||c|}
\hline
Term & \quad Description & Cosine \\
\hline
football&game that boys like &0.62\\
   &woman who reigns country& 0.24\\
   &toy that girls prefer& 0.18\\
   &woman whose husband died& 0.18\\
\hline
priest&clergy whose sermon people follow& 0.53\\
   &woman who reigns country& 0.45\\
   &woman whose husband died& 0.37\\
   &person who rules empire& 0.35\\ 
\hline
plug&toy that girls prefer& 0.24\\
   &plastic which stops water& 0.22\\
   &woman who reigns country& 0.17\\
   &game that boys like& 0.17\\
\hline
clown&woman who reigns country& 0.28\\
   &toy that girls prefer& 0.24\\
   &woman whose husband died& 0.24\\
   &game that boys like& 0.24\\
\hline
   \end{tabular}
   \end{center}

\bigskip
To have a comparison baseline, we also experimented with a multiplicative and an additive model. In these models,  we built   vectors   for the relative clauses by multiplying/summing the vectors of the words in them  in two ways:  (1) treating  the relative pronoun as noise and   not considering it in the computation, (2) treating the relative pronoun as any other word and considering its  context vector  in the computation. As expected, the results turned out not to  be  significantly different (as the vectors of  relative pronouns were dense  and computing  by them was similar to  computing with the  vector consisting of all 1's).   For instance, the cosine for `football' was 0.39 when the pronoun was not considered and 0.37 when it was considered; for `clown', these numbers were 0.11 and 0.10.   These  models did bad on  some of  the good results of the Frobenius model, for example in the multiplicative model, the closest description to the term `mammal' was `animal that eats meat'. At the same time,  they  performed better on some of the bad terms of the Frobenius model, for example the description of `plug' in the multiplicative model became  the closest  to it.  The  Mean Reciprocal Rank  and Accuracy of the cosines between all the clauses (possessive and non-possessive) are presented in the following table. Overall, the Frobenius model with $\overline{\text{'s}} = Id$ did the best.  All the models did better than the baseline, which was the vector of the head noun of the description, e.g. `woman' for `queen', `animal' for `mammal', `plastic' for `plug' and so on.

\bigskip
\hspace{-0.8cm}
\begin{tabular}{|c||c|c|c|c|c|c|c|}
\hline
&{\bf Base}& {\bf Frob} & {\bf Frob} & {\bf Mult} & {\bf Mult} & {\bf Add}\\
&& $\overline{\text{'s}} = Id$&   $\overline{\text{'s}} = \sum_i (\ov{\text{noun}_1}) _i$ &  wo Rel. Pr. &  w Rel. Pr.  & w/wo Rel. Pr. \\
\hline
\hline
&&&&&&\\
{\bf MRR} & 0.70 & 0.82 & 0.71 & 0.78 & 0.76 & 0.75 \\
&&&&&&\\
\hline
{\bf Acc}& 0.56& 0.75 & 0.56 & 0.62 & 0.62 & 0.62 \\
\hline
\end{tabular}

%Categorical PLEX | 0.75 |   0.67  | 0.86
%Categorical PREL | 0.56 |   0.67  | 0.86
%Mult--No pron.   | 0.62 |   0.56  | 0.71
%Mult--Pron.      | 0.62 |   0.67  | 0.57
%Add--No pron.    | 0.62 |   0.67  | 0.71
%Add--Pron.       | 0.62 |   0.67  | 0.71

\bigskip
\noindent
The above numbers are not real representatives of the performance of the models; as the dataset is small and hand-made. Extending this task to a  real one on a large automatically built  dataset and doing more involved statistical analysis on the results requires a different venue; it constitutes work-in-progress.

\section{Conclusion and Future Directions}
In this paper, we reviewed the constructions of compact closed categories and Frobenius algebras and their diagrammatic calculi in an informal way. We also reviewed how compact closed categories can be applied to reason about the grammatical structure of  language and the distributional meanings of the words and sentences of    language. We then used the constructions of a Frobenius algebra over a vector space to extend the existing model to  one that  can also reason about possessive relative clauses.  The  Frobenius algebraic  structure of the possessive relative pronouns show how the information of the head of the clause  flows through the relative pronoun to the rest of the clause and how it interacts with the other words to produce a meaning for the whole clause. We  instantiated these abstract constructions  in a truth-theoretic model and also in a concrete vector space model  built from a real corpus. For the former, we showed how this semantics coincides with the predicate semantics of these clauses and for the latter, we have presented a toy experiment and discussed a possible application to a natural language processing task. In a prequel to this paper, we used similar methods to reason about the subject and object relative pronouns. 

For future work we aim to extend the toy example to a full blown example with a data set built from a corpus and investigate the role of our constructions. In this paper, our sentence space was the real line and we loosely interpreted the intermediate values of the unit interval as degrees or weights of truth. These values are reminiscent of the truth values in fuzzy logics. Developing a formal connections with fuzzy logic constitutes a possible future direction.

\section{Acknowledgements}
We would like to thank Dimitri Kartsaklis  for his invaluable help in implementation and statistical analysis of  the toy experiment. We would also like to thank  Laura Rimell for her very helpful comments on an earlier draft of this paper. Stephen Clark was supported by ERC Starting Grant DisCoTex(30692). Bob Coecke and Stephen Clark are supported by EPSRC Grant EP/I037512/1. 
Mehrnoosh Sadrzadeh is supported by an EPSRC CAF grant EP/J002607/1.


\begin{thebibliography}{}

\bibitem{AbrCoe}
 S.~Abramsky and B.~Coecke.
 \newblock 2004.
 \newblock \emph{A categorical semantics of quantum protocols}.
 \newblock Proceedings of the 19th Annual IEEE Symposium on Logic in Computer Science, 415--425.
 \newblock          {a}rXiv:quant-ph/0402130.
 
\bibitem{Baroni2}
M.~Baroni, R.~Bernardi and  R.~Zamparelli. 
\newblock {2013}.
\newblock \emph{Frege in Space: A Program for. Compositional Distributional. Semantics}.
\newblock {Linguistic Issues in Language Technologies (LiLT), A.~Zaenen, B.~Webber and M.~Palmer (eds.)}.
\newblock CSLI Publications.
\newblock to appear.




\bibitem{Baroni}
 M.~Baroni and R.~Zamparelli.
 \newblock 2010.
 \newblock \emph{Nouns are vectors, adjectives are matrices}.
 \newblock Proceedings of {Conference on Empirical Methods in Natural Language Processing (EMNLP)}. %\msuck


 \bibitem{CoeckeVic}
 B.~Coecke, D.~Pavlovic and J.~Vicary.
 \newblock 2008.
 \newblock \emph{A new description of orthogonal bases}.
 \newblock Mathematical Structures in Computer Science {\bf 1}.


\bibitem{Coeckeetal2}
B.~Coecke, E.~Grefenstette, M.~Sadrzadeh.
\newblock 2013.
\newblock \emph{Lambek vs Lambek: Functorial Vector Space Semantics and String Diagrams for Lambek Calculus}.
\newblock Annals of Pure and Applied Logic, to appear.

\bibitem{Coeckeetal}  
B.~Coecke, M.~Sadrzadeh and S.~Clark.
\newblock 2010.
\newblock  \emph{Mathematical Foundations for Distributed  Compositional Model of Meaning}.
\newblock Lambek Festschrift.
\newblock   {Linguistic Analysis} {\bf 36}, 345--384.
\newblock  J. van Benthem, M. Moortgat and W. Buszkowski (eds.). %\msuck



 \bibitem{Curran}
  J.~Curran.
  \newblock 2004.
  \newblock \emph{From Distributional to Semantic Similarity}.
  \newblock PhD Thesis, University of Edinburgh.% \msuck


%\bibitem{Montague}
%R.~D.~Dowty, R.~E.~Wall, and S.~Peters.
%\newblock 1981.
%\newblock  \emph{Introduction to Montague Semantics}.
%\newblock D. Reidel Publishing Company, Dodrecht.

 \bibitem{Firth}
  J.~R.~Firth.
  \newblock 1957.
  \newblock  \emph{A synopsis of linguistic theory 1930-1955}.
  \newblock {Studies in Linguistic Analysis}. %\msuck


 \bibitem{Fitz}
E.~Fitzgerald.
  \newblock 1859.
  \newblock  \emph{The Rubaiyat of Omar Khayyam}.
\newblock Illustrated by A. Nath Tagore.
  \newblock Published by L.~B.~Hill, London.



\bibitem{GrefenSadr1}
 E.~Grefenstette and M.~Sadrzadeh.
 \newblock 2011.
 \newblock \emph{Experimental Support for a Categorical Compositional Distributional Model of Meaning}.
 \newblock Proceedings of {Conference on Empirical Methods in Natural Language Processing (EMNLP)}. 
 

\bibitem{GrefenSadr2}
 E.~Grefenstette and M.~Sadrzadeh.
 \newblock 2011.
 \newblock \emph{Experimenting with Transitive Verbs in a DisCoCat}.
 \newblock Proceedings of {Workshop on Geometrical models of natural language semantics (GEMS)}. 

\bibitem{Joyal}
 A. Joyal and R. Street.
 \newblock {1991}.
 \newblock \emph{The Geometry of Tensor Calculus I}.
 \newblock {Advances in Mathematics, Vol. 88}. 
 
 
\bibitem{Kartetal3}
D.~Kartsaklis and M.~Sadrzadeh.
\newblock 2013.
\newblock \emph{Prior Disambiguation of Word Tensors for Constructing Sentence Vectors}.
\newblock Conference on Empirical Methods in Natural Language Processing (EMNLP).


\bibitem{Kartetal2}
D.~Kartsaklis, M.~Sadrzadeh, and S.~Pulman.
\newblock {2012}.
\newblock \emph{A Unified Sentence Space for Categorical Distributional-Compositional Semantics: Theory and Experiments}.
\newblock {The 24th International International Conference on Computational Linguistics (CoLing)}.

 
\bibitem{Kartetal1}
 D.~Kartsaklis, M.~Sadrzadeh, S.~Pulman and B.~Coecke.
 \newblock  {Compact Closed Categories and Frobenius Algebras for Natural Language Meaning}.
 \newblock  \emph{Logic \& Algebraic Structures in Quantum Computing \& Information}.
 \newblock  {J.~Chubb, A.~Eskandarian, and V.~Harizanov (eds.)},
 \newblock {Association for Symbolic Logic Lecture Notes in Logic,  Cambridge University Press,  to appear}.
 
 
  
\bibitem{KellyLaplaza}
  {G.~M.~Kelly and M.~L.~Laplaza}.
  \newblock 1980.
  \newblock \emph{Coherence for compact closed categories}.
  \newblock  {Journal of Pure and Applied Algebra} {\bf 19}, {193--213}.

\bibitem {Lambek0}
J. Lambek.
\newblock 1958,
\newblock \emph{The mathematics of sentence structure},
\newblock {American Mathematics Monthly 65}.


\bibitem{Lambek1}
{J. Lambek}.
\newblock 1999,
\newblock  \emph{Type grammar revisited},
\newblock {Logical Aspects of Computational Linguistics 1582},
\newblock  {A. Lecomte et al. (eds.),  Springer LNAI}.

\bibitem{Lambek}
 J.~Lambek.
 \newblock 2008.
 \newblock  \emph{From Word to Sentence}.
 \newblock  Polimetrica, Milan. 
 
\bibitem{CasadioLambek}
 {J.~Lambek and C.~Casadio}.
 \newblock 2006.
 \newblock \emph{Computational algebraic approaches to natural language}.
 \newblock {Polimetrica}, Milan.
 
\bibitem{Lapata}
 J.~Mitchell and M.~Lapata.
 \newblock 2008.
 \newblock \emph{Vector-based models of semantic composition}.
 \newblock Proceedings of the {46th Annual Meeting of the Association for Computational Linguistics}, 236--244. %



\bibitem{PrellerLambek}
  {A.~Preller and J.~Lambek}.
  \newblock 2007.
  \newblock \emph{Free compact 2-categories}.
  \newblock {Mathematical Structures in Computer Science} {\bf 17}, 309--340.

\bibitem{Montague}
 R.~Montague.
 \newblock 1974.
 \newblock \emph{English as a formal language}.
 \newblock {Formal Philosophy}, 189--223.% \msuck

\bibitem{SadrClarkCoecke}
M.~Sadrzadeh, S.~Clark, B.~Coecke.
\newblock 2013.
\newblock \emph{The Frobenius anatomy of word meanings I:
subject and object relative pronouns}. arXive 1404.5278

\bibitem{Sag}
 {I.~A. Sag}.
 \newblock 1997.
\newblock \emph{English Relative Clause Constructions}.
\newblock {JOURNAL OF LINGUISTICS, 431--484}.


 \bibitem{Schutze}
   H.~Sch\"utze.
   \newblock 1998.
   \newblock \emph{Automatic Word Sense Discrimination}.
   \newblock  {Computational Linguistics} {\bf 24}, 97--123. 
   
\bibitem{Selinger}
 P. Selinger,
  \newblock 2010,
  \newblock  \emph{A survey of graphical languages for monoidal categories},
  \newblock In: \emph{New structures for physics}, B. Coecke (ed.), {275--337},
  \newblock {Lecture Notes in Physics}, {Springer}.


 \bibitem{Steedman}
   M.~Steedman.
   \newblock 1996.
   \newblock \emph{Surface Structure and Interpretation}.
   \newblock  Linguistic Inquiry, Monograph Thirty.
   \newblock MIT Press.

\bibitem{Zadeh}
L.~.A.~Zadeh.
\newblock 1965. 
\newblock \emph{Fuzzy sets}. 
\newblock Information and Control {\bf 8},  338--353.

\end{thebibliography}
\end{document}